\newtheorem{myDef}{Definition}
\newtheorem{myTheo}{Theorem}
\newtheorem{myassum}{Assumption}
\newtheorem{lemma}{Lemma}
\newcommand{\BLACK}[1]{{\color{black} #1}}
\definecolor{codegreen}{rgb}{0,0.6,0}
\definecolor{codegray}{rgb}{0.5,0.5,0.5}
\definecolor{codepurple}{rgb}{0.58,0,0.82}
\definecolor{backcolour}{rgb}{0.95,0.95,0.92}
\definecolor{yushun}{rgb}{0,0.6,0}%
\definecolor{lyr}{rgb}{0.52,0,0.52}
\definecolor{lzn}{rgb}{0.58,0,0.82}
\lstdefinestyle{mystyle}{
    backgroundcolor=\color{backcolour},   
    commentstyle=\color{codegreen},
    keywordstyle=\color{magenta},
    numberstyle=\tiny\color{codegray},
    stringstyle=\color{codepurple},
    basicstyle=\ttfamily\footnotesize,
    breakatwhitespace=false,         
    breaklines=true,                 
    captionpos=b,                    
    keepspaces=true,                 
    numbers=left,                    
    numbersep=5pt,                  
    showspaces=false,                
    showstringspaces=false,
    showtabs=false,                  
    tabsize=2
}
\title{When Expressivity Meets Trainability: Fewer than $n$ Neurons Can Work}
\author{%
  Jiawei Zhang$^*$ \\
 Shenzhen Research Institute of Big Data \\
 The Chinese University of Hong Kong,\\
Shenzhen, China\\
\texttt{jiaweizhang2@link.cuhk.edu.cn} \\
  \And
   Yushun Zhang\thanks{Equal contribution. These authors are listed in alphabetical order.} \\
 Shenzhen Research Institute of Big Data \\
 The Chinese University of Hong Kong,\\
Shenzhen, China\\
 \texttt{yushunzhang@link.cuhk.edu.cn} \\
   \AND
   Mingyi Hong \\
 University of Minnesota - Twin Citie\\
 MN, USA \\
 \texttt{mhong@umn.edu} \\
   \AND
   Ruoyu Sun \thanks{Corresponding author: Ruoyu Sun.}\\
University of Illinois at Urbana-Champaign \\
IL, USA \\
 \texttt{ruoyus@illinois.edu} \\
   \And
   Zhi-Quan Luo \\
 Shenzhen Research Institute of Big Data \\
 The Chinese University of Hong Kong,\\
Shenzhen, China\\
 \texttt{luozq@cuhk.edu.cn} \\
}
\begin{document}

\maketitle

\begin{abstract}
Modern neural networks are often quite wide, causing large memory and computation costs. It is thus of great interest to train a narrower network. However, training narrow neural nets remains a challenging task. 
We ask two theoretical questions: Can narrow networks have as strong expressivity as wide ones? If so, does the loss function exhibit a  benign optimization landscape? In this work,  we provide partially affirmative answers to both questions for 1-hidden-layer networks with fewer than $n$ (sample size) neurons when the activation is smooth.
  First, we prove that as long as the width $m \geq 2n/d$ (where $d$ is the input dimension), its expressivity is strong, i.e., there exists at least one global minimizer with zero training loss.
Second, 
we identify a nice local region with no local-min or
saddle points.
 Nevertheless, it is not clear whether gradient
 descent can stay in this nice region.
 Third, we consider a constrained optimization formulation where the feasible region is the nice local region, and prove that every KKT point is a nearly global minimizer. 
 It is expected that projected gradient methods
 converge to KKT points under mild technical conditions,
 but we leave the rigorous convergence analysis to future work.
 Thorough numerical results show that projected gradient methods
 on this constrained formulation significantly
 outperform SGD for training narrow neural nets. 
 
\end{abstract}

\vspace{-2mm}
\section{Introduction} \label{section:intro}
\vspace{-2mm}
Modern neural networks are huge (e.g. \cite{zagoruyko2016wide, brown2020language}). 
Reducing the size of neural nets 
is appealing for many reasons:
first, small networks are more suitable for embedded systems and portable devices;
second, using smaller networks can reduce power
consumption, contributing to ``green computing''. 
There are many ways to reduce network size,
such as quantization, sparcification and reducing
the width (e.g. \cite{zhou2017incremental,han2015deep}).
In this work, we focus on reducing the width
(training narrow nets).

Reducing the network width often leads to significantly
worse performance \footnote{This can be verified on our
empirical studies in Section \ref{section:experiment}.
Another evidence is that structure pruning (reducing the number of channels in convolutional neural nets (CNN))
is known to achieve worse performance than unstructured pruning; this is an undesirable situation since many practitioners prefer structure pruning (due to hardware reasons). 
}. 
What is the possible cause?
From the theoretical perspective, there are three possible causes: worse generalization power,
worse trainability (how effective a network can be optimized), and weaker expressivity 
(how complex the function a network can represent; see Definition  \ref{def:expressivity}). 
Our simulation shows that the training error
 deteriorates significantly as the width shrinks, 
which implies that the trainability and/or expressivity
are important causes of the worse performance (see Section \ref{section:synthetic} for more evidence 
).
We do not discuss generalization power for now,
 and leave it to future work. 
 
 \BLACK{So how is the training error related to  expressivity and trainability?  The training error is the sum of two parts (see, e.g., \cite{sun2020optimization}): the expressive error (which is the best a given network can do; also the global minimal error) and the optimization error (which is the gap between training error and the global minimal error; occurs because the algorithm may not find global-min). The two errors are of different nature, and thus need to be discussed separately. }

It is understandable that 
narrower networks might have weaker expressive power.
What about optimization?
There is also evidence that smaller
width causes optimization difficulty. 
A number of recent works show that 
increasing the width of neural networks helps create a benign empirical loss landscape (\cite{li2018benefit,safran2016quality,freeman2016topology}), 
 while narrow networks (width $m$ < sample size $n$) suffer from bad landscape 
 (\cite{auer1996exponentially,swirszcz2016local,zhou2017critical,safran2017depth,yun2018small}).
 Therefore, if we want to improve the performance
 of narrow networks, it is likely that
 both expressiveness and trainability need to be improved.

The above discussion leads to the following two questions: 
\begin{center}
 {\bf (Q1)} {\it  Can a narrow network have as strong {\bf expressivity} as a wide one? 
 
 {\bf (Q2)}  If so, can a local search method
 find a (near) globally optimal solution? 
 }
\end{center}

The key challenges in answering these questions are listed below:
\BLACK{
\vspace{-1mm}
\begin{itemize}
  \item It is not clear whether a narrow network has strong expressivity or not.
 Many existing works focus on verifying the relationship between  zero-training-error solutions and stationary points, but they neglect the (non)existence of such solutions (e.g.   \cite{xie2017diverse}, \cite{soudry2016no}). For narrow networks, the (non)-existence of zero-training-error-solution is not clear.
 
  \item Even if zero-training-error solutions do exist,  it is still not clear how to reach those solutions because the landscape of a narrow neural network can be highly non-convex. 
  
  \item Even assuming that we can identify a region that contains zero-training-error solutions and has a good landscape, it is potentially difficult to keep the iterates inside such a good region. One may think of imposing an explicit constraint, but this approach might introduce bad local minimizers on the boundary \cite{bertsekas1997nonlinear}.
\end{itemize}
}
In this work, we (partially) answer {\bf (Q1)} and {\bf (Q2)} for a  1-hidden-layer nets with fewer than $n$ neurons. Our main contributions are as follows: 
\begin{itemize}
  \item 
  \textbf{Expressiveness and nice local landscape.} 
  We prove that, as long as the width $m$ is larger than $ 2n/d$ (where $n$ is the sample size and $d$ is the input dimension),
  then the expressivity of 
  the 1-hidden-layer net is strong, i.e., w.p.1. there {\it exists} at least one global-min with zero empirical loss.
  In addition,
  such a solution is surrounded by a good local landscape with no local-min or saddles. 
  Note that our results do not exclude the possibility that
  there are sub-optimal local minimizers on the \textit{global} landscape. 
  
  \item \textbf{Every KKT point is an approximated global minimizer.}
  For the original unconstrained optimization problem,
  the nice {\it local} landscape does not guarantee the {\it global} statement of  ``every stationary point is a global minimizer''.
 We propose a constrained optimization problem that restricts the hidden weights to be close to the identified nice region. %
 We show that every Karush–Kuhn–Tucker (KKT) point 
 is an approximated global minimizer
 of the unconstrained training problem 
 \footnote{ 
 This result describes the loss landscape
 of the constrained optimization problem, not directly
 related to algorithm convergence. 
 Nevertheless, it is expected that first-order methods
 converge to KKT points and thus approximate global minimizers. A rigorous convergence analysis may require verifying extra technical conditions, which is left to future work. 
 }.
  
 \item In real-data experiments, our proposed training regime can significantly outperforms SGD for training narrow networks.  
 We also perform ablation studies to show that the new elements proposed in our method
 are useful. 
\end{itemize}

\vspace{-2mm}
\section{Background and Related Works}
\vspace{-2mm}
\label{section:relatedwork}

\BLACK{The expressivity of neural networks has been a popular topic
in machine learning for decades. There are two lines of works: One focuses on the {\it infinite-sample} expressivity, showing what functions of the entire domain can and cannot be represented by certain classes of neural networks (e.g. \cite{barron1993universal,  lu2017expressive}). Another line of works characterize the {\it finite-sample} expressivity, i.e. how many parameters are required to memorize finite samples (e.g. \cite{baum1988capabilities,huang1998upper,delalleau2011shallow,huang2003learning,hanin2019complexity,zhang2021understanding, livni2014computational}). The term ``expressivity'' in this work means the finite-sample expressivity; see Definition \ref{def:expressivity} in Section \ref{section:setting}.
A major research question regarding expressivity in the area
is to show  {\it deep} neural networks have much stronger
expressivity than the {\it shallow} ones
(e.g. \cite{bianchini2014complexity, montufar2014number,eldan2016power,telgarsky2016benefits,lin2017does, rolnick2017power, park2020provable,yun2018small,vershynin2020memory}).
However, all these works neglect the trainability.}

In the finite-sample case, wide networks (width $\text{poly}(n)$) have both strong representation power (i.e.
the globally minimal training error is zero)
and strong trainability (e.g. for wide enough nets,
Gradient Descent (GD) converges to global minima \cite{du2019gradient,jacot2018neural,allen2019convergence,zou2018stochastic}).
While these wide networks are often called ``over-parameterized'', we notice that the number of parameters of a width-$n$ network is actually at least $n d $, which is much larger than  $ n $. If comparing $n$ with the number of parameters (instead of neurons),
the transition from under-parameterization and over-parameterization for a one-hidden-layer fully-connected net (FCN) does not occur at width-$n$,
but at width-$n/d$.
In this work, we will analyze networks with width in the range
$ [n/d, n ) $, which we call ``narrow networks'' (though rigorously speaking, we shall call them ``narrow but still overparameterized networks''). 

\BLACK{There are a few works on the trainability of narrow nets (one-hidden-layer networks with $m\geq n/d$ neurons).}
Soudry and Carmon \cite{soudry2016no},  Xie et al. \cite{xie2017diverse}  show that for such networks,
stationary points with full-rank NTK (neural tangent kernel)
are zero-loss global minima. 
However, it is not clear whether the NTK stays full rank during the training trajectory. 
In addition, these two works do not discuss whether a
zero-loss global minimizer exists.

There are two interesting related works \cite{bubeck2020network,daniely2019neural} pointed out by the reviewers. 
Bubeck et al. \cite{bubeck2020network} study how many neurons are required for memorizing a finite dataset by 1-hidden-layer networks. They prove the following results.
Their first result is
an ``existence'' result:
 there exists a network with width $m \geq \frac{4 n}{d}$ which can memorize $n$ input-label pairs (their Proposition 4). However, in this setting they did not provide an algorithm to find the zero-loss solution. 
Their second result is related to algorithms: they proposed a training algorithm that achieves accuracy up to error $\epsilon$ for a neural net
with width 
$ m 
\geq  O \left(\frac{n}{d} \frac{\log (1 / \epsilon)}{\epsilon} 
\right)$. 
This result requires width dependent on the precision $\epsilon$; for instance,
when the desired accuracy $\epsilon = 1/n$, the required width 
is at least $ 
O \left(\frac{n^2}{d} \right). $
In contrast, in our work, the required number of neurons is just $ 2n/d$, which is independent of $\epsilon$.

Daniely \cite{daniely2019neural}  also studies the expressivity and trainability of 1-hidden-layer networks. To memorize $n(1-\epsilon)$ random data points via SGD, their required width is $\tilde{O} (n/d)$. 
They assumed $n=d^c$ where $c>0$ is a {\it fixed} constant (appeared in Sec. 3.3 of \cite{daniely2019neural}),
in which case the hidden factor in $\tilde{O}$ is
 $O(\log [  d (\log d) ]^c ) $. In other words, if $n, d \rightarrow \infty $ with the scaling $ n = d^c $ for a fixed constant $c$, then their bound is roughly $O(n/d)$ 
  up to a log-factor. 
Nevertheless, for more general
scaling of $n , d$, the exponent $c = 
( \log n )/ ( \log d ) $ may not be a constant and
the hidden factor may not be a log-factor (e.g. for fixed $d$ and $n \rightarrow \infty$). 
We tracked their proof and find that the width bound for general $n, d $ is $ O\left((n/d) \left(\log (d \log n) \right)^{\log n / \log d} \right)$,
 which can be larger than $O\left(n^2/d\right)$
(see detailed computation and explanation in Appendix \ref{appendix:relatedwork}).
 In contrast,  our required width is $2n/d$  for arbitrary $n$ and $d$. Our bound is always smaller than $n$ when $d>2$.
 
Additionally, there is a major difference between Daniely 
\cite{daniely2019neural} 
and our work: they analyze the original unconstrained problem and SGD; in contrast, we analyze a constrained problem.
This may be the reason why we can get a stronger bound on width. 
In the experiments in Section \ref{section:experiment}, we observe that SGD performs badly when the width is small (see the 1st column in  Figure \ref{synthetic:loss} (b)). Therefore, we suspect an algorithmic change is needed to train narrow nets with such width (due to the training difficulty), and we indeed propose a new method to train narrow nets.

\BLACK{Due to the space constraints, we defer more related works in Appendix \ref{appendix:morerelatedwork}.}

\vspace{-2mm}
\section{Challenges For Analyzing Narrow Nets}
\label{section:challange}
\vspace{-2mm}
In this section, we discuss why it is challenging
to achieve expressivity and trainability together
 for narrow nets. 
Consider a dataset $\left\{\left(x_{i}, y_{i}\right)\right\}_{i=1}^{n} \subset \mathbb{R}^d \times \mathbb{R}$ and a 1-hidden-layer neural network: 
\begin{equation} \label{eq1:simplenn}
  f(x ; \theta)=\sum_{j=1}^{m} v_{j} \sigma\left(w_{j}^{T} x\right),
\end{equation}
where $\sigma\left(w_{j}^{T} x\right)$ is the output of the $j$-th hidden nodes with hidden weights $w_{j}$, $\sigma(\cdot)$ is the activation function, and $v_{j}$ is the corresponding outer weight (bias terms are ignored for simplicity). 
To learn such a neural network, we search for the optimal parameter $\theta=(w, v)$ by minimizing the following empirical (training) loss:
\begin{equation}\label{unconstrained}
  \min _{\theta} \ell(\theta)=\frac{1}{2} \sum_{i=1}^{n}\left(y_{i}-f\left(x_{i} ; \theta\right)\right)^{2},
\end{equation}
The gradient of the above problem %
w.r.t. hidden weights $w=\{w_{i}\}_{i=1}^m$ is given by:
$
  \nabla_w\ell(\theta)=J(w;v)^T(f(w;v)-y) \in \mathbb{R}^{md \times 1},
$
where $J(w;v)\in \mathbb{R}^{n \times md} $ is the Jacobian matrix w.r.t $w$:

{\small
\begin{equation} \label{jacobian}
J(w;v):=\left[\begin{array}{c}\nabla_w f\left(w ; x_{1}, v\right)^{T} \\ \vdots \\  \nabla_w f\left(w ; x_{n}, v\right)^{T} \end{array}\right]  =\left[\begin{array}{ccc}v_{1} \sigma^{\prime}\left(w_{1}^{T} x_{1}\right) x_{1}^T & \cdots & v_{m} \sigma^{\prime}\left(w_{m}^{T} x_{1}\right) x_{1}^T \\ &\vdots & \\ v_{1} \sigma^{\prime}\left(w_{1}^{T} x_{n}\right) x_{n}^T & \cdots & v_{m} \sigma^{\prime}\left(w_{m}^{T} x_{n}\right) x_{n}^T\end{array}\right]\in \mathbb{R}^{n \times md}.
\end{equation}}

First order methods like  GD converge to a stationary point
$\theta^*=(w^*,v^*)$ (i.e. with zero gradient) under mild
conditions \cite{bertsekas1997nonlinear}. 
For problem \eqref{unconstrained}, 
it is easy to show that
if \BLACK{(i)} $(w^*,v^*)$ is a stationary point, \BLACK{(ii)}  $J(w^*;v^*) \in \mathbb{R}^{n\times md}$ is full row rank and 
$ n \leq  m d $, then  $(w^*,v^*)$ is a global-min \BLACK{(this claim can be proved by setting the partial gradient of \eqref{unconstrained} over $w$ to be zero)}. 
In other words, for training a network with width
$ m \geq n/d$, an important tool is to ensure the
 full-rankness of the Jacobian.

Recent works have shown that  it is possible to guarantee the full rankness of the Jacobian matrix along the training trajectories, however, the required width is above $\Omega(\text{poly}(n))$. Roughly speaking, the proof sketch is the following: 
(i) with high probability, $J(w;v)$ is non-singular locally around the random initialization (\cite{xie2017diverse}, \cite{soudry2016no}), 
(ii) increasing the width can effectively bound the parameter movement from initialization, so the “nice” 
property of non-singualr $J(w,v)$ holds throughout the training, leading to a linear convergence rate \cite{du2019gradient}. Under this general framework, a number of convergence results are developed for wide networks with width $\Omega(\text{poly}(n))$ (\cite{zou2018stochastic,allen2019convergence,zou2019improved,noy2021convergence,nguyen2021proof,lee2019wide,jacot2018neural,chizat2018lazy,oymak2020toward,oymak2019overparameterized}). This idea is also illustrated in Figure \ref{figure:idea} (a).

We notice that there is a huge gap between
the necessary condition $ m \geq n/d$
 and the common condition $ m \geq 
  \Omega(\text{poly}(n)) .$
 We suspect that it is possible to train a narrow net
 with width $ \Theta( n / d) $ to small loss.
To achieve this goal, we need to understand why existing arguments require a large width and cannot apply to a network with width $ \Theta( n / d) $.

The first reason is about trainability. 
The above arguments no longer hold 
when the width is not large enough to control the movement of hidden weights. In this case, the iterates may easily travel far away from the initial point and get stuck at some singular-Jacobian critical points with high training loss (see Figure \ref{figure:idea} (b). Also see Figure \ref{synthetic:loss} (b) \& (d) for more empirical evidence). 
In other words, GD may get stuck at sub-optimal
stationary points for narrow nets.

      \begin{figure}[htbp]
      \vspace{-2mm}
        \centering
        \subfigure[Wide networks]{
        \begin{minipage}[t]{0.3\linewidth}
        \centering
        \includegraphics[width=1in]{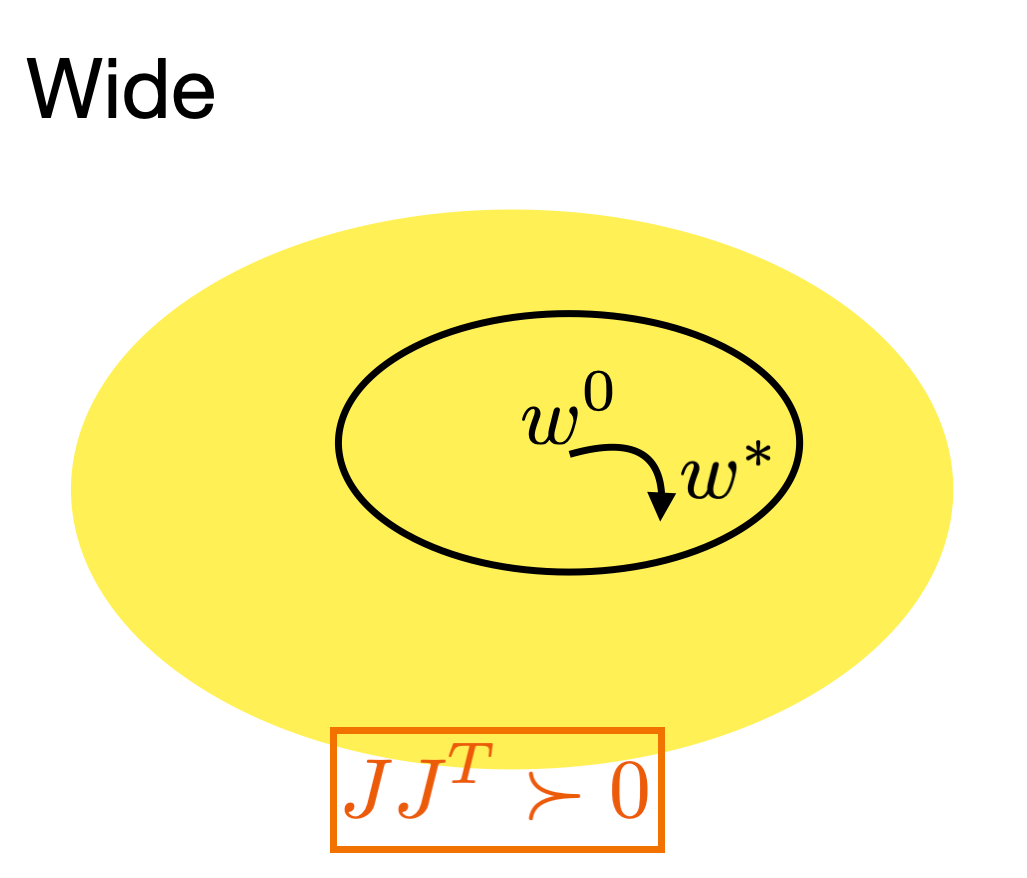}
        \end{minipage}%
        }%
        \subfigure[Narrow networks]{
          \begin{minipage}[t]{0.3\linewidth}
          \centering
          \includegraphics[width=1in]{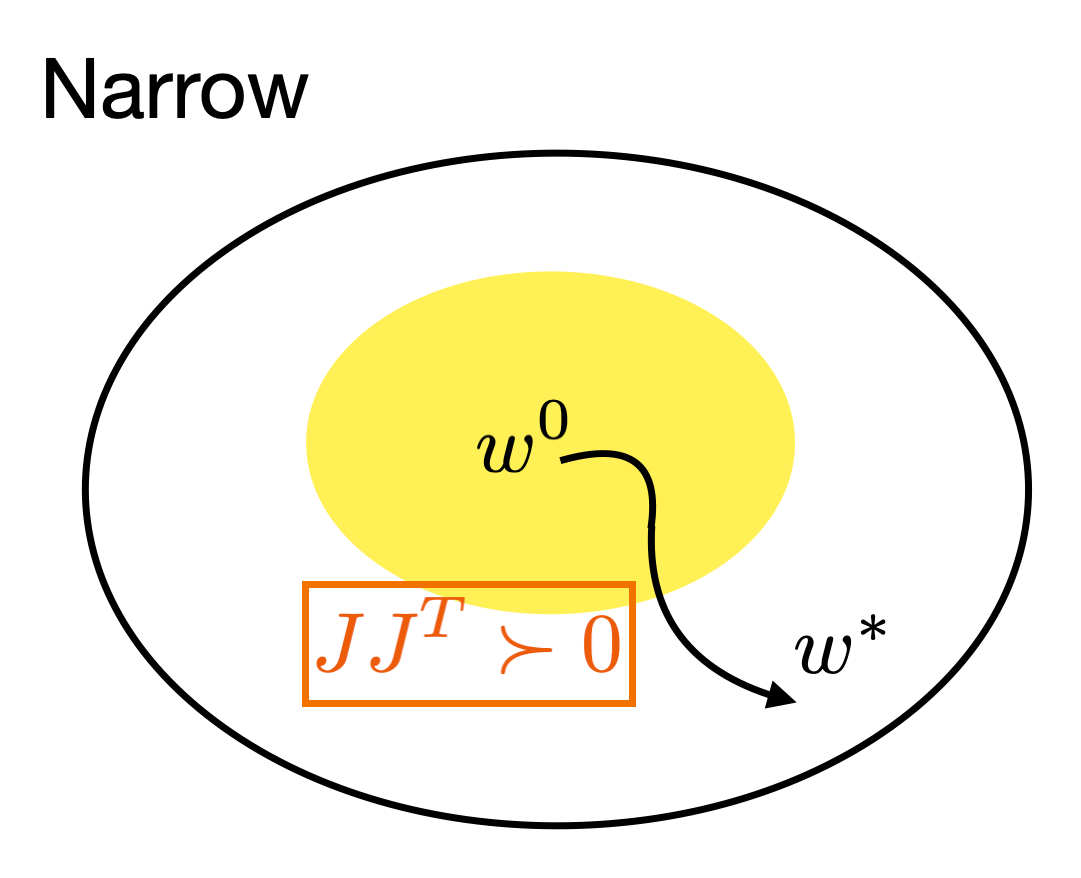}
          \end{minipage}%
          }%
        \subfigure[Narrow networks (our regime)]{
          \begin{minipage}[t]{0.3\linewidth}
          \centering
          \includegraphics[width=1in]{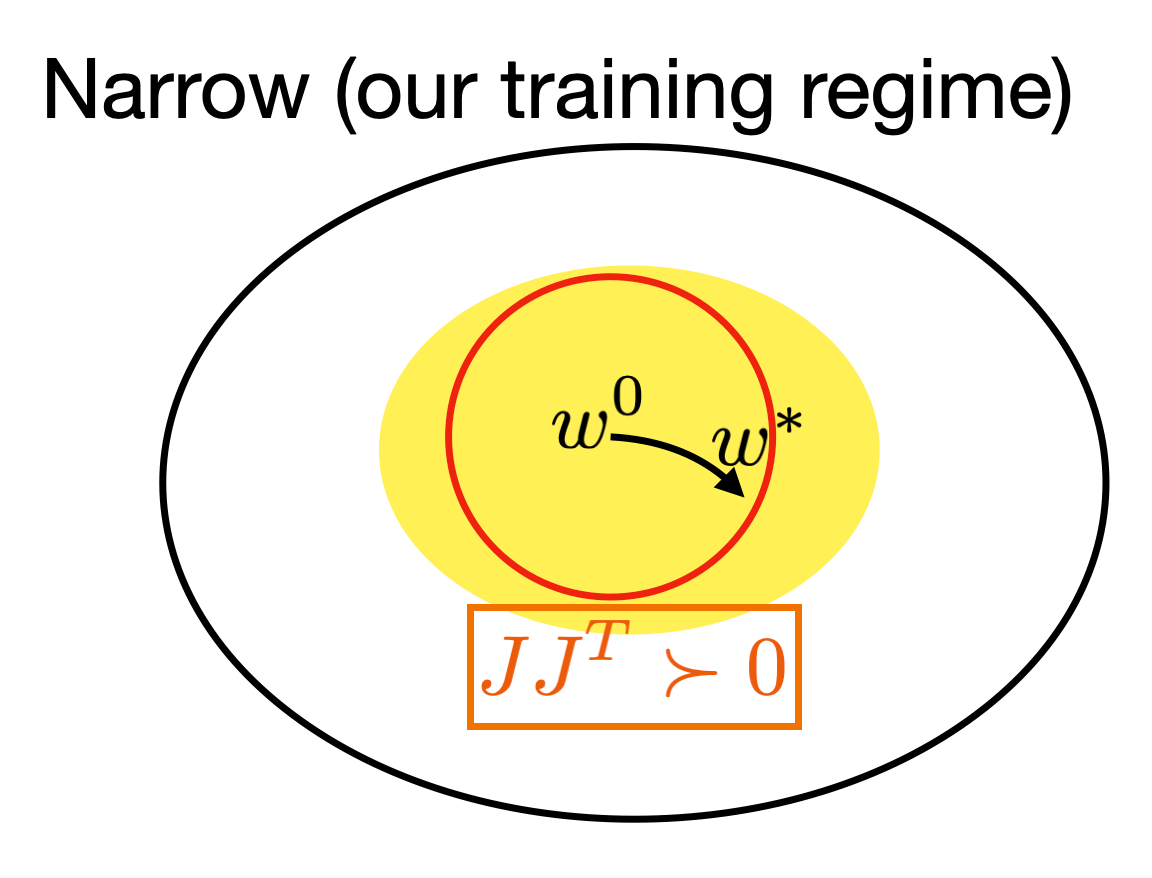}
          \end{minipage}%
          }%
        \centering
        \vspace{-2mm}
        \caption{ The parameter movement under different regimes. The shaded area indicates the region where $J(w;v)$ is non-singular, the black circle denotes the region that GD iterates will explore, and the red circle is the constraint designed in our training regime, it will be discussed in Section \ref{section:algorithm}.}
        \label{figure:idea}
        \vspace{-2mm}
      \end{figure}      

The second reason, and also an easily ignored one, 
is the expressivity (a.k.a. the representation power, see Definition \ref{def:expressivity} for a formal statement).
In above discussion, we implicitly assumed that
there exists a zero-loss global minimizer, which
is equivalent to ``there exists a network configuration such that the network can memorize the data''.
For networks with width at least $n,$ this assumption can be justified in the following way. 
The feature matrix 
{\small
\begin{align} \label{featurematrix}
    \Phi(w):=\left[\begin{array}{c} \sigma\left(w_{1}^{T} x_1\right),\dots,\sigma\left(w_{m}^{T} x_1\right)\\ \vdots \\  \sigma\left(w_{1}^{T} x_n\right),\dots,\sigma\left(w_{m}^{T} x_n\right) \end{array}\right]\in \mathbb{R}^{n \times m}
\end{align}
 }%
can span the whole space $\mathbb{R}^n$ 
when it is full rank and $ m \geq n$,
thus the network can perfectly fit any label $y\in \mathbb{R}^n$ even without training the hidden layer.
It is important to note that when the width $m$ is below the sample size $n$, full row-rankness
does not ensure that the row space of the feature
matrix is the whole space $\mathbb{R}^n$. 
In other words, it is not clear whether a global-min with zero loss exists.

In the next section, we will describe
how we obtain strong expressive power with $\Theta(n/d)$
 neurons, and how to avoid sub-optimal stationary points. 

\vspace{-2mm}
\section{Main Results}
\vspace{-2mm}
\subsection{Problem Settings and Preliminaries}
\vspace{-2mm} 
\label{section:setting}
We denote $\left\{\left(x_{i}, y_{i}\right)\right\}_{i=1}^{n} \subset \mathbb{R}^d \times \mathbb{R}$ as the  training samples, where $x_i \in \mathbb{R}^d$, $y_i \in \mathbb{R}$. 
For theoretical analysis, 
we focus on 1-hidden-layer neural networks  $f(x ; \theta)=\sum_{j=1}^{m} v_{j} \sigma\left(w_{j}^{T} x\right) \in \mathbb{R
}$, where $\sigma(\cdot)$ is the activation function, $w_{j} \in \mathbb{R}^{d}$ and $v_{j} \in \mathbb{R}$ are the parameters to be trained.  Note that we only consider the case where $f(x ; \theta) \in \mathbb{R}$ has 1-dimensional output for notation simplicity.

To learn such a neural network, we search for the optimal parameter $\theta=(w, v)$ by minimizing the empirical loss (\ref{unconstrained}),
and sometimes we also use $\ell(w;v)$ or $f(w;x,v)$ to emphasize the role of $w$. 
We use the following shorthanded notations: $x:=(x_1^T;\dots;x_n^T)\in \mathbb{R}^{n\times d}$, $y:=(y_1,\dots,y_n)^T \in \mathbb{R}^{n}$, $w:=(w_{1},\dots, w_m)_{j=1}^m \in \mathbb{R}^{d\times m}$, $v:=(v_{1}, \dots, v_m)_{j=1}^m \in \mathbb{R}^{m}$,  and $f(w;v):=(f(x_1;w,v),f(x_2;w,v),\dots, f(x_n;w,v))^T \in \mathbb{R}^{n}$. We denote the Jacobian matrix of $f(w;v)$ w.r.t $w$ as $J(w;v)$, \BLACK{ which can be seen in \eqref{jacobian}.}
We define the feature matrix \BLACK{$\Phi(w)$ as in \eqref{featurematrix}. We denote the operator $\nabla_w$ as ``taking the gradient w.r.t. $w$'',
and the same goes for $\nabla_v$.}
Throughout the paper, `w.p.1' is the abbreviation for `with probability one'; %
when we say `in the neighborhood of initialization', it means `$w$ is in the neighborhood of the initialization $w^0$'. 

\BLACK{Now, we formally define the term ``expressivity''.  As discussed in Section \ref{section:relatedwork}, we focus on the finite-sample (as opposed to infinite-sample) expressivity, which is relevant in practical training.

\begin{myDef} \label{def:expressivity}
(Expressivity) We say a  neural net function class $\mathcal{F}=\{f(x;\theta); \theta\in \Theta\}$ has strong ($n$-sample) expressivity if for any $n$ input-output pairs $D=\left\{\left(x_{i}, y_{i}\right)\right\}_{i=1}^{n} \subset \mathbb{R}^d \times \mathbb{R}$ where $x_i$'s are distinct,
there exists a $\hat{\theta} (D) \in \Theta$ such that $f(x_i;\hat{\theta}(D))=y_i$, $i=1, \cdots, n$. Or equivalently, the optimal value of empirical loss \eqref{unconstrained} equals 0 for any $D$. Sometimes we may drop the word ``strong'' for brevity.
\end{myDef}
}
\begin{wrapfigure}{r}{0.4\textwidth}
 \vspace{-3mm}
  \centering 
  \includegraphics[width=0.8in]{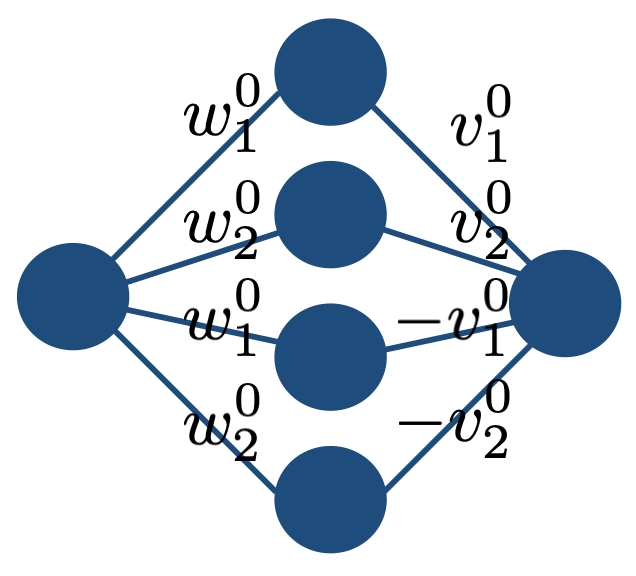}
  \vspace{-3mm}
  \caption{ A simple example of the mirrored LeCun's initialization.}\label{initialexample}
   \vspace{-3mm}
\end{wrapfigure}
Next, let us describe the {\it mirrored LeCun's} initialization in Algorithm \ref{initial}.
The idea is that through this initialization, the hidden outputs will cancel out with the outer weights, so that we get zero initial output for any input $x$; see Figure \ref{initialexample} for a simple illustration. 
Note that 
similar symmetric initialization strategies are also proposed in some recent works such as \cite{chizat2018lazy}  and \cite{daniely2019neural}. However, our purpose is different. More explanation can be seen in the final paragraph
of Section \ref{section:express}.

\begin{algorithm}
  \caption{The mirrored LeCun's initialization}
  \begin{algorithmic}[1] \label{initial}
    \STATE Initialize all the weights using LeCun's initialization: $w_{i,j}^0\sim N(0,\frac{1}{d})$,  $v_i^0\sim N(0,\frac{1}{m})$, for $i=1,\dots,m/2$, $j=1,\dots,d$.
    \STATE Set $(w^0_{\frac{m}{2}+1},\dots,w^0_{m})\leftarrow(w^0_{1},\dots,w^0_{\frac{m}{2}})$, and set $(v^0_{\frac{m}{2}+1},\dots,v^0_{m})\leftarrow(-v^0_{1},\dots,-v^0_{\frac{m}{2}})$
  \end{algorithmic}
  \end{algorithm}
Throughout the paper, we will make the following assumptions. %

  \begin{myassum}\label{assum1}
  For $f(x ; \theta) $ in (\ref{eq1:simplenn}), we assume its width $m$ is an even number, and $m\geq  \frac{2n}{d}$. 
  \end{myassum}
  \begin{myassum}\label{assum2}
    We assume the activation function $\sigma(\cdot): \mathbb{R} \rightarrow \mathbb{R}$ is analytic and L-lipschitz continuous, its zero set only contains 0: $\{z| \sigma(z)=0 \}=\{0\}$. \BLACK{In addition, there are infinitely many non-zero coefficients in the Taylor expansion of $\sigma (\cdot)$.}
  \end{myassum}
  \begin{myassum}\label{assum3}
  $ x_1,  \cdots, x_n $ are independently sampled from a continuous distribution in  $ \mathbb{R}^{d}$. 
  \end{myassum}
    
  When $d>2$, Assumption \ref{assum1} can be applied to narrow networks \footnote{When $d=1,2$, all our results still hold; nevertheless, the required width $m \geq n, 2 n$, thus it does not belong to the ``narrow'' setting we defined earlier in Section \ref{section:relatedwork} (which requires $ m \in [n/d, n ) $).} with $m<n$.  Assumption \ref{assum2} covers many commonly used activation functions such as sigmoid, softplus, and Tanh, but it does not cover ReLU since it is nonsmooth.

  \vspace{-2mm}
  \subsection{Expressivity Analysis}
  \label{section:express}
  \BLACK{In this section, we prove that narrow neural networks (which are still over-parameterized) have strong expressivity.
  Further, the zero-training-error solution is surrounded by a good landscape with no local-min or saddles, which motivates our trainability analysis in the following sections.}
  \begin{myTheo}\label{thm1}
    Suppose Assumption \ref{assum1}, \ref{assum2}, and \ref{assum3} holds. If the neural network $f(x;\theta)=\sum_{j=1}^{m} v_{j} \sigma\left(w_{j}^{T} x\right)$ is initialized at the mirrored LeCun's initialization given in Algorithm \ref{initial}, with $\theta^0=(w^0,v^0)$, then 
    there exists $\epsilon_0>0$ such that for any $\epsilon \leq \epsilon_0$, 
    there exists a $w \in B_\epsilon(w^0) =\{w \mid \|w-w^0\|_F \leq\epsilon\}
    $ and a entry-wise non-zero $v$, 
    such that with probabilty 1 of choosing $\{(x_i,y_i)\}_{i=1}^n$ and $\theta_0$, the output of $f$ will be exactly the groundtruth label:
    \begin{equation}
      f(x_i;\theta)=\sum_{j=1}^m v_j\sigma(w_j^Tx_i)=y_i, i=1,\cdots,n.
    \end{equation}
    In addition, every stationary point 
    $\theta^*=(w^*,v^*)$ (i.e., the gradient is zero) is a global-min of \eqref{unconstrained} with zero loss if it satisfies 
     $w^* \in B_\epsilon(w^0)$ and $v^*$ is entry-wise non-zero.
  \end{myTheo}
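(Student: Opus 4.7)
The entire argument hinges on a single almost-sure claim: at the mirrored initialization, the Jacobian $J(w^0;v^0)\in\mathbb{R}^{n\times md}$ has full row rank $n$. The mirroring forces the $j$-th and $(m/2+j)$-th $d$-blocks of $J(w^0;v^0)$ to be negatives of each other, so its rank equals that of the ``first-half'' matrix $\tilde J$ whose $i$-th row is $[\,v_j^0\,\sigma'(w_j^{0T}x_i)\,x_i^T\,]_{j=1}^{m/2}$. Since $v_j^0\neq 0$ a.s., a nontrivial row dependence amounts to scalars $c_1,\dots,c_n$ with $\sum_i c_i x_{i,k}\,\sigma'(w^T x_i)=0$ for every coordinate $k$ and every $w=w_j^0$. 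Treating the left-hand side as an analytic function of $w\in\mathbb{R}^d$ and reading off its Taylor coefficients, Assumption~\ref{assum2} (infinitely many nonzero Taylor coefficients of $\sigma$) combined with Assumption~\ref{assum3} (continuous distribution of $x_i$) forces each coefficient tensor $\sum_i c_i x_{i,k}\,x_i^{\otimes p}$ to vanish a.s., yielding $c_i\equiv 0$. The width hypothesis $m\ge 2n/d$ enters as exactly the nontriviality count $(m/2)d\ge n$. I expect this rank step to be the main obstacle: one has to simultaneously rule out dependencies across both the neuron index $j$ and the coordinate index $k$, weaving together analyticity, the infinite-Taylor-coefficient condition, and the genericity of the $x_i$'s.

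\textbf{Existence of a zero-loss solution in $B_\epsilon(w^0)$.} Let $H(w):=f(w;v^0)$. The mirrored construction yields $H(w^0)=0$, and $DH(w^0)=J(w^0;v^0)$ is full row rank by the previous step. A quantitative submersion/open-mapping theorem (obtained via Newton iteration using the smallest singular value of $DH(w^0)$ and the local Lipschitz constant of $DH$) then furnishes constants $\epsilon_0,c_0>0$ such that $H(B_\epsilon(w^0))\supseteq B_{c_0\epsilon}(0)$ for every $\epsilon\le\epsilon_0$. Exploiting that $f$ is linear in $v$, so that $f(w;cv^0)=c\,H(w)$ for any scalar $c>0$, I choose $c\ge\|y\|/(c_0\epsilon)$; then $y/c\in B_{c_0\epsilon}(0)$, and the submersion statement supplies some $w\in B_\epsilon(w^0)$ with $H(w)=y/c$, i.e.\ $f(w;cv^0)=y$. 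Setting $v:=cv^0$ (which is entrywise nonzero a.s., since $v^0$ is) gives the claimed zero-loss configuration; the same choice of $\epsilon_0$ works uniformly for all smaller $\epsilon$ by just taking $c$ correspondingly larger.

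\textbf{Stationary points in the nice region are global minimizers.} Observe that $J(w;v)=J(w;\mathbf{1})\,D(v)$, where $D(v)=\mathrm{blkdiag}(v_1 I_d,\ldots,v_m I_d)$. When $v$ is entrywise nonzero, $D(v)$ is invertible, so $\mathrm{rank}\,J(w;v)=\mathrm{rank}\,J(w;\mathbf{1})$. Applying the rank argument of the first paragraph with $v_j\equiv 1$ (the $v_j^0$ factors there only rescaled columns) yields $\mathrm{rank}\,J(w^0;\mathbf{1})=n$; continuity of $w\mapsto J(w;\mathbf{1})$ and lower semicontinuity of rank let me shrink $\epsilon_0$ so that $J(w;\mathbf{1})$ retains rank $n$ for every $w\in B_{\epsilon_0}(w^0)$. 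Hence for any stationary point $\theta^*=(w^*,v^*)$ with $w^*\in B_\epsilon(w^0)$ and $v^*$ entrywise nonzero, $\nabla_w\ell(\theta^*)=J(w^*;v^*)^T(f(w^*;v^*)-y)=0$ combined with the full row rank of $J(w^*;v^*)$ forces the residual $f(w^*;v^*)-y=0$, i.e.\ zero training loss, completing the theorem.
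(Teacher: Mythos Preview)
Your overall architecture—full-rank Jacobian at initialization, then submersion/IFT plus scaling of $v$ for existence, then continuity of rank for the stationary-point claim—matches the paper's, and your second and third paragraphs are essentially correct and equivalent to what the paper does. The problem is your rank argument.

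You write that a row dependence gives scalars $c_i$ with $g_k(w):=\sum_i c_i x_{i,k}\,\sigma'(w^T x_i)$ vanishing at $w=w_1^0,\dots,w_{m/2}^0$, and then you ``read off Taylor coefficients.'' But vanishing at $m/2$ points does not make an analytic function identically zero, so you are not entitled to conclude that the tensors $\sum_i c_i x_{i,k}\,x_i^{\otimes p}$ vanish. There is also a quantifier problem: the putative $c_i$'s depend on the random $(x,w^0)$, so you cannot fix $c$ first and then take the almost-sure event over $w^0$. Even the charitable reading—that for a.e.\ $x$ and every $c\neq 0$ some $g_k$ is not identically zero—only shows that the columns of $\Psi(w)=\mathrm{diag}(\sigma'(w^T x_i))\,X$ span $\mathbb{R}^n$ when $w$ ranges over \emph{all} of $\mathbb{R}^d$; it does not imply that $m/2$ random evaluations $\Psi(w_1^0),\dots,\Psi(w_{m/2}^0)$ already have joint column span $\mathbb{R}^n$. (One can build analytic $\Psi:\mathbb{R}^d\to\mathbb{R}^{n\times d}$ whose columns over all $w$ span $\mathbb{R}^n$ but for which no $r$ evaluations with $rd=n$ ever suffice.) Your sentence ``the width hypothesis enters as the count $(m/2)d\ge n$'' is exactly where real work is needed, and none is done.

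The paper sidesteps this by a different, constructive route: it selects an explicit $n\times n$ submatrix of $J(w^0;v^0)$, writes its determinant as an analytic function of $(x,w^0,v^0)$, and then \emph{constructs by hand} a special dataset $x$ (placing groups of samples along coordinate axes) for which the submatrix becomes block-diagonal with blocks that are provably nonsingular. The full-rank claim then follows from ``a real-analytic function that is not identically zero vanishes only on a null set,'' and the explicit construction is what supplies the ``not identically zero.'' If you want to rescue your Taylor approach, you would likewise need to exhibit at least one $(x,w^0)$ making $\tilde J$ have rank $n$, or else prove—using the specific structure $\Psi(w)=\mathrm{diag}(\sigma'(w^Tx_i))\,X$—that $m/2$ generic evaluations already span; neither step is in your sketch.
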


 \paragraph{Remark 1.}
    Theorem \ref{thm1} emphasizes the role of hidden weights of a neural network: it is a key ingredient for strong expressivity. When $m <n$, if we fix all the $w=w^0$, the range space of the feature matrix $\Phi(w^0)$ 
    does not cover the whole $\mathbb{R}^n$ space, so there always exists a label $y$, such that no $v^*$ can be found that perfectly maps the input to $y$. However, a small tolerance of the movement of $w$ will let $f(x;\theta)$ perfectly fit any input-label pair, so the movement of $w$ is vitally important. 
    The free perturbation of $w$ serves as an effective remedy against the limited expressivity.

\BLACK{
\paragraph{Remark 2.}
  We emphasize that Theorem \ref{thm1} holds for ``any small enough $\epsilon$'' instead of ``any $\epsilon$''. Therefore, Theorem \ref{thm1} only states ``there is no spurious local-min'' {\it locally}.
  It is still possible that on the {\it global} landscape results there ``exists bad local-min" (e.g. Ding et al. \cite{ding2019sub}). 
  
  We comment a bit more on the maximum required size of $\epsilon$. In our proof in Appendix \ref{appendix:thm1}, it  should not exceed the the radius of the region where the Jacobian $J(w;v)$ stays full-rank (the yellow-shaded area in Figure \ref{figure:idea}). To briefly summarize, the maximum radius is (linearly) proportional to the minimum singular value of the initial Jacobian $J(w^0;v^0)$.  Technical details on the size of this radius can be seen in \cite[Remark 4.1]{IFT}. 
}
  
  \begin{proof}[Proof sketch]
    Theorem \ref{thm1} consists of two arguments: (i) there exists a global-min with zero loss, (ii) in the neighborhood of initialization, every stationary point is a global-min. A detailed proof is relegated to Appendix \ref{appendix:thm1thm2}. We outline the main idea below.

    To argue (i), 
      the key idea is to use the Inverse Function Theorem (IFT), which is stated in
      Appendix \ref{appendix:thm1}.
      According to IFT, as long as an $n\times n$ submatrix of $J(w^0;v^0)$ is invertible, then for any $y \in \mathbb{R}^n$ \BLACK{and any small enough $\epsilon$}, there
   exists a $w^* \in B_\epsilon(w^0)$ whose prediction output $f(w^*;v^0) \propto y-f(w^0;v^0)$. Additionally, since $f(w^0;v^0)=0$, we have $f(w^*;v^0) \propto y$. Once this is shown, then we just need to scale all the outer weight $v_j$ uniformly and the output will be exactly $y$ since $f(w^*;v)$ is linear in $v$.

To argue (ii), recall that all the stationary points $\theta^*=(w^*,v^*)$ satisfy $\nabla_w\ell(\theta^*)=J(w^*;v^*)^T(f(w^*;v^*)-y)=0.$   
Therefore, if $J^T(w^*;v^*) \in \mathbb{R}^{md \times n}$ is of full column rank, the stationary point  $\theta^*=(w^*,v^*)$ is a global minimizer with $\ell(\theta^*)=0$. The desired full-rankness condition is true because: (i) $J(w^0;v^0)$ is full rank w.p.1 at initialization, (ii) $w^*$ will not leave the small neighborhood $ B_\epsilon(w^0)$, so the dynamics of $w$ stays inside the manifold of full-rank Jacobian. 

  \end{proof}
    \vspace{-1mm}
    The proof of Theorem \ref{thm1} relies on the full-rankness of the Jacobian matrix. We note that such  full-rankness holds for {\it both} the mirrored and the regular LeCun's initialization (the case for the regular one can be proved using the same technique). %
    So why do we insist on shifting the initial output to 0? 
    Simply put, the ``randomness of weights'' contributes to the ``full-rankness'', while the `shifting' allows us to gain \BLACK{local} representation power by applying IFT properly.
    To be more specific, 
      $f(w^0,v^0)=0 \in \mathbb{R}^{n}$ is important because it is surrounded by all possible directions pointed from $0 \in \mathbb{R}^{n}$, so for any $y\in \mathbb{R}^{n}$, IFT claims that there exists at least one  $w^* \in B_\epsilon(w^0)$, s.t. $f(w^*;v^0) \propto y-f(w^0;v^0) =y$, therefore, $f(w^*;v^*)$ can perfectly match $y$ by scaling $v^0$ with some constant (Figure \ref{shiftvsregular}(a) illustrates this case when $n=2$).  
       
       \BLACK{In contrast, if we use IFT around {\it regular} LeCun's initialization, the existence of $f(w^0;v^0)$ on the right hand side resists us from scaling $v^0$ like before (Figure \ref{shiftvsregular}(b) illustrates this case). As such, Theorem \ref{thm1} does not hold for {\it any} small $\epsilon$ around the regular initialization.
      This is also revealed in our experiments in Section \ref{section:synthetic}: training fails if we only search around the regular LeCun's initialization.}
      
      \begin{figure}[htbp]
      \vspace{-2mm}
        \centering
        \subfigure[The mirrored LeCun's initialization]{
        \begin{minipage}[t]{0.5\linewidth}
        \centering
        \includegraphics[width=1.3in]{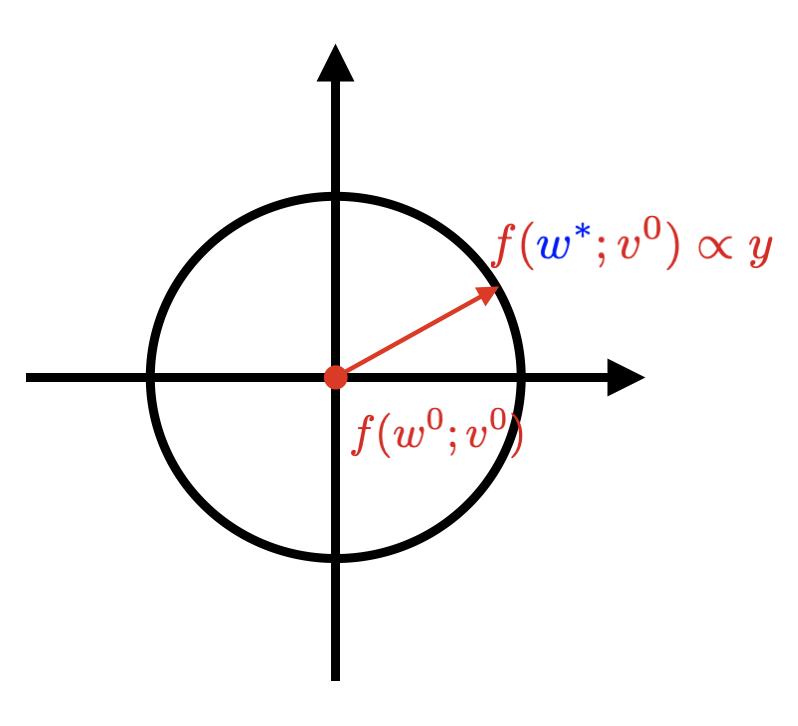}
        \end{minipage}%
        }%
        \subfigure[Regular LeCun's initialization]{
          \begin{minipage}[t]{0.5\linewidth}
          \centering
          \includegraphics[width=1.3in]{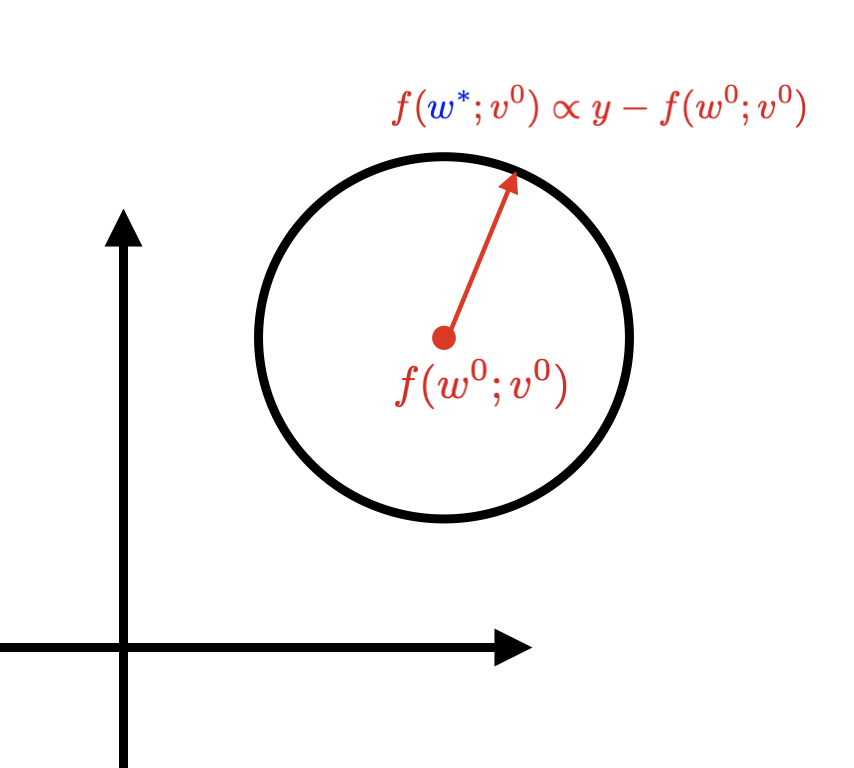}
          \end{minipage}%
          }%
        \centering
        \vspace{-3mm}
        \caption{ Examples of using Inverse Function Theorem (IFT) under different initialization strategies. The illustration here is for $n=2$. For (a), scaling $v^0$ will directly lead to zero loss, while it is not true for (b) due to the non-zero $f(w^0;v^0)$. }
        \vspace{-2mm}
        \label{shiftvsregular}
      \end{figure}
     The idea of zero initial output is also used in other recent works in Table \ref{table:initial}.  
    Despite the similar design, they use such an initialization for different purposes.   In NTK regime, zero initial output helps eliminate the bias term and simplify the proof (\cite{lee2019wide,hu2020surprising,chizat2018lazy,hu2019simple,bai2019beyond} and \cite{daniely2019neural}). 
     Nguyen \cite{nguyen2021proof} also uses zero initial output, but their initialization is very different in that all the hidden layers will have high values while the last layer is assigned to 0.
     In this way, they manage to limit the movement of hidden layers without increasing the width. \BLACK{To our knowledge, this is the first time that the zero initial output has been linked to Inverse Function Theorem, by which the strong expressivity of a narrow neural network can be identified. }

\vspace{-1mm}

\begin{table}[ht]
  \caption{Comparison of recent works considering zero initial output.}
  \vspace{-2mm}
  \label{table:initial} 
 \centering %
 \resizebox{\textwidth}{!}{%
 \begin{tabular}{lll} 
 \toprule %
   Work & Width &  Motivation  \\ \midrule %
  \cite{lee2019wide,hu2020surprising,chizat2018lazy,hu2019simple, bai2019beyond}    & $m_{L-1}\rightarrow \infty$   &  To avoid handling the bias term in the NTK regime    \\
 \cite{daniely2019neural}   & $m=\tilde{O}(n/d)$   &  To avoid handling the bias term in the NTK regime    \\
  \cite{nguyen2021proof}    & $m_{L-1}=O(n)$   &  To ensure linear convergence via imbalanced weight   \\
  {\bf Ours}  & $m=O \left(\frac{n}{d}\right)$   &  To 
  achieve strong expressivity via  Inverse Function Theorem\\
 \bottomrule
 \vspace{-2mm}
\end{tabular}
}
\end{table}

\vspace{-2mm}
\subsection{Trainability Analysis}
\label{section:algorithm}
\vspace{-2mm}

Despite the expressivity and good local properties stated in Theorem \ref{thm1}, in practical training, the weights can easily escape the nice neighborhood, especially when the width is not sufficiently large. To keep the hidden weights inside this nice region, an intuitive idea is to impose an explicit constraint, but it may suffer from bad local-min on the boundary with a very large loss \cite{bertsekas1997nonlinear}. This is supported by our experiments in Section \ref{section:experiment}, Figure \ref{synthetic:loss}, (b): when we add the hidden-weight constraint directly to the regular training regime (i.e., let $\|w-w^0\|_F \leq \epsilon$), it fails to find a low-cost solution when $\epsilon$ and width become small.

In this sense, we need to design a constrained problem, such that all the KKT points will have small training loss, including those on the boundary. Fortunately, Theorem \ref{thm1} suggests one such formulation.  Recall in the proof of Theorem \ref{thm1}, we construct a zero-loss global-min by scaling $v^0$
 , so $v^*$ still follows the pairwise-opposite pattern. Inspired by this, 
we consider the following neural network
(we abuse the notation of $f(x;\theta)$, $f(x;w,v)$ and $v=(v_{1},\dots,v_{\frac{m}{2}})$ here):
\begin{equation}\label{pairNN}
 f(x;w,v)=\sum_{j=1}^{\frac{m}{2}} v_j\left(\sigma(w_j^Tx)-\sigma(w_{j+\frac{m}{2}}^Tx) \right).
\end{equation}
Note that the optimization variable for the outer layer is only $v=(v_{1},\dots,v_{\frac{m}{2}})$, and the rest of the outer weights are automatically set to be $-v$. 
Despite the change of $v$, %
Theorem \ref{thm1} still applies since it does not have any specific requirement on $v$.
We then use (\ref{pairNN}) to formulate the following problem (\ref{constrained}):
\begin{equation}
  \begin{aligned}
    \underset{\theta}{\min} \ \ell(\theta)=&\frac{1}{2}\sum_{i=1}^{n} (y_{i}-f(x_i;\theta))^{2},  \quad 
    \text{s.t.} \quad&  w \in B_\epsilon(w^0), \ v \in B_{\zeta, \kappa}(v) \\
  \end{aligned}
  \label{constrained}
\end{equation}
where $f(x_i;\theta)$ is in the form of \eqref{pairNN},
\begin{align*}
    B_\epsilon(w^0)& :=\{w\mid \|w-w^0\|_F \leq\epsilon\},\\
    B_{\zeta, \kappa}(v)& :=\{v\mid  \; v \geq \zeta {\bf 1}~\text{and}~  v_j/v_{j^{\prime}} \leq \kappa, \forall~(j, j')\in\{1,\cdots, m\}, \text{where}~ \zeta>0, \  \kappa < \infty. \}.
\end{align*}
 Here, $\zeta >0$ is a small constant that keeps
 the entries of $v$ away from zero, which is an essential requirement of Theorem \ref{thm1}.  The requirement of $v_j/v_{j^{\prime}} \leq \kappa < \infty$ allows all entries of $v$ to be uniformly large, but it rules out the case when some entries are much larger than others. 
Instead of regarding all these requirements of $B(v)$ as prior assumptions, we formulate them into the constraints in the problem, so all the iterates in the practical training algorithm will strictly follow these requirements.  
 In Theorem \ref{thm3}, we show that every KKT point of problem (\ref{constrained}) implies the near-global optimality for the unconstrained training problem \eqref{unconstrained}.

\begin{myTheo}\label{thm3}
Suppose Assumption \ref{assum1}, \ref{assum2}, and \ref{assum3} hold and assume $\theta^0=(w^0,v^0)$
  as given in Algorithm \ref{initial}.
  Then every KKT point $\theta^*=(w^*,v^*)$  of (\ref{constrained}) is an approximate global-min w.p.1., that is:   
  \begin{equation}\label{approxglobalmin}
    \ell(w^*,v^*) = O(\epsilon^2).
 \end{equation} 
\end{myTheo}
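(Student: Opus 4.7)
The plan is to analyze the KKT conditions of (\ref{constrained}) using full-rankness of the Jacobian $J(w;v)$, extended from initialization to the entire feasible region by continuity. Theorem~\ref{thm1} gives $J(w^0;v^0)$ full row rank with probability one, and writing $J(w;v)=V(w)\,\mathrm{diag}(v)$ together with Lipschitzness of $V$ and the constraint $v\ge\zeta\mathbf{1}$ yields $\sigma_{\min}(J(w;v))\ge c_0\zeta > 0$ uniformly on the feasible set, provided $\epsilon$ is small enough.

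I would then split according to whether the $w$-constraint is active at $\theta^*=(w^*,v^*)$. If $w^*$ lies strictly inside $B_\epsilon(w^0)$, the $w$-stationarity $J(w^*;v^*)^T r = 0$, combined with $J(w^*;v^*)^T$ having full column rank (since $v^*\ge\zeta\mathbf{1}$ is entry-wise nonzero), forces the residual $r:=f(w^*;v^*)-y$ to vanish, giving $\ell(\theta^*)=0$. This is morally the stationary-point statement of Theorem~\ref{thm1} applied at the KKT point.

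The substantive case is $\|w^*-w^0\|_F=\epsilon$, where stationarity reads $J(w^*;v^*)^T r = -2\lambda(w^*-w^0)$ with $\lambda>0$. The key structural fact I would exploit is that the mirrored initialization gives $f(w^0;v)=0$ for every $v$, because $w_j^0 = w_{j+m/2}^0$, so by the fundamental theorem of calculus along the segment from $w^0$ to $w^*$,
\begin{equation*}
f(w^*;v^*) \;=\; \bar J\,(w^*-w^0), \qquad \bar J := \int_0^1 J\bigl(w^0 + t(w^*-w^0);\,v^*\bigr)\,dt.
\end{equation*}
Taking the inner product of the $w$-KKT equation with $w^*-w^0$, using the Lipschitz estimate $J(w^*;v^*)=\bar J + O(\|v^*\|\epsilon)$, and invoking the $v$-KKT condition --- which via $f(w^*;v^*)=\Psi(w^*)v^*$ and $\Psi(w^*)^T r \approx 0$ kills (or $\zeta,\kappa$-controls) the term $f(w^*;v^*)^T r$ --- yields an estimate of the form $\lambda\epsilon^2 \le C\|v^*\|\epsilon^2\|r\|$. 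Combining with $\|r\|\le 2\lambda\epsilon/\sigma_{\min}(J(w^*;v^*))$ then gives $\|r\|^2 = O(\epsilon^2)$.

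The hard part will be controlling the implicit constants uniformly. The prefactor in $O(\epsilon^2)$ depends on $\|v^*\|$, the Lipschitz constant of $J$ (which itself scales linearly with $\|v^*\|$), and the multiplier $\lambda$, none of which is a priori bounded on the feasible set. This is precisely where the $B_{\zeta,\kappa}$ design enters: the lower bound $v\ge\zeta\mathbf{1}$ preserves full-rankness of $J$; once $w^*$ moves an $O(\epsilon)$-distance off the symmetric initialization, $\Psi(w^*)$ becomes full column rank, and combined with the finiteness of $\ell(\theta^*)$ and the ratio bound $v_j/v_{j'}\le\kappa$ this provides an a~priori bound on $\|v^*\|$; the regime in which $w^*$ stays close enough to $w^0$ to keep $\Psi(w^*)$ nearly degenerate is itself handled by the smallness of the Taylor remainder in that regime. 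Assembling these pieces into a uniform $O(\epsilon^2)$ bound is where the bulk of the technical work lies.
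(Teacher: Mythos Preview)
Your overall strategy is right and essentially parallel to the paper's: both exploit that at a boundary KKT point $-\nabla_w\ell$ is aligned with $w^*-w^0$, that the mirrored initialization forces $f(w^0;v)=0$ so $J(w^*;v^*)(w^*-w^0)=f^*+O(\|v^*\|\epsilon^2)$ via Taylor, and that $v$-optimality constrains $\langle f^*,r\rangle$. The paper packages this through a descent-lemma comparison among $f^*$, $\bar f=f(w^*-\eta\nabla_w\ell;v^*)$, and $f':=(1+\tilde\eta)f^*$; your direct inner product with $w^*-w^0$ is an equivalent and arguably cleaner route to the same inequality.

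There is, however, a real gap at the combination step. From $\lambda\epsilon^2 \le C\|v^*\|\epsilon^2\|r\|$ and $\|r\|\le 2\lambda\epsilon/\sigma_{\min}$ you \emph{cannot} deduce $\|r\|^2=O(\epsilon^2)$: substituting one into the other only yields $\sigma_{\min}\le 2C\|v^*\|\epsilon$, a circular inequality that says nothing about $\|r\|$. Two related issues feed into this. First, the $v$-KKT condition does not ``kill'' $\langle f^*,r\rangle$; the linear constraints in $B_{\zeta,\kappa}$ can be active, so $\Psi(w^*)^Tr\ne 0$ in general. What is true (and what the paper isolates as its Lemma~\ref{lemma:increaseloss}) is a \emph{sign}: scaling $v^*\to(1+t)v^*$ is feasible for $t\ge 0$, so the directional derivative at $t=0$ forces $\langle f^*,r\rangle\ge 0$. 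This is precisely what lets you drop the term, but it is not the ``$\approx 0$'' mechanism you invoke. Second, your proposed a~priori bound on $\|v^*\|$ via full-column-rankness of $\Psi(w^*)$ fails uniformly, because under the pairwise structure $\Psi(w^0)=0$ identically, so $\|\Psi(w^*)^\dagger\|\to\infty$ as $\epsilon\to 0$.

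The paper closes the circularity differently, and this is the piece you are missing: since $\ell$ is convex in $v$ over the convex set $B_{\zeta,\kappa}$, any KKT point is a global minimum in $v$ at fixed $w^*$, hence $\|r\|^2=2\ell(w^*,v^*)\le 2\ell(w^*,\zeta\mathbf{1})\le M$ with $M$ depending only on $L,\zeta,\epsilon,\|y\|$ and \emph{not} on $v^*$. This turns your estimate into $\|\nabla_w\ell\|=2\lambda\epsilon\le C'v^*_{\max}\,\epsilon\, M$. The paper then does not try to bound $\|v^*\|$; instead it uses the refined lower bound $\sigma_{\min}(J(w^*;v^*))\ge v^*_{\min}\,\tilde\sigma_{\min}$ (not merely $c_0\zeta$) so that the linear $v^*$-dependence cancels in the ratio, leaving $\|r\|\le C'M\kappa\,\epsilon/\tilde\sigma_{\min}=O(\epsilon)$. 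With these two corrections --- the sign argument for $\langle f^*,r\rangle$ and the $v^*$-independent a~priori bound on $\|r\|$ --- your inner-product route goes through and yields the stated $O(\epsilon^2)$.
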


\BLACK{The proof of Theorem \ref{thm3} is based on  the special structure of neural network $f(x ; \theta)$, including the linear dependence of $v$ and the mirrored pattern of parameters. To better illustrate our proof idea, we provide a user-friendly proof sketch in Appendix \ref{appendix:thm3sketch}. Detailed proof can be seen in Appendix \ref{appendix:thm3detail}.

Theorem \ref{thm3} motivates a training method to reach small loss. We highlight three new ingredients that is not used in regular neural net training: the mirrored initialization, the pairwise structure of $v$ in \eqref{constrained}, and the constrained parameter movement. Combining these elements with Projected Gradient Descent (PGD), we propose a new training regime in Algorithm \ref{algo:ourmethod} in Appendix \ref{appendix:pytorch}. Thorough numerical results are provided in the following sections to demonstrate the efficacy of Algorithm \ref{algo:ourmethod}.  }

\vspace{-3mm}
\subsection{Discussion: Extension to Deep Networks}
\vspace{-2mm}
\label{section:limitation}
In the previous sections, we analyze the trainability and expressivity of narrow 1-hidden-layer networks. We find it possible to extend the previous analysis to deep nets, and we already have some preliminary results. Due to space constraints, more relevant discussions are deferred to Appendix \ref{appendix:lemmadeep}.

\vspace{-2mm}
\section{Experiments}
\label{section:experiment}
\vspace{-2mm}
In this section, we provide empirical validation for our theory. Specifically, we compare the performance of two training regimes \footnote{We call it ``training regime''
instead of ``training method'' since we use a different formulation as well a different algorithm
compared to standard SGD.}: 

\noindent{\bf (1)} {\bf Our training regime:} we optimize a constrained problem \eqref{constrained} 
by using PGD (projected gradient descent), starting from
the mirrored LeCun's initialization. 
(See Algorithm \ref{algo:ourmethod} in Appendix \ref{appendix:pytorch}.)

\noindent{\bf (2)} {\bf Regular training regime:} optimize an unconstrained problem \eqref{unconstrained}
by using GD-based methods, starting from LeCun's initialization.

\BLACK{
\textbf{Main ingredients of our algorithm.}
As shown in Theorem \ref{thm3}, all KKT points in our training regime have small empirical loss. To reach such KKT points, we use Projected Gradient Descent (PGD) (see Bertsekas \cite{bertsekas1997nonlinear}). Even though we do not provide convergence analysis,  PGD can, empirically,  converge to a KKT point with proper choice of stepsize. We outline the proposed training regime in Algorithm \ref{algo:ourmethod} in Appendix \ref{appendix:pytorch}.
To briefly summarize, there are three key ingredients in Algorithm \ref{algo:ourmethod}: the mirrored initialization; the pairwise structure of $v$ in \eqref{constrained}; and the PGD algorithm.  Each of these changes only involves a few lines of code changes based on the regular training.  We demonstrate the \texttt{PyTorch} implementation of these changes in Appendix \ref{appendix:pytorch}. 

\textbf{Better training and test error.}
To evaluate our theory in terms of training error, we conduct experiments on synthetic dataset (shown in Section \ref{section:synthetic}) and random-labeled CIFAR-10 \cite{krizhevsky2009learning} (shown in appendix \ref{appendix:randomlabel}). We further observe the strong generalization power of Algorithm \ref{algo:ourmethod}, even though it is not yet revealed in our theory. Our training regime brings higher or competitive test accuracy on (Restricted) ImageNet \cite{russakovsky2015imagenet} (shown in Section \ref{section:rimagenet}), MNIST \cite{lecun2010mnist}, CIFAR-10, CIFAR-100 \cite{krizhevsky2009learning} (shown in Appendix \ref{appendix:experiment}). Detailed experimental setup are explained in Appendix \ref{appendix:experimentsetup}.

As a side note, for all the experiments in our training regime, we observe that $v$ never touches the boundary of $B_{\zeta, \kappa}(v)$ when $\kappa=1, \zeta=0.001$. So we can regard problem \eqref{constrained} as an unconstrained problem for $v$, and PGD only projects the hidden weights $w$ into $B_\epsilon(w^0)$. When $\epsilon=1000$, problem \eqref{constrained} degenerates into an unconstrained problem (but still different from the regular training due to the changes in the structure of $v$ and initialization).

}

\vspace{-2mm}
\subsection{Training Error on The Synthetic Dataset}
\vspace{-2mm}
\label{section:synthetic}

\BLACK{
On the synthetic regression dataset, we train 1-hidden-layer networks under different widths and different training settings, the final training errors are shown in Figure \ref{synthetic:loss}. We explain as follows.

\begin{figure}[h]
\vspace{-2mm}
  \subfigure[Our training regime]{
    \begin{minipage}[t]{0.5\linewidth}
    \centering
    \includegraphics[width=2in]{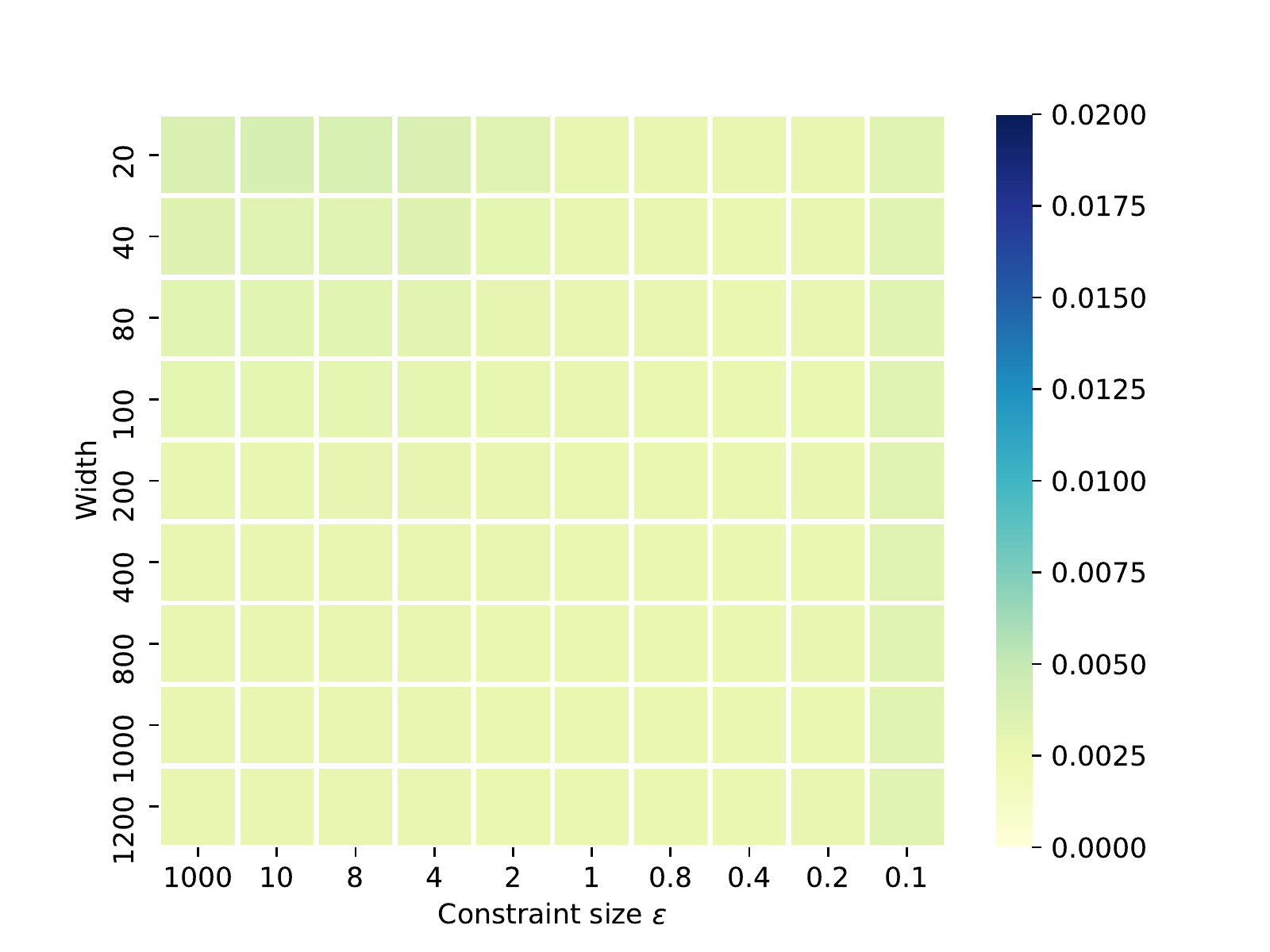}
    \end{minipage}%
    }%
    \subfigure[Regular training]{
      \begin{minipage}[t]{0.5\linewidth}
      \centering
      \includegraphics[width=2in]{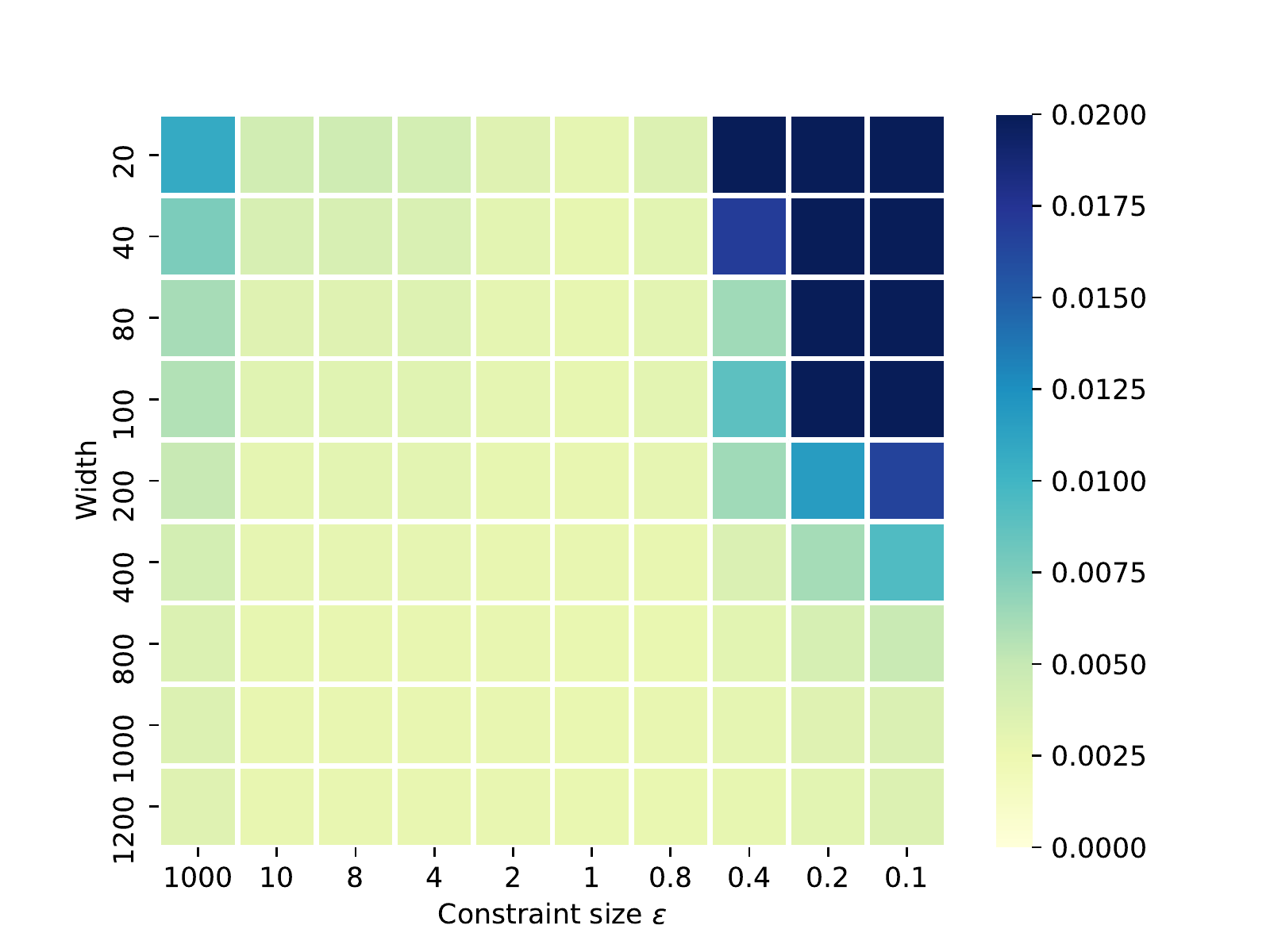}
      \end{minipage}%
      }\\%
      \subfigure[$\lambda_{\text{min}}(J(w^*;v^*))$\\ in our training regime]{
      \begin{minipage}[t]{0.5\linewidth}
        \vspace{-3mm}
      \centering
      \includegraphics[width=2in]{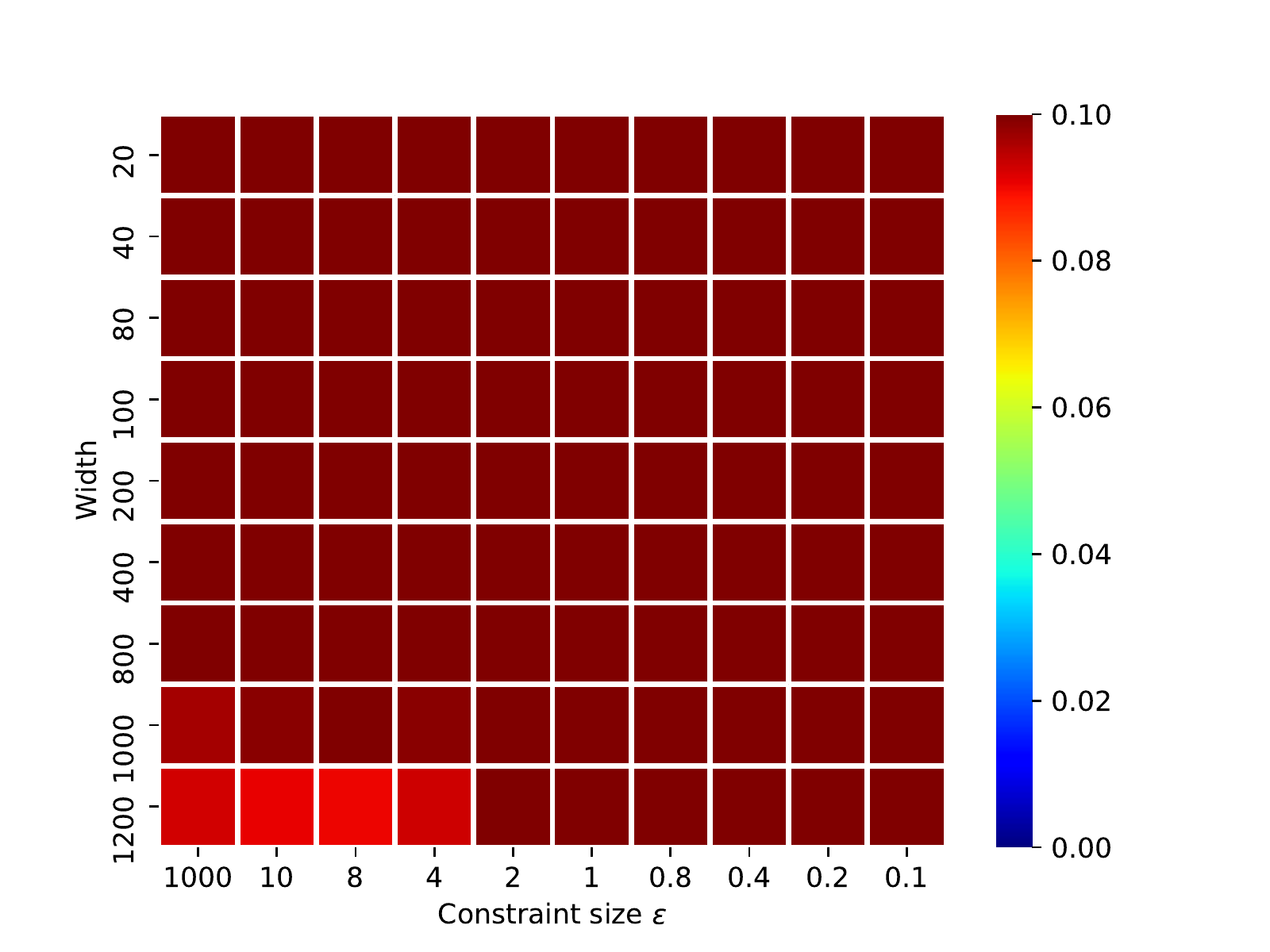}
      \end{minipage}%
      }%
      \subfigure[$\lambda_{\text{min}}(J(w^*;v^*))$ \\in regular training regime]{
        \begin{minipage}[t]{0.5\linewidth}
        \vspace{-3mm}
        \centering
        \includegraphics[width=2in]{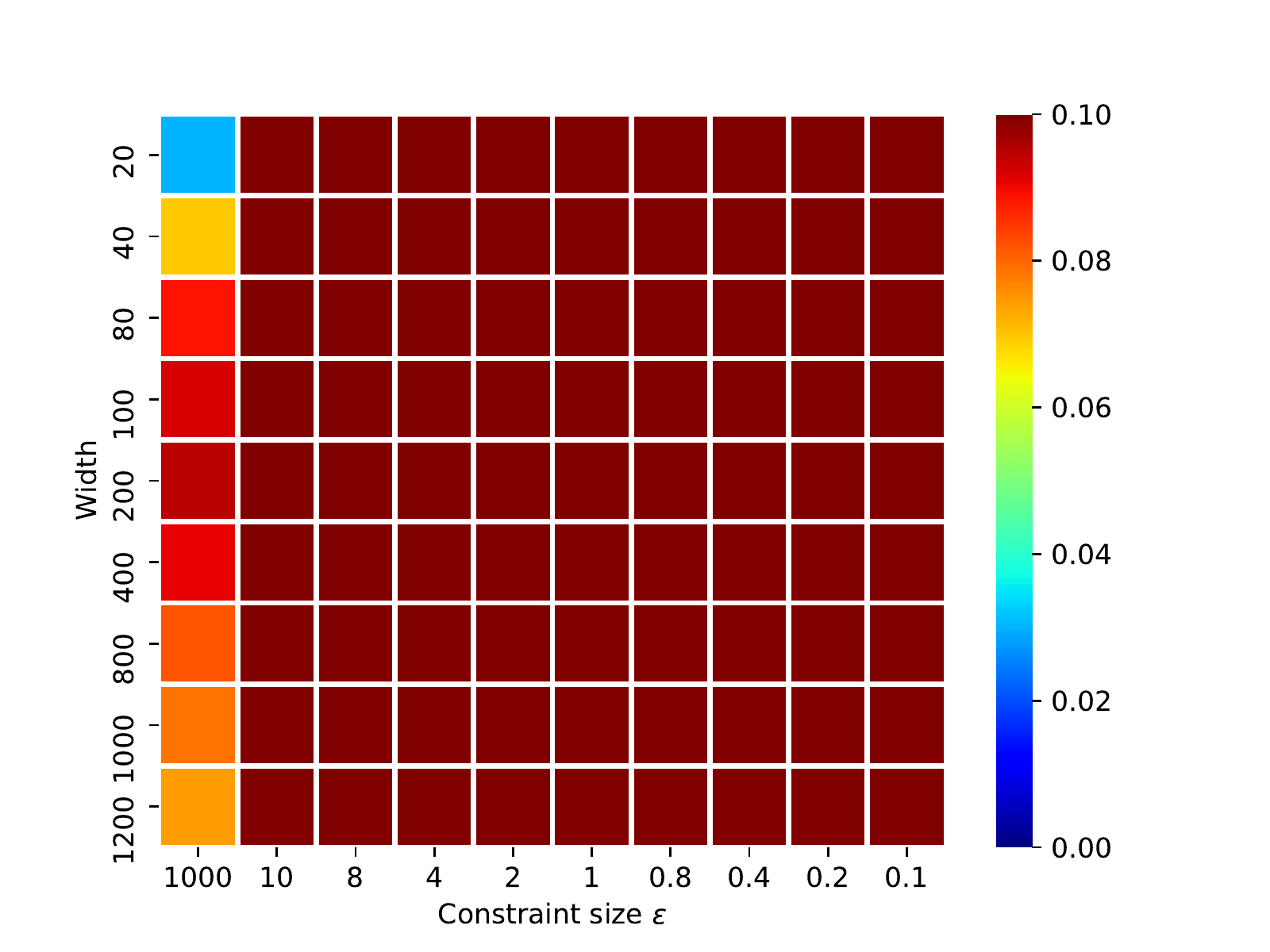}
        \end{minipage}%
        }%
  \vspace{-3mm}
  \caption{Synthetic data: training losses and $\lambda_{\text{min}}(J(w^*;v^*))$ in different width \& hidden-weight constraint size $\epsilon$. }
  \label{synthetic:loss}
  \vspace{-1mm}
\end{figure}

As for our training regime, we try different hidden-weight constraint size $\epsilon$, these results can be seen in Figure \ref{synthetic:loss}, (a). Accordingly, the performance of the regular training regime can be seen in the 1st column in Figure \ref{synthetic:loss}, (b). As argued above, $\epsilon=1000$ degenerates PGD into unconstrained GD, so this column shows the results for regular training regime.
As for the rest of the columns in the Figure \ref{synthetic:loss}, (b), we try to investigate an ablation study: ``What will happen if we directly add constraint $B_\epsilon (w^0)$ on $w$, and use PGD without any modification of the initialization \& network structure?'' In each block in Figure \ref{synthetic:loss}, a grid search of step-size is performed to ensure the convergence of algorithms.
The key messages from this set of experiments are summarized as follows.

First, our training regime performs well regardless of width, yet regular unconstrained training fails when the network is narrow (1st column in Figure \ref{synthetic:loss} (b)). 

Second, it does not work when we directly impose the hidden-weight constraint of \eqref{constrained} on
the regular training regime (and PGD is used accordingly), as it fails to find a low-cost solution when $\epsilon$ and width become small. There are two possible causes: perhaps there is no global-min inside the ball, or it converges to bad local-min on the boundary. 
In contrast, our training regime always finds a low-cost solution with any choice of $\epsilon$.
Therefore, we suggest  {\it not} to directly add constraint and use PGD. Instead,  when using PGD, it is better to utilize the mirrored initialization \&  the pairwise structure of $v$ in \eqref{constrained} (see Figure \ref{synthetic:loss} (a)).
}

Furthermore, Figure \ref{synthetic:loss}, (c) \& (d) depict $\lambda_{\text{min}}(J(w^*;v^*))$, i.e. the minimum singular value of the Jacobian at the stationary (or KKT) points. As expected in Section \ref{section:intro}, when the width is small, regular training regime (when $\epsilon=1,000$) has trouble controlling the parameter movement, and it is likely to get trapped at a stationary point with a singular Jacobian matrix, leading to a large loss despite the convergence. However, this is not an issue in our training regime.

 \vspace{-2mm}
\subsection{Test Accuracy on R-ImageNet} \label{section:rimagenet}
\vspace{-2mm}
\begin{wrapfigure}{r}{0.5\textwidth}
\vspace{-25pt}
 \includegraphics[width=0.5\textwidth]{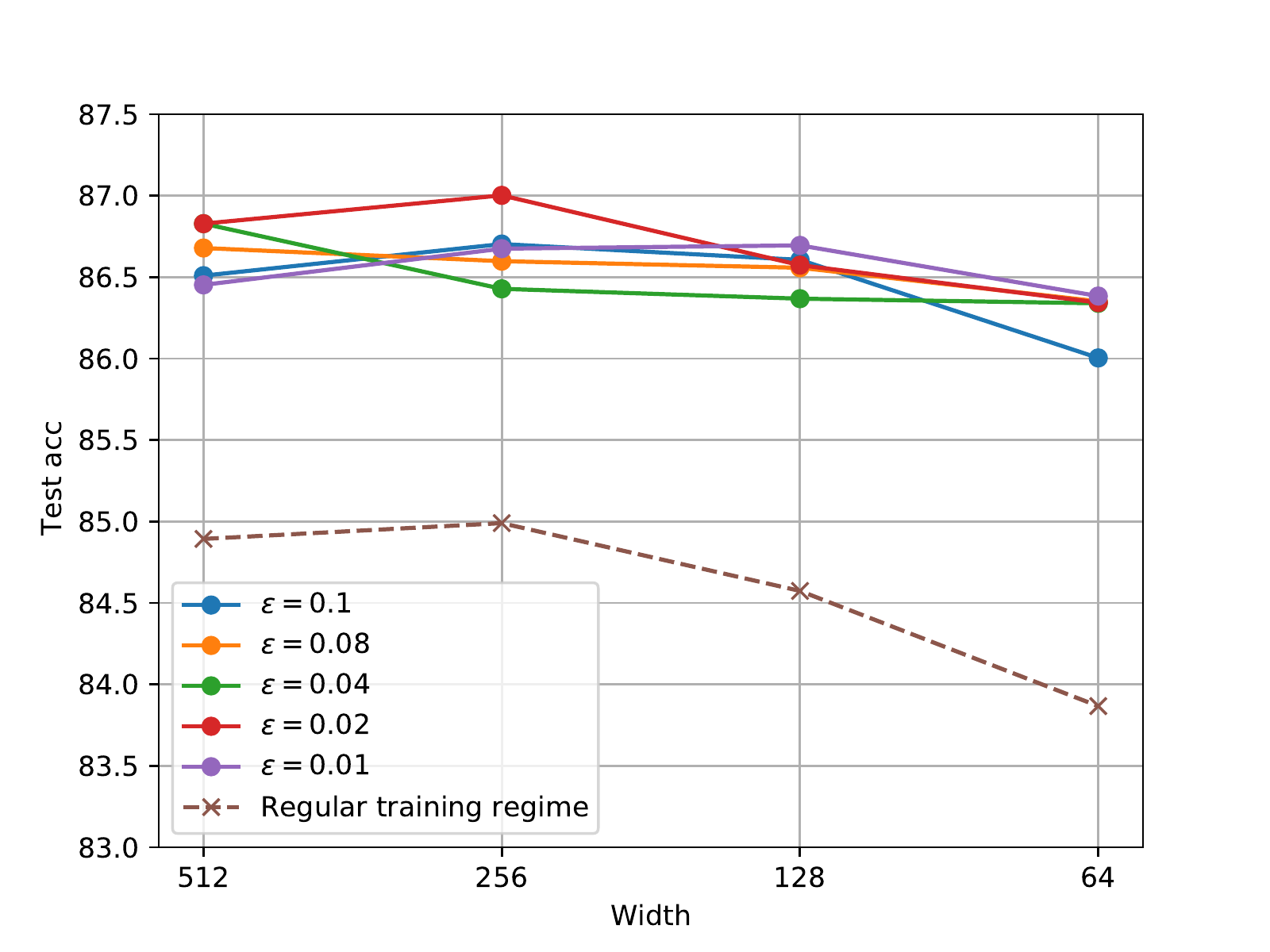} 
\vspace{-15pt} 
 \caption{R-ImageNet: test accuracy under our training regime with different $\epsilon$ vs regular training regime. \BLACK{In x-axis, width stands for the number of channels in the final CNN block of ResNet-18. These results are averaged over 5 seeds. }}
 \label{rimagenet:testacc} 
\end{wrapfigure}
We check the test accuracy (not just the training accuracy) of the proposed method 
on R(Restricted)-ImageNet (\cite{russakovsky2015imagenet}).
R-ImageNet is a subset of ImageNet with resolution 224$\times$224, and it is widely used in  various papers (e.g. \cite{tsipras2018robustness}, \cite{inkawhich2020perturbing}).
Detailed description can be found in
Appendix \ref{appendix:experimentsetup}.
In both training regimes, experiments are conducted on ResNet-18 \cite{he2016deep}, which contains 4 CNN blocks and 1 fully connected output layer. \BLACK{To compare the performance under different widths, we shrink the number of channels in the final CNN block gradually from 512 to 64 (note that we will call ``the number of channels'' as the ``width'' in CNN, this is a straightforward extension of  the width in FCN). 
In our training regime, we apply all the constraints in problem \eqref{constrained} to the final CNN block \& the output layer.}

\BLACK{There are two messages shown in Figure \ref{rimagenet:testacc}. First, our training regime outperforms regular training when using the standard ResNet-18 (i.e., width 512 in the final CNN block). Second, our training regime in the most narrow setting (width 64) performs quite close to the standard case
(width 512).
In comparison, regular SGD does not perform well in the narrow setting.
More theoretical analysis on the generalization power will be considered as our future work.}

\vspace{-2mm}
\section{Conclusion}
\vspace{-2mm}

  In this work, we shed new light on both the expressivity and trainability of narrow networks. Despite the limited number of neurons, we prove that the network can memorize $n$ samples, and it can be provably trained to approximately zero loss in our training regime. 
  We notice some interesting questions by reviewers and colleagues. We provide further discussion on these questions in Appendix \ref{appendix:QandA}, they may be intriguing for general readers.
  
  \BLACK{Finally, there are several important future directions. First, we empirically observe that our training regime brings strong generalization power, more theoretical analysis will be interesting. Second, our current analysis is still limited to 1-hidden-layer networks,
  we are trying to extend it to deep ones. Third, the algorithmic convergence analysis to reach the KKT point is imperative.  } 
  \clearpage
\begin{ack}
We would like to thank  Dawei Li, Zeyu Qin, Jiancong Xiao and Congliang Chen  for valuable and productive discussions. We want to thank the anonymous reviewers for their valuable suggestions and comments.
M. Hong is partially supported by an NSF grant CMMI-1727757, and an IBM Faculty Research Award. The work of Z.-Q. Luo is supported by the National Natural Science Foundation of China (No. 61731018) and the Guangdong Provincial Key Laboratory of Big Data Computation Theories and Methods.

\end{ack}

\bibliographystyle{plain}  

\bibliography{references}

\clearpage

\appendix
\numberwithin{equation}{section}

\section*{Appendix}

\section*{Potential Negative Societal Impacts}
In this paper, we discuss the expressivity and trainability of narrow neural networks. This paper provides a new understanding from the theoretical study, and such new insights will inspire a better training approach for neural networks with a small number of parameters.
In industrial applications, several aspects of impact can be expected: 
training a huge neural network is at the expense of heavy computational burdens and large
power consumption, which is a big challenge for  embedded
systems and small portable devices. In this sense, our work sheds new light on training narrow networks with much fewer parameters, so it will save energy for AI industries and companies. On the other hand, if everyone can afford to train powerful neural networks on their cell phone, then it will be a potential threat to most famous companies \& institutes who are boast of their exclusive computational advantages. Additionally, there are chances that neural networks will be used for illegal usage. 
\BLACK{
\section*{Appendix Organization}

The Appendix is organized as follows.

\begin{itemize}
    \item Appendix \ref{appendix:QandA} provides some interesting questions by reviewers and colleagues. We provide further discussion on these questions, they may be intriguing for general readers.
    
    \item Appendix \ref{appendix:morerelatedwork} provides more discussions on the literature.
    
    \item Appendix \ref{appendix:relatedwork} provides more discussions on the related work Daniely \cite{daniely2019neural}.
    \item Appendix \ref{appendix:notation} introduces all the notations that will occur in the proof.
    \item Appendix \ref{appendix:thm1thm2} and \ref{appendix:thm3} provide detailed proof for Theorem \ref{thm1} and Theorem \ref{thm3}, respectively.
    \item Appendix \ref{appendix:lemmadeep} discusses extending the current analysis to deep neural networks.
    \item Appendix \ref{appendix:experiment} introduces the following contents. (i) the formal statement of our training regime, i.e., Algorithm \ref{algo:ourmethod}. (ii) The \texttt{Pytorch} implementation for Algorithm \ref{algo:ourmethod}. (iii) Experimental details and settings for all the experiments appear in the paper. (iv) More experiments.
\end{itemize}

}
\BLACK{
\section{Some More Discussions}
\label{appendix:QandA}
In this section, we organize some frequently asked questions by reviewers and colleagues. Many of these questions may also be intriguing for general readers. We are thankful for the valuable discussion and we would like to share these questions. Here are our answers from the authors' perspective. 

\textbf{Q1:} {\it The paper merely focuses on the optimization aspect, that is, minimizing the training loss, and ignores the more important problem (from ML perspective) of generalization. It would have been helpful if the authors include a discussion on the implications of these results on the generalization error as well.}

\textbf{A1:} We agree that generalization is a very important issue for deep learning (and still largely mysterious). 
Much of recent effort is spent on explaining why a huge number of parameters can still lead to a small generalization gap. Despite this interesting line of research, for narrow networks, the risk of overfitting is much smaller (due to traditional wisdom that fewer parameters lead to a smaller generalization gap), thus generalization of narrow networks is probably less mysterious than wide networks. According to our experiments on various real datasets ( in Section \ref{section:rimagenet} \& Appendix \ref{appendix:experiment}), our training regime provides competitive or even better generalization performance than the regular training. 

We think for the current stage, it may be more imperative to resolve expressiveness and optimization issues so as to improve practical performance. That being said, we agree that the theoretical study of the generalization gap is an interesting next step for research, and we will study it in future works.

\textbf{Q2:} {\it  To which extent the result can be extended to non-smooth activations, such as ReLU or powers of ReLU?}
 
\textbf{A2:} We think it is possible, but definitely not easy. A  main reason that we analyze smooth activation is that we need to prove the full-rankness of Jacobian $J(w;v)$ in Theorem \ref{thm1}. We believe that, for narrow nets, the full-rankness of Jacobian $J(w;v)$ with both smooth and non-smooth activation can be proved, but at the current stage, we lack suitable techniques for non-smooth ones (at least it is hard to extend the current technique to ReLU). We briefly summarize the technical difference below.

1. For ReLU, every entry of the NTK matrix ($J(w;v)J(w;v)^T$) has a closed-form solution when the width $m= \infty$, and the corresponding  NTK matrix is full rank under mild data assumption. This nice property allows us to utilize concentration inequalities to keep the full-rankness of $J(w;v)J(w;v)^T$ under finite (but large enough) width. This idea is used in  Du et al, \cite{du2019gradient} for neural nets with width $m=\text{Poly} (n)$. However, this ``concentration-based'' approach is sensitive to width, it may be difficult to extend to narrow cases.

2. As for smooth activation, we utilize an important property of analytic function (shown in Lemma \ref{jacobian:analytic}). 
Lemma \ref{jacobian:analytic} is based the intrinsic property of {\it any} analytic function, instead of a ``infinite-width'' argument. Therefore,  it casts a higher possibility to use it for narrow nets. This property is also used in Li et al. \cite{li2018benefit} to prove the full-rankness of $J(w;v)$ for neural nets with width $m=O(n)$. Further, we successfully extend this result to width $m<n$ (with more sophisticated analysis).

In summary, analyzing ReLU requires new techniques. It is hard to extend the current analysis to narrow nets with general non-smooth activation.

\textbf{Q3:} {\it The “trainability” results only allow the small movement of hidden weights. This, in essence, is not desirable as it does not allow learning representations which is crucial in deep learning. Also, this is conceptually similar to the requirements in the NTK / lazy training regime, with the difference that those results offer precise convergence rates.}

\textbf{A3:} We believe our analysis is different from ``lazy training", based on the following reasons.

i)  We would like to point out that ``small movement'' does not imply our method is similar to ``lazy training".  Chizat et al.\cite{chizat2018lazy} described``lazy training" as the situation where "these two paths remain close until the algorithm is stopped". Here, the two paths correspond to the trajectory of training the original model and the linearized model respectively. ``Training in a neighborhood of initialization" is neither a sufficient nor a necessary condition of ``lazy training”. For wide nets, ``moving in a neighborhood" and "lazy training" are also co-existent, not causal. Logically speaking, for narrow nets, the two paths can be rather different while the training appears in a neighborhood.

ii) We then argue for narrow nets, linearized trajectory and neural net trajectory have to be quite different. Note that the linearized model of narrow nets cannot fit arbitrary data. In fact, when width $m<n$, the feature matrix does not span the whole $\mathbb{R}^{n}$ space if we fix first-layer hidden weights $w=w^{0} .$ Thus our training trajectory has to be rather different from the linearized model trajectory, so as to fit data. In contrast, for wide networks, random fixed features suffice to represent data, so staying close to the linearized trajectory (or even coincide) can lead to zero training error.

iii) In our setting, the movement of first-layer hidden weights is {\it necessary} (no matter small or big). So ``feature learning" is critical for our algorithm to achieve small training error. For wide net analysis, the movement of first-layer hidden weights is NOT necessary for zero training error, so there is no need for ``feature learning".

In summary, narrow networks are out of the scope of lazy training analysis and have extra difficulties, making our analysis rather different from lazy training analysis (despite some similarities such as using the full-rankness of Jacobian).

For completeness, we further explain the main differences between our work and the previous papers on wide nets, most of the following opinions are also expressed in Section \ref{section:challange}.

\begin{itemize}
    \item[1.] expressivity is rarely an issue for wide-net papers. A wide enough network (width $m>n$ ), even linear, can always fit $n$ arbitrary data samples (can be proved using simple linear algebra, which is shown in Section \ref{section:challange}). However, when $m<n$, the expressivity is questionable. Fortunately, our Theorem \ref{thm1} provides a clean positive answer.
    \item[2.] When $m<n$, we are not clear whether the claim ``GD converges to global-min in narrow nets'' is true or not. In practice, GD easily got stuck at a large-loss stationary point in narrow-net training, but the wide nets are much easier to reach near-0 loss. So what we did is NOT proving GD works in narrow nets, but designing an algorithm and showing it works (at least under certain conditions).
    \item[3.] The empirical motivation is different. Existing NTK papers tried to ``explain" why wide networks work well. We aim to ``design" methods for training narrow networks, a topic of great interest for practitioners with limited computation resources and on-device AI. 
\end{itemize}

}

\section{Additional Related Work}\label{appendix:morerelatedwork}

We provide discussions on other related work (in addition
to those closely related ones mentioned in the main body).

\textbf{Memorization of small-width networks.}
 \BLACK{ Yun et al. \cite{yun2018small2} studies how many neurons are required for a multi-layer ReLU network to memorize $n$ samples. 
 In particular, they proved that if there are at least three layers, then the number of neurons per layer needed
 to memorize the data can be $O(\sqrt{n})$. 
They also show that if initialized near a  global-min of the empirical loss, then SGD quickly finds a nearby point with much smaller loss. However, they did not mention how to find such an initialization strategy.
}

\textbf{Convergence analysis of $O(n)$-width networks.}
Zhou et al. \cite{zhou2021local} studies the local convergence theory of mildly over-parameterized 1-hidden-layer networks. They show that, as long as the initial loss is low, GD converges to a zero-loss solution.
They further propose an initialization strategy that provably returns a small loss under a mild assumption on the width. However, their proposed initialization may be costly to find since it requires solving an additional optimization problem.

\textbf{Convergence analysis of narrow networks.}
A few recent works 
\cite{nitanda2019gradient,ji2019,chen2019much} showed that under a ``$\gamma$-margin neural tangent separability'' condition, GD converges to global minimizers 
for training 2-layer ReLU net with width \footnote{Their width also depends  on desired accuracy 
$\epsilon$, and we skip the dependence here.}
 $\text{poly}( \log n,  \gamma  ) $.
 For certain special data distributions where $\gamma 
 = \text{poly}(\log n)$, their width is $\text{poly}(\log n, \log \frac{1}{ \epsilon })$.
 Nevertheless, for general data distribution,
 their width can be larger than $ n $. 

\textbf{Global landscape analysis.}
There are many works on the global landscape analysis; see, e.g.,
\cite{kawaguchi2016deep,lu2017depth, laurent2018deep, nouiehed2018learning, zhang2019depth, nguyen2018loss,li2018over, ding2019spurious,liang2018understanding,liang2018adding,liang2019revisiting,liang2021ReLU,venturi2018spurious, safran2017spurious,soltanolkotabi2019theoretical} and the surveys \cite{sun2020global,sun2020optimization}.
For networks with width less than $n$,
the  positive result on the landscape either
requires special activation like quadratic activation
\cite{soltanolkotabi2019theoretical}, or special data distribution like linearly separable or two-subspace data \cite{liang2018understanding}. 
For certain non-quadratic activations, it was shown that
sub-optimal strict local minima can exist
for networks with width less than $n $ \cite{li2018over}.
These results are different from ours in the scope,
since they discuss \textit{global landscape} of unconstrained problems,
while we discuss local landscape.

\section{More Discussion on the Related Work: Daniely \cite{daniely2019neural}}
\label{appendix:relatedwork}

Daniely \cite{daniely2019neural} studies the expressivity of 1-hidden-layer networks. They provide two results under different scenarios: to memorize  $n(1-\epsilon)$ random input-binary-label pairs via SGD, the required width is either
\begin{itemize}
    \item[(i)]  $\tilde{O}\left(\frac{n}{\epsilon^{2}}\right)$ (\cite[Theorem 5]{daniely2019neural}); or 
    \item[(ii)] $\tilde{O}( n / d)$ (\cite[Theorem 7]{daniely2019neural}),
    where $\tilde{O}$ hides a factor
    that is dependent on $ n $ and $ d $. 
\end{itemize}

In this section, we will explain our claim in Section \ref{section:relatedwork} that
 their required width can be much
 larger than ours. 

\paragraph{Major differences} In our work,  our required width is smaller than $n$ (when $d > 2$). However, in either result (i) or (ii), their required width is often much larger than $n$. In fact, their width is actually at least $O\left(n^{2}\right)$ or larger for fixed $d$,
if we consider the effect of $\epsilon$ (for
their first bound) or
the hidden factor
in $\tilde{O}$ (for their second bound). 

We will elaborate below.

For their result (i), similar to Bubeck et al. \cite{bubeck2020network}, their  required width grows with $\epsilon$. As the authors pointed out before their Theorem 7 on page 8, the required width is $O(n^3)$ if $\epsilon \leq 1/ d $. This is why they add result (ii) in \cite[Theorem  7]{daniely2019neural}.

For their result (ii), i.e.,
\cite[Theorem  7]{daniely2019neural} 
their required width
 is roughly $ O( n / d  \left( \log (d \log n)\right)^{\log n}) $. 
 Compared to the desired
 bound $O(n/d)$, their actual bound
 contains an additional factor of roughly
 $\left( \log (d \log n)\right)^{\log n}$.
We would like to stress that the additional factor is {\it not} a constant factor or a log-factor, but more like a ``super-polynomial-factor''.
As a result, the required width is actually at least $O(n^2/d)$ or larger. %
The detailed explanations
are provided next, and briefly summarized below.

\begin{itemize}
    \item  In %
    Appendix \ref{subsec: appen A1 width bound},
    we will explain why there exists an extra ``$\left(\log (d \log n)\right)^{\log n}$'' term in their required width bound. In \cite[Theorem  7]{daniely2019neural}, their statement only mentioned $\tilde{O}(n / d)$ but not the exact expression. 
    
    \item In Appendix \ref{subsec: appen A2 superpolynomial},
    we will explain why their bound is at least $O(n^2)$ or larger for fixed $d$.
    \item In 
    Appendix \ref{subsec: appen A3 other diff}, we will summarize some other differences.
\end{itemize}

\subsection{Identifying a More Precise Width Bound}\label{subsec: appen A1 width bound}
To find a more precise width bound of \cite[Theorem  7]{daniely2019neural}, we tracked the proof
 of \cite{{daniely2019neural}} as follows (note that their width $q$ is our $m$ and their sample size $m$  is our $n$, and we will use our notation below).

  \textbf{The desired result.} Their goal is to memorize the $n(1-\epsilon)$ random input-label pairs via SGD.
  
\textbf{Notation.}
 They consider binary labels, i.e., $y_i= \{+1, -1\}$ for $i=1, \cdots, n$.  
 They consider a feature vector $\Psi_{\boldsymbol{\omega}}\left(\mathbf{x}_{i}\right) \in \mathbb{R}^{dm}$, where $\boldsymbol{\omega} \in \mathbb{R}^{d \times m} $. The predictor (classifier) is in the form of $f_{\boldsymbol{\omega},\mathbf{v}}(\mathbf{x}):=\mathbf{v}^T \Psi_{\boldsymbol{\omega}}\left(\mathbf{x}_{i}\right)$, where $\mathbf{v}\in \mathbb{R}^{dm} $. 
 
 \textbf{Step 1} 
 \textbf{The desired result
 holds if there exists $ \mathbf{v}$ such that
 a certain condition holds. 
  }

 More specifically, they
 have shown that the desired
 result holds if the following claim holds:
there exists certain $\mathbf{v}$, such that
  w.h.p. over the dataset and $\boldsymbol{\omega} \in \mathbb{R}^{d \times m} $, 
\begin{equation} \label{daniely goal}
    \left\langle \mathbf{v}, \Psi_{\boldsymbol{\omega} }\left(\mathbf{x}_{i}\right)\right\rangle=y_{i}+o(1), \quad \forall i =1,\cdots, n.
\end{equation}
Thus, the goal becomes to find $\mathbf{v}$ such
that \eqref{daniely goal} holds. 
This goal is stated in the first paragraph of Section 4.3, page 15.  Though they did not define it  explicitly, $o(1)$ in \eqref{daniely goal}  means `` a constant smaller than 1''. 
 
The relation between  \eqref{daniely goal} and
the desired result is independent of the calculation of the width bound. Anyhow, for completeness, we briefly explain why \eqref{daniely goal} leads to the desired result (most readers can skip the paragraph).
 First, \eqref{daniely goal} implies
 that $f $ memorized the dataset for this
 $ ( \mathbf{v}, \Psi_{\boldsymbol{\omega}} ) $.
In fact, if \eqref{daniely goal} holds, 
then $f_{\boldsymbol{\omega},\mathbf{v}}(\mathbf{x}_i)$ has the same sign as $y_i+o(1)$.
Note that $y_{i}+o(1)$ shares the same sign as $y_{i}$ because  $y_i$ is assumed to be either $+1$ or $-1$ and $o(1)$ is a constant smaller than 1.
Together we conclude that $f_{\boldsymbol{\omega},\mathbf{v}}(\mathbf{x}_i)$ shares the same sign as  $y_i$ for any $i$, which means
the predictions 
$ \text{sign} 
(f_{\boldsymbol{\omega},\mathbf{v}}(\mathbf{x}_i) )
 = y_i$.
Second, such a $\mathbf{v}$ can be reached approximately by SGD (see Algorithm 2 on page 10
of \cite{daniely2019neural}). This requires
an argument that we skip here.

 \textbf{Step 2:} 
 \textbf{There exists $\mathbf{v}$ which satisfies
 a relation specified below. }
 
 More specifically, they proved the following result. 
\begin{myTheo}[Theorem 16, page 16, Daniely \cite{daniely2019neural}]
\label{thm_daniely16}
W.p. $1-\delta-2^{-\Omega(d)}$ over the choice of dataset and $\boldsymbol{\omega}$, there exists a $\mathbf{v}$ such that the following for all $i$:
 \begin{equation} \label{eq_daniely16}
      \left\langle\mathbf{v}, \Psi_{\boldsymbol{\omega}}\left(\mathbf{x}_{i}\right)\right\rangle=f_{\boldsymbol{\omega}}\left(\mathbf{x}_{i}\right)=y_{i}+O\left(\frac{  \left(\log (d / \delta) \right)^{\frac{c^{\prime}}{2}}}{d}\right)+O\left(\sqrt{\frac{d^{c-1} \left(\log (d / \delta)\right)^{c^{\prime}+2}}{m}}\right).
  \end{equation}

\end{myTheo}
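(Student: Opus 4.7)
The plan is to reproduce Daniely's Theorem 16 via a two-stage random-feature argument. In the first stage I would pass to the infinite-width limit and identify the kernel $K(x,x') := \mathbb{E}_{\boldsymbol{\omega}}\langle \Psi_{\boldsymbol{\omega}}(x), \Psi_{\boldsymbol{\omega}}(x')\rangle$ associated with the feature map $\Psi_{\boldsymbol{\omega}}$. Using the analyticity / polynomial-expansion structure of the activation (which is what the constants $c,c'$ in the bound are tracking), I would show that the population kernel matrix $\mathbf{K} := [K(\mathbf{x}_i,\mathbf{x}_j)]_{i,j=1}^n$ is well-conditioned on random inputs in $\mathbb{R}^d$: specifically, with probability at least $1-\delta$, its smallest eigenvalue satisfies $\lambda_{\min}(\mathbf{K}) = \Omega(1)$ provided $n \lesssim d^c$, with the usual $\operatorname{polylog}(d/\delta)$ slack appearing in the first error term of \eqref{eq_daniely16}. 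This step already absorbs the $2^{-\Omega(d)}$ failure probability coming from the concentration of $\|\mathbf{x}_i\|$ and pairwise inner products on the sphere.

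In the second stage I would construct the candidate $\mathbf{v}$ by solving the population kernel-regression problem and then pulling it back to feature space. Concretely, set $\boldsymbol{\alpha} := \mathbf{K}^{-1}\mathbf{y}$ and define $\mathbf{v} := \sum_{i=1}^n \alpha_i \Psi_{\boldsymbol{\omega}}(\mathbf{x}_i)$. Then
\begin{equation}
\langle \mathbf{v}, \Psi_{\boldsymbol{\omega}}(\mathbf{x}_j)\rangle = \sum_{i=1}^n \alpha_i \langle \Psi_{\boldsymbol{\omega}}(\mathbf{x}_i), \Psi_{\boldsymbol{\omega}}(\mathbf{x}_j)\rangle = (\widehat{\mathbf{K}}\boldsymbol{\alpha})_j,
\end{equation}
where $\widehat{\mathbf{K}}$ is the empirical random-feature Gram matrix. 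Writing $(\widehat{\mathbf{K}}\boldsymbol{\alpha})_j = y_j + ((\widehat{\mathbf{K}}-\mathbf{K})\boldsymbol{\alpha})_j$, the remaining task is to bound the deviation $\widehat{\mathbf{K}}-\mathbf{K}$. I would use a Bernstein / Hoeffding concentration over the $m$ i.i.d.\ draws of hidden weights, controlling $\max_{i,j}|\widehat{K}_{ij}-K_{ij}|$ by $O(\sqrt{\operatorname{polylog}(d/\delta)/m})$. Combined with the norm bound $\|\boldsymbol{\alpha}\|_\infty \le \|\mathbf{K}^{-1}\|\|\mathbf{y}\|$ (where $\|\mathbf{y}\| = \sqrt{n}$ and $\|\mathbf{K}^{-1}\| = O(1)$ from stage one), and the factor $d^{c-1}$ that appears from bounding the $\ell^1$-to-$\ell^\infty$ blow-up through the $n \lesssim d^c$ sample-size regime, this yields the second error term in \eqref{eq_daniely16}. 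The first error term $O((\log(d/\delta))^{c'/2}/d)$ comes from the truncation of the kernel's polynomial expansion at the degree dictated by $c'$, i.e., the discrepancy between the ``dual'' kernel used to represent $y_i$ and its finite-degree surrogate.

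A cleaner way to organize the construction would be to first pick a dual function $f_{\boldsymbol{\omega}}(\cdot) = \langle \mathbf{v}, \Psi_{\boldsymbol{\omega}}(\cdot)\rangle$ whose population counterpart fits $\mathbf{y}$ exactly, and then do a union bound over the $n$ test points. One then pays $\log n$ in the concentration, which is hidden inside the $\log(d/\delta)$ factors once one remembers that Daniely is in the regime $n \le d^c$ so $\log n = O(\log d)$. The high-probability statement with failure $\delta + 2^{-\Omega(d)}$ is then a union bound over the kernel-conditioning event, the feature-concentration event, and the anticoncentration of $\|\mathbf{x}_i\|$.

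The main obstacle, and the place where the exponents $c$ and $c'$ of the activation really bite, is in stage one: proving that $\lambda_{\min}(\mathbf{K}) = \Omega(1)$ for random $\mathbf{x}_i$ on the sphere when $n$ is as large as $d^c$. This requires a Hermite (spherical harmonic) decomposition of $K$, a lower bound on the $c$-th Hermite coefficient of $\sigma$ (this is where Assumption~\ref{assum2}-style ``infinitely many nonzero Taylor coefficients'' is essential), and a matrix concentration bound on the harmonic components of degrees $\le c$ to show that the kernel's ``degree-$c$ slice'' dominates the spectrum. All other steps are fairly standard random-feature bookkeeping once this eigenvalue lower bound is in hand.
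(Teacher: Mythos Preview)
The paper does not prove this statement. Theorem~\ref{thm_daniely16} is not a result of the present paper at all: it is a verbatim citation of Theorem~16 from Daniely~\cite{daniely2019neural}, quoted in Appendix~\ref{appendix:relatedwork} as part of the authors' bookkeeping of Daniely's argument. The surrounding text makes this explicit (``More specifically, \emph{they} proved the following result''), and the theorem is invoked as a black box in Step~2 of their width-bound derivation; the authors then immediately move to Step~3, where they plug in the values of $c$, $c'$, and $\delta$ to extract the width requirement~\eqref{expression of m second}. There is therefore no ``paper's own proof'' to compare your proposal against.

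Your sketch is a plausible reconstruction of how a random-feature memorization result of this shape could be obtained, and the two-stage structure (population kernel conditioning followed by empirical-to-population concentration) is indeed in the spirit of Daniely's paper. But none of that analysis belongs here: for the purposes of the present work you should simply cite the result and move on, exactly as the authors do.
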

 
 Note that in Theorem \ref{thm_daniely16}, they did not explicitly write out the expression of $c$ and $c^\prime$. The definition of these two constants can be seen in Section 3.2 and Section 4.3
 of  \ref{thm_daniely16} respectively. 
 We will discuss them in Step 3.

\textbf{Step 3}: \textbf{Identifying a condition on $m$
so that the obtained bound \eqref{eq_daniely16} in Step 2 implies the desired bound \eqref{daniely goal} in Step 1. }
   
   We demonstrate the detailed derivation next. 
\begin{itemize}
  
  \item[Step 3.0:] 
  To achieve the goal of \eqref{daniely goal},  the 3rd term of \eqref{eq_daniely16} needs to be no more than 1. This term is the crucial part to derive the width bound on $m$. We provide detailed analysis as follows.

    \item[Step 3.1:] 
  Now we need to make sure the 3rd term of \eqref{eq_daniely16} is no more than 1, which
  is equivalent to (ignoring constant-factors)
    \begin{equation}\label{expression of m zero}
        m \geq d^{c-1}(\log (d / \delta))^{(c'+2)},
    \end{equation}
    where $c^{\prime} \geq 4 c+2$. This definition of $c^\prime$ can be seen in the first paragraph of Section 4.3 on page 15;
    $c, \delta $ are specified in the next two steps. 
    
    \item[Step 3.2:] Identify 
    $ \delta = 1 / \log n$. This is specified in the paragraph below Theorem 16, page 16. Plugging it into \eqref{expression of m zero}, we have
    
        \begin{equation}\label{expression of m zero zero}
        m \geq d^{c-1}(\log (d \log n))^{(c'+2)}.
    \end{equation}
    
     \item[Step 3.3:]  Identify $c=\log n / \log d$.
    The definition of $c$ first appeared in the first paragraph of Section 3.2, where they assume the number of samples is $n=d^{c}$.  This definition is used in the first paragraph of Section 4.3, with
    a slightly different form $n/d=d^{c-1}$.
    This definition implies $c=\log n / \log d$. Further, we have
    \begin{equation}\label{expression of cprime}
        c^\prime \geq 4 c+2 \geq 4 c +2 =4 \log n / \log d +2  
    \end{equation}
    Plugging  \eqref{expression of cprime} and $d^{c-1}=n/d$  into \eqref{expression of m zero zero},
    we have:
    \begin{equation}\label{expression of m first}
    m \geq \frac{n}{d}(\log (d \log n))^{ 4 \log n / \log d +2  }.
    \end{equation}

\end{itemize}

Finally, ignoring the numerical constants,
the bound becomes 
 \begin{equation}\label{expression of m second}
    m \geq 
O( \frac{n}{d}(\log (d \log n ))^{ \log n/\log d }).
    \end{equation}

\subsection{Why The Bound is ``Super-Polynomial''}\label{subsec: appen A2 superpolynomial}

Now we explain why their bound \eqref{expression of m second} is at least $O(n^2)$ or larger for fixed $d$. The bound \eqref{expression of m second} is
a bit complicated as it depends on both
$d $ and $ n $.
We are more interested in its dependence
on $ n $, thus we fix $d $ and 
analyze how it scales with $ n $.
With fixed $d$, the exponent $ \log n / \log d$
can be simplified to $ \log n $,
thus the bound \eqref{expression of m second}
can be simplified to 
\begin{equation}\label{expression of m third}
    m \geq 
O( \frac{n}{d} (\log (d \log n ))^{ \log n }).
    \end{equation}
 We will show that this bound \eqref{expression of m second} is at least $O(n^2)$ or larger for fixed $d$. 

Define 
$ B_1 = ( \log d)^{\log n}$
and $B_2 = [\log (\log n)]^{\log n} $.
From  \eqref{expression of m third} we obtain
\begin{equation}\label{expression of m fourth}
 m \geq  O( \frac{n}{d}
  \max \left\{B_{1}, B_{2}\right\} ). 
 \end{equation}

The extra factor 
$ \max \left\{B_{1}, B_{2}\right\}  $
is  {\it not} a constant factor or a log-factor, but more like a ``super-polynomial'' factor on $n$, as explained below.

\begin{itemize}
    \item  For $B_{1}$, consider the fixed input dimension $d$ for two cases.
    \begin{itemize}
        \item For $d$ satisfying $\log d>2.7$ (i.e. $d>15$), their required width is actually $O\left(\frac{n^{2}}{d}\right)$. This is an order of magnitude larger than our bound $O(n / d)$.
        \item For $d$ satisfying $\log d>2.7^{2}$ (i.e. $d>1395$), the required width is actually $O\left(\frac{n^{3}}{d}\right)$, two orders of magnitude larger than $O(n / d)$.
    \end{itemize}
    \item For $B_2$,  when $n>2.7^{2.7^{27}} \approx 2.2 \times 10^{6}$, we have $\left(\log (\log (n))^{\log n}>n\right.$. As a result, the required width is at least $O\left(\frac{n^{2}}{d}\right)$.
\end{itemize}

Theoretically speaking, 
 it is not hard to prove that their required width can be larger than $n^{k}$ for any fixed integer $k$ (which is why we say their bound
 is ``super-polynomial''). 
Empirically speaking, 
a calculation of their bound for a real
dataset can reveal how large it is. 
On CIFAR-10 dataset \cite{krizhevsky2009learning}, $n=50000, d=3072$.  Plugging these numbers
into \eqref{expression of m second} (ignore $O(\cdot)$)
we obtain $m \geq 58290499136 >>n $.
In comparison, our required width is only $m \geq 2 n / d \approx 33 $, which is much smaller than $n=50000$.
 
In summary, rigorously speaking, their required width $\tilde{O}(n / d)$ is $\operatorname{not} O(n / d)$, but can be larger than $O\left(\frac{n^{2}}{d}\right)$ or even $O\left(\frac{n^{3}}{d}\right)$.

\subsection{Other Differences}\label{subsec: appen A3 other diff}

Besides the above discussion, there are some other differences between Daniely \cite{daniely2019neural} and our work.

First, they analyze SGD, and we analyze a constrained optimization problem and projected SGD. This may be the reason why we can get a stronger bound on width. In the experiments in Section \ref{section:experiment}, we observe that SGD performs badly when the width is small (see the first left column in (b), Figure \ref{synthetic:loss}). Therefore, we suspect an algorithmic change is needed to train narrow nets with such width (due to the training difficulty), and we indeed propose a new method to train narrow nets.

Second, they consider binary $\{+1,-1\}$ dataset, while our results apply to arbitrary labels. 
In addition, their proof seems to be highly dependent on the fact that the labels are $\{+1,-1\}$,
and seems hard to generalize to general
labels.

\section{Definition and Notations} \label{appendix:notation}
Before going through the proof details, we restate some of the important notations that will repeatedly appear in the proof, the following notations are also introduced in Section \ref{section:setting}.

We denote $\left\{\left(x_{i}, y_{i}\right)\right\}_{i=1}^{n} \subset \mathbb{R}^d \times \mathbb{R}$ as the  training samples, where $x_i \in \mathbb{R}^d$, $y_i \in \mathbb{R}$. 
For theoretical analysis, 
we focus on 1-hidden-layer neural networks  $f(x ; \theta)=\sum_{j=1}^{m} v_{j} \sigma\left(w_{j}^{T} x\right) \in \mathbb{R
}$, where $\sigma(\cdot)$ is the activation function, $w_{j} \in \mathbb{R}^{d}$ and $v_{j} \in \mathbb{R}$ are the parameters to be trained. 
To learn such a neural network, we search for the optimal parameter $\theta=(w, v)$ by minimizing the empirical loss:
\begin{equation*}
  \min _{\theta} \ell(\theta)=\frac{1}{2} \sum_{i=1}^{n}\left(y_{i}-f\left(x_{i} ; \theta\right)\right)^{2}.
\end{equation*}
Sometimes we also use $\ell(w;v)$ or $f(w;x,v)$ to emphasize the role of $w$. 

We use the following shorthanded notations:
\begin{itemize}
    \item $x:=(x_1^T;\dots;x_n^T)\in \mathbb{R}^{n\times d}$, $y:=(y_1,\dots,y_n)^T \in \mathbb{R}^{n}$;
    \item $w:=(w_{1},\dots, w_m)_{j=1}^m \in \mathbb{R}^{d\times m}$, $v:=(v_{1}, \dots, v_m)_{j=1}^m \in \mathbb{R}^{m}$;
    \item $w_{a,b}$ indicates the $b$-th component of $w_a \in \mathbb{R}^d$; 
    \item $\theta^0=(w^0, v^0)$ indicates the initial parameters. Unless otherwise stated, it means the parameters at {\it the mirrored} LeCun's initialization given in Algorithm \ref{initial};
    \item $f(w;v):=(f(x_1;w,v),f(x_2;w,v),\dots, f(x_n;w,v))^T \in \mathbb{R}^{n}$, indicating the neural network output on the whole dataset $x$.
    \item  Define $\ell\circ f = \frac{1}{2}\|y-f\|_2^2$.
\end{itemize}
  We denote the Jacobian matrix of $f(w;v)$ w.r.t $w$ as {\tiny
$$J(w;v):=\left[\begin{array}{c}\nabla_w f\left(w ; x_{1}, v\right)^{T} \\ \vdots \\  \nabla_w f\left(w ; x_{n}, v\right)^{T} \end{array}\right]  =\left[\begin{array}{ccc}v_{1} \sigma^{\prime}\left(w_{1}^{T} x_{1}\right) x_{1}^T & \cdots & v_{m} \sigma^{\prime}\left(w_{m}^{T} x_{1}\right) x_{1}^T \\ &\vdots & \\ v_{1} \sigma^{\prime}\left(w_{1}^{T} x_{n}\right) x_{n}^T & \cdots & v_{m} \sigma^{\prime}\left(w_{m}^{T} x_{n}\right) x_{n}^T\end{array}\right]\in \mathbb{R}^{n \times md}.$$}%
We define the feature matrix 
{\tiny
\begin{align*}
    \Phi(w):=\left[\begin{array}{c} \sigma\left(w_{1}^{T} x_1\right),\dots,\sigma\left(w_{m}^{T} x_1\right)\\ \vdots \\  \sigma\left(w_{1}^{T} x_n\right),\dots,\sigma\left(w_{m}^{T} x_n\right) \end{array}\right]\in \mathbb{R}^{n \times m}.
\end{align*}
 }%
 
\section{Proof of Theorem \ref{thm1} } \label{appendix:thm1thm2}

The proof of Theorem \ref{thm1} consists of two parts: when the hidden weights $w$ is in the neighborhood of the mirrored LeCun's initialization, we have (i) There exists a global-min with 0 loss, (ii) every stationary point is a global-min. We prove these two arguments respectively. The first part (argument (i)) can be seen in Appendix \ref{appendix:thm1}, the second part (argument (ii)) can be seen in Appendix \ref{appendix:thm2}.

\subsection{Proof of The First Part of Theorem \ref{thm1}}
\label{appendix:thm1}

To prove the first part of Theorem \ref{thm1}, we use the Inverse Function Theorem (IFT) \cite{IFT} at the mirrored LeCun's initialization. IFT is stated below. 

\begin{myTheo}[Inverse function theorem (IFT)]\label{inv}
  Let $\psi: U \rightarrow \mathbb{R}^{n}$ be a $C^{1}$ -map where $U$ is open in $\mathbb{R}^{n}$ and $w \in U.$ Suppose that the Jacobian $J\left(w\right)$ is invertible. There exist open sets $W$ and $F$ containing $w$ and $\psi \left(w\right)$ respectively, such that the restriction of $\psi$ on $W$ is a bijection onto $F$ with a $C^{1}$ -inverse.
\end{myTheo}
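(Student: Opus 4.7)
The plan is to follow the standard contraction-mapping proof of the Inverse Function Theorem. First, I would reduce to a normalized case by an affine change of coordinates: translating in both the domain and the codomain and composing with $J(w)^{-1}$, one may assume without loss of generality that $w=0$, $\psi(0)=0$, and $J(0)=I$. This reduction is legitimate because invertibility of the Jacobian, openness of $U$, and the $C^1$ property are all preserved under affine invertible maps, and establishing the conclusion in this normalized setting yields the general conclusion by reversing the coordinate change.

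Next, I would introduce the auxiliary map $\varphi(x) := x - \psi(x)$, noting that $\varphi'(0)=0$. By continuity of $\psi'$, there is a radius $r>0$ such that $\|\varphi'(x)\|\leq 1/2$ on the closed ball $\overline{B_r(0)}$ and such that $J(x)$ remains invertible throughout this ball (by continuity of the determinant). The mean value inequality then gives the Lipschitz estimate $\|\varphi(x)-\varphi(x')\|\leq \tfrac{1}{2}\|x-x'\|$ on $\overline{B_r(0)}$. For each fixed $y$ with $\|y\|\leq r/2$, I would consider the self-map $T_y(x) := y+\varphi(x)$, verify that $T_y$ sends $\overline{B_r(0)}$ into itself (using $\|T_y(x)\|\leq \|y\|+\|\varphi(x)-\varphi(0)\|\leq r/2 + r/2$), and apply the Banach fixed-point theorem to obtain a unique $x\in \overline{B_r(0)}$ solving $\psi(x)=y$. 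This constructs the set-theoretic inverse on $B_{r/2}(0)$.

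To promote this to open sets with a $C^1$ inverse, I would define $F := B_{r/2}(0)$ and $W := \psi^{-1}(F)\cap B_r(0)$; since $\psi$ is continuous, $W$ is open, and the restricted $\psi\colon W\to F$ is a bijection. The contraction bound yields $\|\psi^{-1}(y)-\psi^{-1}(y')\|\leq 2\|y-y'\|$, so $\psi^{-1}$ is (Lipschitz) continuous. For differentiability at $y_0=\psi(x_0)$ with candidate derivative $J(x_0)^{-1}$, I would expand $\psi(x)-\psi(x_0) = J(x_0)(x-x_0)+R(x,x_0)$ with $R=o(\|x-x_0\|)$, substitute $x=\psi^{-1}(y)$, apply $J(x_0)^{-1}$, and use the Lipschitz control on $\psi^{-1}$ to convert the $o(\|x-x_0\|)$ remainder into $o(\|y-y_0\|)$. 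Continuity of $y\mapsto J(\psi^{-1}(y))^{-1}$ then follows from continuity of $\psi'$, continuity of $\psi^{-1}$, and continuity of matrix inversion on the open set of invertible matrices.

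The main obstacle is less a deep technical point than careful bookkeeping of the radii: one must choose $r$ simultaneously small enough that $\varphi$ is a contraction, that $J(x)$ stays invertible, and that $T_y$ maps $\overline{B_r(0)}$ into itself for all $y\in F$. The most delicate verification is the differentiability of $\psi^{-1}$; everything else is either the Banach fixed-point theorem or a routine continuity argument that uses tools already in hand once the Lipschitz estimate on $\psi^{-1}$ and the invertibility of $J$ on $W$ have been secured.
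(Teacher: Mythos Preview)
Your proposal is a correct, textbook contraction-mapping proof of the Inverse Function Theorem, and all the steps you outline (normalization, the contraction $\varphi=x-\psi(x)$, Banach fixed point for local surjectivity, Lipschitz control on the inverse, and the differentiability argument via the first-order expansion) are sound.

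However, there is nothing to compare against: the paper does not prove this statement. The Inverse Function Theorem is merely \emph{stated} and attributed to an external reference; it is then used as a black-box tool in the proof of Theorem~\ref{thm1}. So your proposal is not so much an alternative route as a self-contained justification of a classical result that the paper (reasonably) takes for granted. If your goal is to match the paper's own argument, you should simply cite the theorem rather than reprove it; if your goal is completeness, what you have written is the standard proof and is fine.
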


Our overall proof idea is as follows:
 we will use IFT to show that:
     then for any $y \in \mathbb{R}^n$ and any small enough $\epsilon$, there
   exists a $w^* \in B_\epsilon(w^0)$ whose prediction output $f(w^*;v^0) \propto y-f(w^0;v^0)$. Additionally, since $f(w^0;v^0)=0$, we have $f(w^*;v^0) \propto y$. Once this is shown, then we just need to scale all the outer weight $v_j$ uniformly and the output will be exactly $y$ since $f(w^*;v)$ is linear in $v$. 
More details can be seen as follows.

In our case, let $\psi=f(w;v^0)$ be the function of $w$, mapping from $\mathbb{R}^{md}$ to  $\mathbb{R}^{n}$. It may appears that IFT cannot be directly applied since $md\geq 2n$ (cf. Assumption \ref{assum1}), so $f(w;v^0)$ is not dimension-preserved mapping. %
However, this issue can be alleviated by applying the IFT to {\it a subvector} of $w$, while fixing the rest of the variables.

More specifically, we denote $n=k_1 d+k_2$ with $k_1, k_2 \in \mathbb{N}$, and $w=(\tilde{w}^T,\tilde{w}^{\prime T})^T$, where  
$\tilde{w}=(w_1^T,\cdots, w_{k_1}^T,w_{k_1+1,1},\cdots, w_{k_1+1,k_2})^T \in \mathbb{R}^{n}$ and $\tilde{w}^{\prime}=(w_{k_1+1,k_2+1},\cdots,w_{k_1+1,d},w_{k_1+2}^T,\cdots, w_{m}^T )^T\in \mathbb{R}^{md-n}.$ 
Here, $w_{a,b}$ indicates the $b$-th component of $w_a \in \mathbb{R}^d$.

We now apply IFT  to $f(\tilde{w};v^0,\tilde{w}^{\prime 0}) \in \mathbb{R}^n$ (this notation views $\tilde{w}$ as the variable and $v^0,\tilde{w}^{\prime 0}$ are treated as parameters).  
Firstly, in Lemma \ref{fullrank}, we prove that w.p.1, the corresponding Jacobian matrix $J(\tilde{w}^0;v^0,\tilde{w}^{\prime 0}) \in \mathbb{R}^{n\times n}$ is of full rank at the mirrored LeCun's initialization, so that the condition for IFT holds.
Then, by IFT, there exist open sets $W$ and $F$ containing $\tilde{w}^0$ and $f(\tilde{w};v^0,\tilde{w}^{\prime 0})$ respectively, such that the restriction of $f(\tilde{w};v^0,\tilde{w}^{\prime 0})$ on $W$ is a bijection onto $F$. Here, we denote $\epsilon$ and $\delta$ as the radius of $W$ and $F$, respectively.

Now, since $f(\tilde{w}^0;v^0,\tilde{w}^{\prime 0})=f(w^0;v^0)=0 \in \mathbb{R}^n$, set $F$ contains all possible directions pointed from the origin.   That is to say, for any label vector $y \in \mathbb{R}^n$, we can always scale it using $\delta$, such that  $\delta \frac{y}{\|y\|} \in F$, and then, by IFT ,
there exists a $\tilde{w}^* \in B_\epsilon(\tilde{w}^0) \subset W$ satisfying 

  \begin{equation} \label{thm1:mapping}
    f(\tilde{w}^*;v^0,\tilde{w}^{\prime 0})=\delta \frac{y}{\|y\|}.
  \end{equation}

  Since $\tilde{w}$ is just the truncated version of $w$, (\ref{thm1:mapping}) implies: there exists a $w^* \in B_\epsilon(w^0)$, s.t. 

\begin{equation} \label{thm1:mapping2}
  f(w^*;v^0)=\delta \frac{y}{\|y\|}.
\end{equation}
  
 Now, we scale the outer weight to $v^*=\frac{\|y\|}{\delta}v^0$ and the output will be exactly $y$, i.e.:

  $$f(w^*;v^*)=y.$$

  Therefore, the proof is concluded.

 \BLACK{
 \paragraph{Remark: ``there exists an $\epsilon$'' or ``any small $\epsilon$''?}
 Readers may mention that IFT states ``there exists a neighborhood with size $\epsilon$'', however, Theorem \ref{thm1} claims for ``any small enough $\epsilon$''.  We would like to clarify that the statement of Theorem \ref{thm1} is {\it not} a typo. Here is the reason: in the statement of IFT, ``existence of a small neighborhood'' will imply ``IFT holds for any subset of this neighborhood'', so actually, Theorem \ref{thm1} holds for any small (enough) neighborhood with size $\epsilon$.}

  \begin{lemma}\label{fullrank}
    Under Assumption \ref{assum1}, \ref{assum2} and \ref{assum3}, as a function of $w$, $v$ and $x$, $J(\tilde{w}^0;v^0,\tilde{w}^{\prime 0}) \in \mathbb{R}^{n\times n}$ is of full rank at the mirrored LeCun's initialization, w.p.1..
  \end{lemma}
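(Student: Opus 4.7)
The plan is as follows.

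\emph{Reduction to an unconstrained random initialization.} First, I would observe that the mirrored constraint only ties the weights of neurons $m/2+1,\ldots,m$ to those of neurons $1,\ldots,m/2$, so the latter half are i.i.d.\ Gaussian and genuinely random. Since Assumption \ref{assum1} gives $m$ even and $m \geq 2n/d$, the decomposition $n=k_1 d + k_2$ with $0\le k_2<d$ implies $k_1+1\le m/2$. Consequently both $\tilde w^0$ and $\tilde w'^0$ only involve coordinates of the first $k_1+1$ neurons, all of which are unconstrained i.i.d.\ Gaussians, and the outer weights $v^0_1,\ldots,v^0_{k_1+1}$ are likewise unconstrained i.i.d.\ Gaussians. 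Hence, for the purpose of proving full-rankness of the $n\times n$ submatrix, the mirrored initialization reduces to an ordinary LeCun initialization on the relevant coordinates.

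\emph{Analyticity and the measure-zero dichotomy.} Every entry of the $n\times n$ submatrix $J(\tilde w^0; v^0,\tilde w'^0)$ is a finite product of $v^0_j$, $\sigma'(w^{0\,T}_j x_i)$ and a coordinate $x_{i,l}$. By Assumption \ref{assum2}, $\sigma$ (and hence $\sigma'$) is real-analytic, so each entry, and therefore the determinant $D$, is a real-analytic function of the free parameters $(v^0_1,\ldots,v^0_{k_1+1},\,w^0_1,\ldots,w^0_{k_1+1},\,x_1,\ldots,x_n)$ on an open connected domain. A standard fact (e.g., Mityagin's lemma) states that a real-analytic function on a connected open set either vanishes identically or on a Lebesgue-null set. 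Since the Gaussian initialization and the continuous data distribution of Assumption \ref{assum3} are absolutely continuous w.r.t.\ Lebesgue measure, showing $D\not\equiv 0$ immediately yields $D\ne 0$ w.p.$1$, which is equivalent to the claimed full-rankness.

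\emph{Non-trivial vanishing via infinite Taylor support.} The remaining and genuinely hard step is to exhibit one configuration at which $D\ne 0$. Here I would exploit Assumption \ref{assum2} that $\sigma$ has infinitely many non-zero Taylor coefficients, so $\sigma'(t)=\sum_{k\ge 0} a_k t^k$ with infinitely many $a_k\ne 0$. Substituting this expansion into each entry turns $D$ into a power series in the components of $w_1,\ldots,w_{k_1+1}$ whose coefficients are polynomials in the $v_j$'s and the $x_{i,l}$'s. Choosing each $w_j$ of the form $w_j = t_j \xi_j$ for a free scalar $t_j$ and a suitable direction $\xi_j$, and then picking distinct Taylor orders $k_1,\ldots,k_{k_1+1}$ for which $a_{k_j}\ne 0$, one can extract a specific monomial coefficient that factors (up to non-zero constants) as a product of blocks of the form $[x_{i,l}(\xi_j^T x_i)^{k_j}]$ indexed over the rows of the full submatrix. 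A Vandermonde-type argument, using the fact that $x_1,\ldots,x_n$ are distinct w.p.$1$ and that infinitely many non-zero $a_k$ are available so that the exponents $k_j$ can be freely chosen, shows this coefficient is a non-zero polynomial in $(x_1,\ldots,x_n)$. The main obstacle I anticipate is the bookkeeping for the last neuron, which contributes only $k_2<d$ columns rather than $d$; this asymmetry forces the Vandermonde/decoupling argument to accommodate an incomplete block, and is the step most likely to require careful index juggling. Once this is in place, analyticity (paragraph 2) closes the argument.
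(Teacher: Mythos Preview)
Your first two paragraphs (reduction to the first $k_1+1$ unconstrained neurons, and the analyticity/Mityagin dichotomy) are correct and match the paper exactly. The divergence is entirely in how you propose to show $D\not\equiv 0$.

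The paper does \emph{not} attack the full $n\times n$ determinant via a Taylor--Vandermonde expansion in the weights. Instead it treats $D$ as analytic jointly in $(w,v,x)$ and exhibits a single \emph{data} configuration $x$ at which the matrix becomes block-diagonal: for $j=1,\ldots,k_2$ it places $k_1+1$ samples along the $j$-th coordinate axis (with distinct nonzero values), and the remaining samples along the axes $k_2+1,\ldots,d$. This kills all off-diagonal blocks, leaving $k_2$ square blocks $B_{j,j}\in\mathbb{R}^{(k_1+1)\times(k_1+1)}$ and one residual square block $E$. The $E$ block is handled by a direct argument (each column group has a single nonzero row under the construction). Each $B_{j,j}$, after stripping the diagonal factors $v_j^0$ and $\delta_i$, is exactly a feature matrix $[\sigma'(w_j^{0T}x_i)]$ of a width-$(k_1+1)$ shallow net on $k_1+1$ samples, and its generic full-rankness is imported from an existing result (Proposition~1 of Li et al.), which is itself the place where the ``infinitely many nonzero Taylor coefficients'' hypothesis is consumed.

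Your route---expand $\sigma'$ in $w$, isolate a monomial, and run a Vandermonde argument directly on the $n\times n$ determinant---is in principle workable and is essentially what underlies the Li et al.\ lemma the paper cites. But by not first decoupling the neurons via a tailored $x$, you force yourself to control cross-terms between column blocks that share the same $\sigma'(w_j^T x_i)$ factor across $d$ (or $k_2$) columns; this is exactly the ``index juggling'' you flag, and it is nontrivial. The paper's approach buys modularity: the hard analytic step is confined to square $(k_1+1)\times(k_1+1)$ feature matrices and outsourced to a known lemma, while your approach would re-derive that lemma in a more entangled setting.
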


  \begin{proof}[Proof of Lemma \ref{fullrank}]
    Recall the Jacobian matrix of $f(\tilde{w};v^0,\tilde{w}^{\prime 0})$ w.r.t. $\tilde{w}$:
    $$J(\tilde{w}^0;v^0,\tilde{w}^{\prime 0})
    :=\left[\begin{array}{c}\nabla_{\tilde{w}} f\left(\tilde{w}^0;x_1,v^0,\tilde{w}^{\prime 0}\right)^{T} \\ \vdots \\  \nabla_{\tilde{w}} f\left(\tilde{w}^0;x_n,v^0,\tilde{w}^{\prime 0}\right)^{T} \end{array}\right] \in \mathbb{R}^{n \times n}. $$

    We first consider a general case where $k_2\neq 0$. Since  $f\left(\tilde{w};x,v,\tilde{w}^{\prime}\right)=f(w;x,v)=\sum_{j=1}^{m} v_{j} \sigma\left(w_{j}^{T} x\right)$, taking derivative w.r.t. $\tilde{w}$ yields $J(\tilde{w}^0;v^0,\tilde{w}^{\prime 0})$ equals to: 

    {\tiny
    \begin{equation} \label{jacobian:squarematrix}
    \left(\begin{array}{cccccc}v_{1}^0 \sigma^{\prime}\left(w_{1}^{0T} x_{1}\right) x_{1}^T & \cdots & v_{k_1}^0 \sigma^{\prime}\left(w_{k_1}^{0T} x_{1}\right) x_{1}^T &v_{k_1+1}^0 \sigma^{\prime}\left(w_{k_1+1}^{0T} x_{1}\right)x_{1,1}& \cdots & v_{k_1+1}^0 \sigma^{\prime}\left(w_{k_1+1}^{0T} x_{1}\right)x_{1,k_2} \\ & &\vdots& &  &  \\  v_{1}^0 \sigma^{\prime}\left(w_{1}^{0T} x_{n}\right) x_{n}^T & \cdots & v_{k_1}^0 \sigma^{\prime}\left(w_{k_1}^{0T} x_{n}\right) x_{n}^T  &v_{k_1+1}^0 \sigma^{\prime}\left(w_{k_1+1}^{0T} x_{n}\right)x_{n,1}& \cdots & v_{k_1+1}^0 \sigma^{\prime}\left(w_{k_1+1}^{0T} x_{n}\right)x_{n,k_2}\end{array}\right).
    \end{equation}

    }

  To prove the full-rankness of $J(\tilde{w}^0;v^0,\tilde{w}^{\prime 0})$, we need to show that w.p.1, $\det (J(\tilde{w}^0;v^0,\tilde{w}^{\prime 0})) \neq 0$. Here, $\det (J(\tilde{w}^0;v^0,\tilde{w}^{\prime 0}))$ is an analytic function since the activation function $\sigma(\cdot)$ is analytic (see Assumption \ref{assum2}). Therefore, we borrow an important result of \cite[Proposition 0]{mityagin2015zero} which states that the zero set of an analytic function is either the whole domain or zero-measure. The result is formally stated as the following lemma under our notation.

  \begin{lemma}\label{jacobian:analytic}
    Suppose that: as a function of $\tilde{w}$, $\tilde{w}^{\prime}$, $v$ and $x$, $\det (J(\tilde{w}^0;v^0,\tilde{w}^{\prime 0})): \mathbb{R}^{md+m+nd} \rightarrow \mathbb{R}$ is a real analytic function on $\mathbb{R}^{md+m+nd}$. If $\det (J(\tilde{w}^0;v^0,\tilde{w}^{\prime 0}))$ is not identically zero, then its zero set $\Omega=\left\{\tilde{w}^0,v^0,\tilde{w}^{\prime 0},x  \mid \det (J(\tilde{w}^0;v^0,\tilde{w}^{\prime 0}))=0\right\}$ has zero measure.
  \end{lemma}

  Based on Lemma \ref{jacobian:analytic}, in order to prove $\det (J(\tilde{w}^0;v^0,\tilde{w}^{\prime 0})) \neq 0$ w.p.1, we only need to prove it is not identically zero. To do so, we first transform $\det (J(\tilde{w}^0;v^0,\tilde{w}^{\prime 0}))$ into its equivalent form:

  \begin{eqnarray}
    (\ref{jacobian:squarematrix})&=&  \det\left([B_1,\cdots, B_{k_2}, C_{k_2+1}, \cdots, C_{d}]\right),
  \end{eqnarray}
  where
  $$ 
  B_j=\left(\begin{array}{cccccc}v_{1}^0 \sigma^{\prime}\left(w_{1}^{0T} x_{1}\right) x_{1,j} & \cdots & v_{k_1+1}^0 \sigma^{\prime}\left(w_{k_1+1}^{0T} x_{1}\right) x_{1,j}  \\ 
    &\vdots&   \\ 
    v_{1}^0 \sigma^{\prime}\left(w_{1}^{0T} x_{n}\right) x_{n,j} & \cdots & v_{k_1+1}^0 \sigma^{\prime}\left(w_{k_1+1}^{0T} x_{n}\right) x_{n,j} \end{array}\right)  \in \mathbb{R}^{n\times (k_1+1)}, \quad j=1,\cdots, k_2,
  $$

  $$
  C_j=\left(\begin{array}{cccccc}v_{1}^0 \sigma^{\prime}\left(w_{1}^{0T} x_{1}\right) x_{1,j} & \cdots & v_{k_1}^0 \sigma^{\prime}\left(w_{k_1}^{0T} x_{1}\right) x_{1,j}  \\ 
    &\vdots&   \\ 
    v_{1}^0 \sigma^{\prime}\left(w_{1}^{0T} x_{n}\right) x_{n,j} & \cdots & v_{k_1}^0 \sigma^{\prime}\left(w_{k_1}^{0T} x_{n}\right) x_{n,j} \end{array}\right)\in \mathbb{R}^{n\times k_1}, \quad j=k_2+1,\cdots, d.
  $$

  In addition, $B_j$ can be further rewritten as:
  {\small
  $$
  B_j=\left(\begin{array}{c}  B_{1,j}\\ 
    \vdots \\
    B_{k_2,j}\\
    D_{j} \end{array}\right) \in \mathbb{R}^{n\times (k_1+1)},
  $$}
  where for $i=1, \cdots k_2$:
  
  {\tiny
  \begin{eqnarray*}
    B_{i,j}&=& \!\!\!\!\!\!\!\!\left(\begin{array}{cccccc}v_{1}^0 \sigma^{\prime}\left(w_{1}^{0T} x_{(k_1+1)(i-1)+1}\right) x_{(k_1+1)(i-1)+1,j} & \cdots & v_{k_1+1}^0 \sigma^{\prime}\left(w_{k_1+1}^{0T} x_{(k_1+1)(i-1)+1}\right) x_{(k_1+1)(i-1)+1,j}  \\ 
      &\vdots&   \\ 
      v_{1}^0 \sigma^{\prime}\left(w_{1}^{0T} x_{(k_1+1)i}\right) x_{(k_1+1)i,j} & \cdots & v_{k_1+1}^0 \sigma^{\prime}\left(w_{k_1+1}^{0T} x_{(k_1+1)i}\right) x_{(k_1+1)i,j} \end{array}\right) \\
      &\in& \mathbb{R}^{(k_1+1)\times (k_1+1)},
  \end{eqnarray*}
  }
 and 
 {\tiny
 \begin{eqnarray*}
  D_{j}&=&\left(\begin{array}{cccccc}v_{1}^0 \sigma^{\prime}\left(w_{1}^{0T} x_{(k_1+1)k_2+1}\right) x_{(k_1+1)k_2+1,j} & \cdots & v_{k_1+1}^0 \sigma^{\prime}\left(w_{k_1+1}^{0T} x_{(k_1+1)k_2+1}\right) x_{(k_1+1)k_2+1,j}  \\ 
    &\vdots&   \\ 
    v_{1}^0 \sigma^{\prime}\left(w_{1}^{0T} x_{n}\right) x_{n,j} & \cdots & v_{k_1+1}^0 \sigma^{\prime}\left(w_{k_1+1}^{0T} x_{n}\right) x_{n,j} \end{array}\right) \\
    &\in& \mathbb{R}^{(n-(k_1+1)k_2)\times(k_1+1)}.
 \end{eqnarray*}
 }

  Similarly, $C_j$ can be further rewritten as:
{\small
  $$
  C_j=\left(\begin{array}{c}  C_{1,j}\\ 
    \vdots \\
    C_{k_2,j}\\
    E_{j} \end{array}\right) \in \mathbb{R}^{n\times (k_1)},
  $$
}
  where for $i=1, \cdots k_2$:

  {\tiny
  \begin{eqnarray*}
    C_{i,j}&=&\!\!\!\!\!\!\!\!\left(\begin{array}{cccccc}v_{1}^0 \sigma^{\prime}\left(w_{1}^{0T} x_{(k_1+1)(i-1)+1}\right) x_{(k_1+1)(i-1)+1,j} & \cdots & v_{k_1}^0 \sigma^{\prime}\left(w_{k_1}^{0T} x_{(k_1+1)(i-1)+1}\right) x_{(k_1+1)(i-1)+1,j}  \\ 
      &\vdots&   \\ 
      v_{1}^0 \sigma^{\prime}\left(w_{1}^{0T} x_{(k_1+1)i}\right) x_{(k_1+1)i,j} & \cdots & v_{k_1}^0 \sigma^{\prime}\left(w_{k_1}^{0T} x_{(k_1+1)i}\right) x_{(k_1+1)i,j} \end{array}\right) \\
      & \in& \mathbb{R}^{(k_1+1)\times k_1} 
  \end{eqnarray*}
  }

 and 

 {\tiny

 \begin{eqnarray*}
  E_{j}&=&\left(\begin{array}{cccccc}v_{1}^0 \sigma^{\prime}\left(w_{1}^{0T} x_{(k_1+1)k_2+1}\right) x_{(k_1+1)k_2+1,j} & \cdots & v_{k_1}^0 \sigma^{\prime}\left(w_{k_1}^{0T} x_{(k_1+1)k_2+1}\right) x_{(k_1+1)k_2+1,j}  \\ 
    &\vdots&   \\ 
    v_{1}^0 \sigma^{\prime}\left(w_{1}^{0T} x_{n}\right) x_{n,j} & \cdots & v_{k_1}^0 \sigma^{\prime}\left(w_{k_1}^{0T} x_{n}\right) x_{n,j} \end{array}\right) \\
    & \in& \mathbb{R}^{(n-(k_1+1)k_2)\times k_1}.
 \end{eqnarray*}

 }

  Therefore, we can rewrite $\det (J(\tilde{w}^0;v^0,\tilde{w}^{\prime 0}))$ as the following form:

 \begin{equation}\label{jacobian:blockmatrix}
  \det (J(\tilde{w}^0;v^0,\tilde{w}^{\prime 0})) = \left(\begin{array}{cccccc} B_{1,1} & \cdots & B_{1,k_2} & C_{1,k_2+1}&\cdots & C_{1,d} \\ 
   & &\vdots&  & &  \\ 
   B_{k_2,1} & \cdots & B_{k_2,k_2} & C_{k_2,k_2+1}&\cdots & C_{k_2,d} \\ 
   D_1 & \cdots & D_{k_2} & E_{k_2+1}&\cdots & E_{d} \\ 
  \end{array}\right). 
 \end{equation}

 Based on Lemma \ref{jacobian:analytic}, in order to prove $\det (J(\tilde{w}^0;v^0,\tilde{w}^{\prime 0})) \neq 0$ w.p.1., we only need to prove that, as a function of  $\tilde{w}$, $\tilde{w}^{\prime}$, $v$ and $x$,  $\det (J(\tilde{w}^0;v^0,\tilde{w}^{\prime 0}))$ is not identically zero. To do so, we just need to construct a dataset $x$ such that $(\ref{jacobian:blockmatrix}) \neq 0$. We construct such $x:=(x_1,\dots,x_n)\in \mathbb{R}^{n\times d}$ in the following way: 

 \begin{itemize}
   \item[(1)] For $i=1,\cdots, (k_1+1)$:  $x_{i,j}=\{\begin{array}{lll} \\\delta_i, &&  j=1 \\ 0,  & & \text{otherwise} \end{array}$, 
   where $\delta_i\neq\delta_{i^{\prime}} \neq 0$, $\forall i, i^{\prime}$.
   \item[(2)] For $i=(k_1+1)+1,\cdots, 2(k_1+1)$:  $x_{i,j}=\{\begin{array}{lll} \delta_i, &&  j=2 \\ 0,  & & \text{otherwise} \end{array}$,
   where $\delta_i\neq\delta_{i^{\prime}} \neq 0$, $\forall i, i^{\prime}$.
   \item[(3)] $\cdots$ 
   \item[(4)] For $i=(k_2-1)(k_1+1)+1,\cdots, k_2(k_1+1)$:  $x_{i,j}=\{\begin{array}{lll} \delta_i, &&  j=k_2 \\ 0,  & & \text{otherwise} \end{array}$,
   where $\delta_i\neq\delta_{i^{\prime}} \neq 0$, $\forall i, i^{\prime}$.
   \item[(5)] For $i=k_2(k_1+1)+1,\cdots, n$:  $x_{i,j}=\{\begin{array}{lll} 1, &&  j=i-k_1k_2 \\ 0,  & & \text{otherwise} \end{array}$.
 \end{itemize}

 Under such a construction, (\ref{jacobian:blockmatrix}) becomes the determinant of a block-diagonal matrix:

 \begin{equation}\label{jacobian:blockmatrix2}
  \det (J(\tilde{w}^0;v^0,\tilde{w}^{\prime 0})) = \det \left(\begin{array}{cccccc} B_{1,1} & \cdots &0  & 0\\ 
   & &\vdots&  \\ 
   0 & \cdots & B_{k_2,k_2} & 0\\ 
  0 & \cdots & 0 & E\\ 
  \end{array}\right),
 \end{equation}

 where $E=[ E_{k_2+1}\cdots  E_{d}] $ is a square matrix in $\mathbb{R}^{\left(n-(k_1+1)k_2 \right) \times \left(n-(k_1+1)k_2\right)}$. To prove $(\ref{jacobian:blockmatrix2})\neq 0$, we need to prove $B_{1,1}, \cdots, B_{k_2,k_2}$ and $E$ are all full rank matrices.

 As for the full-rankness of $E$, thanks to the construction (5), $E=[ E_{k_2+1}\cdots  E_{d}]$ now becomes: 

 $$
 E_{k_2+1}=\left(\begin{array}{cccccc}v_{1}^0 \sigma^{\prime}\left(w_{1}^{0T} x_{(k_1+1)k_2+1}\right)  & \cdots & v_{k_1}^0 \sigma^{\prime}\left(w_{k_1}^{0T} x_{(k_1+1)k_2+1}\right)  \\ 
  0& \cdots & 0 \\
   &\vdots&   \\ 
   0& \cdots & 0  \end{array}\right) \in \mathbb{R}^{(n-(k_1+1)k_2)\times k_1},
 $$

 $$
 E_{k_2+2}=\left(\begin{array}{cccccc}
  0& \cdots & 0 \\
  v_{1}^0 \sigma^{\prime}\left(w_{1}^{0T} x_{(k_1+1)k_2+2}\right)
   & \cdots & v_{k_1}^0 \sigma^{\prime}\left(w_{k_1}^{0T} x_{(k_1+1)k_2+2}\right)  \\ 
   &\vdots&   \\ 
   0& \cdots & 0  \end{array}\right) \in \mathbb{R}^{(n-(k_1+1)k_2)\times k_1},
 $$

 and so on so for: 
 $$
 E_{d}=\left(\begin{array}{cccccc}
  0& \cdots & 0 \\
   &\vdots&   \\ 
   0& \cdots & 0  \\
   v_{1}^0 \sigma^{\prime}\left(w_{1}^{0T} x_{n}\right)
    & \cdots & v_{k_1}^0 \sigma^{\prime}\left(w_{k_1}^{0T} x_{n}\right)  
  \end{array}\right) \in \mathbb{R}^{(n-(k_1+1)k_2)\times k_1}.
 $$

  Since $md \geq 2n$ and $J(\tilde{w}^0;v^0,\tilde{w}^{\prime 0}) \in \mathbb{R}^{n \times n}$  is the jacobian w.r.t. the first $n$ components of $w \in \mathbb{R}^{md}$, it only involves $w_1, w_2, \cdots w_{k_1+1}$ and it will not reach beyond $w_{\frac{m}{2}} \in \mathbb{R}^d$. Recall in the mirrored LeCun's initialization, we only copy the 2nd half of $w$: $(w^{0}_{\frac{m}{2}+1},\dots,w^{0}_{m})\leftarrow(-w^{0}_{1},\dots,-w^{0}_{\frac{m}{2}})$, that is to say, $w_1, w_2, \cdots w_{k_1+1}$ are independent Gaussian random variables, unaffected by the copying phase, similarly for $v_1,\cdots, v_{k_1+1}$. 

  In short, since Gaussian random variables take value 0 on a zero probability measure,  and $\sigma(z)=0$ only happens when $z=0$ (see Assumption \ref{assum2}), we have $E=[ E_{k_2+1}\cdots  E_{d}]$ is full rank w.p.1.

  As for the full-rankness of $B_{kk}$, $k=1,\cdots, k_2$, we only need to prove $B_{11}$ is invertible, the proof of the rest of $B_{kk}$ are the same. 

  Under construction (1), we have:
  $$  
  B_{1,1}=\left(\begin{array}{cccccc}v_{1}^0 \sigma^{\prime}\left(w_{1}^{0T} x_{1}\right)\delta_1  & \cdots & v_{k_1+1}^0 \sigma^{\prime}\left(w_{k_1+1}^{0T} x_{1}\right)\delta_1  \\ 
   &\vdots&   \\ 
   v_{1}^0 \sigma^{\prime}\left(w_{1}^{0T} x_{(k_1+1)}\right)\delta_{k_1+1}  & \cdots & v_{k_1+1}^0 \sigma^{\prime}\left(w_{k_1+1}^{0T} x_{(k_1+1)}\right)\delta_{k_1+1}  \end{array}\right) \in \mathbb{R}^{(k_1+1)\times (k_1+1)} .
 $$

 Again, since Gaussian random variables take value 0 on a zero probability measure, and $\delta_i\neq 0$ for $i=1,\cdots, k_1+1$, we only need to prove the following $\tilde{B}_{1,1}$ is full rank: 
$$
\tilde{B}_{1,1}=\left(\begin{array}{cccccc} \sigma^{\prime}\left(w_{1}^{0T} x_{1}\right)  & \cdots &  \sigma^{\prime}\left(w_{k_1+1}^{0T} x_{1}\right)  \\ 
  &\vdots&   \\ 
  \sigma^{\prime}\left(w_{1}^{0T} x_{(k_1+1)}\right)  & \cdots &  \sigma^{\prime}\left(w_{k_1+1}^{0T} x_{(k_1+1)}\right)  \end{array}\right) \in \mathbb{R}^{(k_1+1)\times (k_1+1)} .
$$

Next, we borrow the following lemma from \cite[Proposition 1]{li2018benefit}, which is the restatement under our notation (their original statement applies for deep neural network, here, we restate it for 1-hidden-layer case in Lemma \ref{jacobian:fullrank}). 

\begin{lemma}\label{jacobian:fullrank}
  Under Assumption \ref{assum2}, given an 1-hidden-layer neural network with width $m \geq n$, Let $\Omega=\left\{w \mid \operatorname{rank}\left(\Phi(w)\right)<\min \left\{m, n\right\}\right\}$, where $\Phi(w)$ is the hidden feature matrix
  
  $$\Phi(w):=\left[\begin{array}{c} \sigma\left(w_{1}^{T} x_1\right),\dots,\sigma\left(w_{m}^{T} x_1\right)\\ \vdots \\  \sigma\left(w_{1}^{T} x_n\right),\dots,\sigma\left(w_{m}^{T} x_n\right) \end{array}\right]\in \mathbb{R}^{n \times m}.$$

  Suppose there exists a dimension $k$ such that $x_{i,k} \neq x_{i^{\prime},k}, \forall i \neq i^{\prime}$, then $\Omega$ is a zero-measure set.
\end{lemma}

To prove the full-rankness of $\tilde{B}_{1,1}$, we regard it as the hidden feature matrix of an 1-hidden-layer neural network equipped with width $m^\prime=k_1+1$ and activation function $\sigma^{\prime}(z)$, which satisfies Assumption \ref{assum2}. 
In addition,  recall $\delta_i\neq\delta_{i^{\prime}} \neq 0$ for $\forall i, i^{\prime}$, so $(x_1,\cdots, x_{k_1+1})$  satisfies the condition of Lemma \ref{jacobian:fullrank} w.p.1, and the sample size equals to the width $m^\prime=k_1+1$. 
Therefore, all the assumptions are satisfied and Lemma \ref{jacobian:fullrank} directly shows that $\tilde{B}_{1,1}$ is invertible w.p.1.. 

Similarly, with the same proof technique, it can be shown that the rest of ${B}_{k,k}$ are also invertible w.p.1.. 
In conclusion, we have constructed a dataset $x$, such that  (\ref{jacobian:blockmatrix2}) is non-zero w.p.1., which implies $\Omega=\left\{\tilde{w}^0,v^0,\tilde{w}^{\prime 0},x  \mid \det (J(\tilde{w}^0;v^0,\tilde{w}^{\prime 0}))=0\right\}$  has zero measure by Lemma \ref{jacobian:analytic}. In other words, under the joint distribution of $\tilde{w}^0$, $v^0$, $\tilde{w}^{\prime 0}$ and $x$, $ J(\tilde{w}^0;v^0,\tilde{w}^{\prime 0})$ is invertible w.p.1.. Recall $\tilde{w}^0$, $v^0$, and $\tilde{w}^{\prime 0}$ all follow continuous distribution, furthermore, $x$ also follows a continuous distribution (Assumption \ref{assum3}), so $J(\tilde{w}^0;v^0,\tilde{w}^{\prime 0})$ is still invertible w.p.1. under the distribution of $x$,
so the whole proof is completed.

When $n=k_1d$, or equivalently, $k_2=0$, things become easier and we just need to change the size of $B_{kk}$ to $\mathbb{R}^{k_1\times k_1}$, and there is no need to consider $C_{i,j}$, $D_j$  and $E_j$, the rest of the proof is the same, we omit it for brevity.
\end{proof}

\subsection{Proof of The Second Part of Theorem \ref{thm1}}
\label{appendix:thm2}

Suppose we have a 1-hidden-layer neural network $f(x;\theta)=\sum_{j=1}^{m} v_{j} \sigma\left(w_{j}^{T} x\right) \in \mathbb{R}$, and let $f(\theta)\in \mathbb{R}^n$ be output of $(x;\theta)$ on the dataset $x=(x_1,\cdots, x_n)$, let $J(w^{*};v^*)\in \mathbb{R}^{n \times md}$ be its Jacobian matrix w.r.t. $w$ at the  stationary point $\theta^*=(w^*,v^*)$, we have:
  
\begin{equation}
  \nabla_{w}\ell(\theta^*)=J(w^{*};v^*)^T(f(\theta^*)-y)=0.
\end{equation}

Therefore, as long as we can prove that  $J(w^{*};v^*)^T \in \mathbb{R}^{md \times n}$ is full column rank, the stationary point  $\theta^*$ will become a global minimizer with $\ell(\theta^*)=0$, so the proof is completed.

Now we prove the full-rankness of $J(w^{*};v^*)$. Recall in Lemma \ref{fullrank}, we have proved that, as a function of 
$x$, $\det (J(\tilde{w}^0;v^0,\tilde{w}^{\prime 0}))\neq 0$ w.p.1., where $ J(\tilde{w}^0;v^0,\tilde{w}^{\prime 0})$ equals to

{\tiny
\begin{equation} \label{jacobian:squarematrix2}
\left(\begin{array}{cccccc}v_{1}^0 \sigma^{\prime}\left(w_{1}^{0T} x_{1}\right) x_{1}^T & \cdots & v_{k_1}^0 \sigma^{\prime}\left(w_{k_1}^{0T} x_{1}\right) x_{1}^T &v_{k_1+1}^0 \sigma^{\prime}\left(w_{k_1+1}^{0T} x_{1}\right)x_{1,1}& \cdots & v_{k_1+1}^0 \sigma^{\prime}\left(w_{k_1+1}^{0T} x_{1}\right)x_{1,k_2} \\ & &\vdots& &  &  \\  v_{1}^0 \sigma^{\prime}\left(w_{1}^{0T} x_{n}\right) x_{n}^T & \cdots & v_{k_1}^0 \sigma^{\prime}\left(w_{k_1}^{0T} x_{n}\right) x_{n}^T  &v_{k_1+1}^0 \sigma^{\prime}\left(w_{k_1+1}^{0T} x_{n}\right)x_{n,1}& \cdots & v_{k_1+1}^0 \sigma^{\prime}\left(w_{k_1+1}^{0T} x_{n}\right)x_{n,k_2}\end{array}\right),
\end{equation}
}

Since  $\|w^*-w^0\|_F \leq \epsilon$ and the determinant is a continuous function of $w$,  we have $\det (J(\tilde{w}^*;v^0,\tilde{w}^{\prime *}))\neq 0$ (w.p.1.) when $\epsilon$ is small. In addition, as we can see from (\ref{jacobian:squarematrix2}), $(v_1^0,\cdots, v_{k_1+1}^0)$ are just constant terms in the corresponding columns, so the full-rankness of (\ref{jacobian:squarematrix2}) still holds if we change $v_j^0$ to $v_j^* \neq 0$. In summary, $\det (J(\tilde{w}^*;v^*,\tilde{w}^{\prime *}))\neq 0$ when $v^*$ is entry-wise non-zero, where $J(\tilde{w}^*;v^*,\tilde{w}^{\prime *})$ equals to

{\tiny
\begin{equation} \label{jacobian:squarematrix3}
\left(\begin{array}{cccccc}v_{1}^* \sigma^{\prime}\left(w_{1}^{*T} x_{1}\right) x_{1}^T & \cdots & v_{k_1}^* \sigma^{\prime}\left(w_{k_1}^{*T} x_{1}\right) x_{1}^T &v_{k_1+1}^* \sigma^{\prime}\left(w_{k_1+1}^{*T} x_{1}\right)x_{1,1}& \cdots & v_{k_1+1}^* \sigma^{\prime}\left(w_{k_1+1}^{*T} x_{1}\right)x_{1,k_2} \\ & &\vdots& &  &  \\  v_{1}^* \sigma^{\prime}\left(w_{1}^{*T} x_{n}\right) x_{n}^T & \cdots & v_{k_1}^* \sigma^{\prime}\left(w_{k_1}^{*T} x_{n}\right) x_{n}^T  &v_{k_1+1}^* \sigma^{\prime}\left(w_{k_1+1}^{*T} x_{n}\right)x_{n,1}& \cdots & v_{k_1+1}^* \sigma^{\prime}\left(w_{k_1+1}^{*T} x_{n}\right)x_{n,k_2}\end{array}\right),
\end{equation}
}

Furthermore, $J(\tilde{w}^*;v^*,\tilde{w}^{\prime *})\in \mathbb{R}^{n \times n}$ is nothing but 
a $n\times n$ submatrix of $J(w^*;v^*)\in \mathbb{R}^{n \times md}$. 
Now that $md \geq 2n >n$, $\det (J(\tilde{w}^*;v^*,\tilde{w}^{\prime *}))\neq 0$ implies the full-row-rankness of $J(w^*;v^*)$ (w.p.1.). Thus the whole proof is completed.

\section{Proof of Theorem \ref{thm3}}
\label{appendix:thm3}

\BLACK{In this section, we provide both proof sketch and the detailed proof of Theorem \ref{thm3}. They can be seen in Appendix \ref{appendix:thm3sketch} and Appendix \ref{appendix:thm3detail}, respectively.  For general readers, reading proof sketch in Appendix \ref{appendix:thm3sketch} will help grasp our main idea.}

\subsection{Proof Sketch of Theorem \ref{thm3}}
\label{appendix:thm3sketch}
\begin{proof}[Proof sketch] 
  The proof is built on the special structure of neural network $f(x;\theta)$, including the linear dependence of $v$ and the mirrored pattern of parameters. 
  Here, we describe our high level idea and the analysis roadmap. 
  The proof  consists of proving the following claims: 
  \begin{itemize} \vspace{-2mm}
    \item[(I)] every KKT point $\theta^*$ satisfies $\|\nabla_w \ell(w^*;v^*)\|=O(\epsilon)$; \vspace{-1mm}
    \item[(II)] the gradient of $w$ always dominates the error term, i.e. $\|\nabla_w \ell (w;v)\|_2^2 =\Omega \left(\ell(w^*;v^*)\right) $, so we have $\ell(w^*;v^*) = O(\epsilon^2)$.   \vspace{-2mm}
  \end{itemize}
  To prove claim (I), 
  we only need to consider the case where $w^*$ is on the boundary and $\|\nabla_w\ell(w^*;v^*)\|_2\neq 0$ (otherwise Theorem \ref{thm3} automatically holds based on Theorem \ref{thm1}). In this case, by the optimality condition, taking $\eta \in \mathbb{R}$ as a small step size, we have 
  \begin{equation} 
  \label{thm3:gradientdirection}
    -\eta\nabla_w\ell(w^*;v^*)= \eta 
  \frac{\|\nabla_w\ell(w^*;v^*)\|}{\|w^*-w^0\|}(w^*-w^0)=\tilde{\eta}(w^*-w^0),
  \end{equation}
  where $\tilde{\eta}=\eta 
  \frac{\|\nabla_w\ell(w^*;v^*)\|}{\|w^*-w^0\|}$.  Now, our key observation is that, after moving along (\ref{thm3:gradientdirection}) from $w^*$,  the change of the loss is not significant due to the special local structure of $f(w;v^*)$, therefore, $\|\nabla_w\ell(w^*;v^*)\|_2$ can be bounded.
  To be more specific, 
  we denote $f^*$ as the neural network output at the KKT point $\theta^*$; denote $\bar{f}$ as the neural network output after taking a small step $\eta$ along the negative partial gradient direction of $w$; and denote $f^{\prime}$ as an rough estimate of $\bar{f}$:
    \begin{eqnarray}
    f^{*} &:=& f\left(w^*;v^*\right)=J(w^0;v^*)w^*+R^*, \label{thm3:f*} \\
    \bar{f}&:=&f\left(w^*-\eta \nabla_w \ell(w^*;v^*);v^*\right)=(1+\tilde{\eta}) f^{*}-(1+\tilde{\eta}) R^{*}+\bar{R}, \label{thm3:fnew} \\
    f^{\prime} &:=& \bar{f}_{\text{lin}} +(1+\tilde{\eta})R^* =(1+\tilde{\eta}) f^*, \label{thm3:fprime}
  \end{eqnarray}

  where $\bar{f}_{\text{lin}}$ is the first-order Taylor approximation of $\bar{f}$, $R^*$ is the second-order Taylor residue of $f^*$ (similarly for $\bar{R}$). Additionally, \eqref{thm3:f*}, \eqref{thm3:fnew}, and \eqref{thm3:fprime} are due to the the symmetric property of $w^0$, $v^0$ and $v^*$, so the bias terms in the Taylor expansion will vanish.  Now, we compare the value of the loss on each of $f^*$, $\bar{f}$ and $f^\prime$. Define $\ell\circ f = \frac{1}{2}\|y-f\|_2^2$, 
  we prove the following crucial relationship:
  \begin{equation} \label{thm3:relationship}
    \eta \|\nabla_w \ell(w^*;v^*)\|_2^2 \overset{(a)}{\leq} \ell \circ f^* -\ell \circ \bar{f}  \overset{(b)}{\leq} \ell \circ f^{\prime}  - \ell \circ \bar{f}  = \frac{1}{2}\left\|f^{\prime}-\bar{f}\right\|_{2}\left\|f^{\prime}+\bar{f}-2 y\right\|_{2} \overset{(c)}{=} O( \tilde{\eta} \epsilon^2)
  \end{equation}  
  Eq. (\ref{thm3:relationship}) plays a key role in our analysis. 
  Here, $(a)$ can be easily shown by applying Descent lemma in this local region, yet $(b)$ and $(c)$ are not that obvious. 
  Recall in \eqref{thm3:f*}, \eqref{thm3:fnew}, and \eqref{thm3:fprime}, we know that: (i) although the location of $\bar{f}$ is unclear, $f^{\prime}$ points at the same direction as $f^*$. (ii) As an estimator of $\bar{f}$, $f^\prime$ is not far away from it, i.e. $\|\bar{f}-f^\prime\|_2=\|\bar{R}-(1+\tilde{\eta}) R^{*}\|_2 $ only involves the second-order Taylor residue terms. With this observation, 
 $(b)$ is proved in  Lemma \ref{lemma:increaseloss} (stated below) by geometric properties, and $(c)$ is calculated in Lemma \ref{lemma:estimateerror} (stated below), so the relationship (\ref{thm3:relationship}) is proved. Therefore, we have $\theta^*$ satisfies $\|\nabla_w \ell(w^*;v^*)\|=O(\epsilon)$ by plugging in $\eta=\frac{\epsilon \tilde{\eta}}{\|\nabla_w \ell(w^*;v^*)\|_2}$, and claim (I) is proved.
  \begin{lemma}\label{lemma:increaseloss}
    Under the conditions of Theorem \ref{thm3}, we have $\ell \circ f^{\prime}\geq \ell \circ f^{*}$, i.e., $\left\|f^{\prime}-y\right\|_{2}^{2} \geq\left\|f^{*}-y\right\|_{2}^{2}$.
  \end{lemma}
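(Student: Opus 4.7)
The plan is to exploit the fact that $f(x;w,v)$ is linear in $v$, so that uniformly scaling $v$ by $1+\tilde\eta$ produces exactly $f'$. More precisely, $f' = (1+\tilde\eta) f^* = f(w^*,\, (1+\tilde\eta)\,v^*)$, which recasts the lemma as the statement that scaling $v^*$ up by the factor $1+\tilde\eta$ at fixed $w = w^*$ cannot decrease the empirical loss.

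First, I would verify the feasibility of the scaled iterate. Because $\tilde\eta \geq 0$, we have $(1+\tilde\eta)v^* \geq v^* \geq \zeta \mathbf{1}$, and the ratio constraints $v_j/v_{j'} \leq \kappa$ are invariant under uniform positive scaling, so $(1+\tilde\eta)v^* \in B_{\zeta,\kappa}(v)$. (If $\tilde\eta = 0$, the conclusion is trivial since $f' = f^*$.)

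Second, I would invoke the KKT conditions on the $v$-block. Since $f(x;w^*,\cdot)$ is linear in $v$, the restricted loss $\ell(w^*, v) = \tfrac12 \|y - f(w^*, v)\|_2^2$ is a convex quadratic in $v$, and $B_{\zeta,\kappa}(v)$ is a convex polyhedron. Because the $w$- and $v$-constraints in \eqref{constrained} are decoupled, the KKT conditions of the joint problem, restricted to the $v$-components, coincide with the KKT conditions of the convex program $\min_v \ell(w^*, v)$ s.t. $v \in B_{\zeta,\kappa}(v)$. For a convex problem, KKT is sufficient for global optimality, so $v^*$ globally minimizes $\ell(w^*, \cdot)$ over $B_{\zeta,\kappa}(v)$. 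Combined with the feasibility of $(1+\tilde\eta)v^*$ established above, this yields $\ell(w^*, v^*) \leq \ell(w^*, (1+\tilde\eta)v^*) = \ell \circ f'$, which is exactly the claim $\|f^* - y\|_2^2 \leq \|f' - y\|_2^2$.

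The main (minor) obstacle is the clean decoupling of the KKT multipliers between the $w$- and $v$-blocks; this should be routine because the two constraint sets are separable, but it needs to be spelled out. As a self-contained fallback I would work directly from the variational inequality: the segment $\{(1+t\tilde\eta)v^* : t \in [0,1]\}$ lies in $B_{\zeta,\kappa}(v)$ with tangent direction $\tilde\eta v^*$, so the KKT optimality condition forces $\langle \nabla_v \ell(w^*, v^*),\, v^*\rangle \geq 0$, which by the chain rule unwinds to $\langle f^*,\, f^* - y\rangle \geq 0$. Expanding then gives
\[
\|f' - y\|_2^2 - \|f^* - y\|_2^2 \;=\; \tilde\eta^2 \|f^*\|_2^2 + 2\tilde\eta\bigl(\|f^*\|_2^2 - \langle f^*, y\rangle\bigr) \;\geq\; 0,
\]
concluding the proof.
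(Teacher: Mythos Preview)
Your proposal is correct and rests on the same core idea as the paper: the linearity of $f$ in $v$ gives $f'=f(w^*,(1+\tilde\eta)v^*)$, the scaled point $(1+\tilde\eta)v^*$ is feasible, and KKT optimality in the $v$-block then forbids a decrease of the loss. Your ``fallback'' route---deriving $\langle f^*,f^*-y\rangle\ge 0$ from the variational inequality and then expanding $\|f'-y\|_2^2-\|f^*-y\|_2^2$---is exactly what the paper extracts in its case~(i).

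The organization differs, though, and yours is cleaner. The paper splits into cases according to the sign of $f^{*T}y$: in case~(i) it runs the contradiction argument you describe, and in case~(ii) ($f^{*T}y<0$) it argues that replacing $v^*$ by $-v^*$ would lower the loss, contradicting KKT. That second case is both unnecessary (your convexity/variational-inequality argument already covers all signs) and delicate, since $-v^*$ does not obviously satisfy $v\ge\zeta\mathbf{1}$. By invoking convexity of $v\mapsto \ell(w^*,v)$ over the convex set $B_{\zeta,\kappa}(v)$ and the separability of the $w$- and $v$-constraints, you get global optimality of $v^*$ for the $v$-subproblem in one step, which bypasses the case split entirely.
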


  \begin{lemma}\label{lemma:estimateerror}
    Under the conditions of Theorem \ref{thm3}, we have:
      $\left\|f^{\prime}-\bar{f}\right\|_2\left\|f^{\prime}+\bar{f}-2y\right\|_2 = O(\epsilon^2)$.
  \end{lemma}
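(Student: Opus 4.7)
The plan is to bound the two factors in the product separately: show $\|f^\prime - \bar f\|_2 = O(\epsilon^2)$ via the second-order nature of the Taylor remainders, and $\|f^\prime + \bar f - 2y\|_2 = O(1)$ by elementary boundedness.

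For the first factor, I would begin with a direct algebraic subtraction of the defining expressions \eqref{thm3:fnew} and \eqref{thm3:fprime}, which yields
\begin{equation}
f^\prime - \bar f \;=\; (1+\tilde\eta)\,R^* \;-\; \bar R.
\end{equation}
Thus the first factor is a linear combination of the second-order Taylor remainders $R^*$ and $\bar R$ of the map $w\mapsto f(w;v^*)$ about $w^0$, evaluated at $w^*$ and at $\bar w := w^*-\eta\nabla_w\ell(w^*;v^*)$ respectively. By Taylor's theorem with integral remainder, each component of $R^*$ (resp.\ $\bar R$) is bounded by a constant times the operator norm of the Hessian of $f(\cdot;v^*)$ along the segment from $w^0$ to $w^*$ (resp.\ $\bar w$), times $\|w^*-w^0\|^2$ (resp.\ $\|\bar w-w^0\|^2$). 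Assumption \ref{assum2} supplies $\sigma$ analytic and $L$-Lipschitz, so $\sigma^\prime,\sigma^{\prime\prime}$ are bounded on any compact set containing the pre-activations $w_j^T x_i$; combined with $\|w-w^0\|\leq\epsilon$, $v^*\in B_{\zeta,\kappa}(v)$, and bounded inputs $x_i$, the Hessian norm is uniformly bounded by some constant $C_H$. Since $\|w^*-w^0\|\leq\epsilon$ and $\|\bar w-w^0\|=(1+\tilde\eta)\|w^*-w^0\|\leq(1+\tilde\eta)\epsilon$, and since $\tilde\eta$ can be chosen to be at most a constant by taking $\eta$ small enough in the outer KKT argument, one concludes $\|R^*\|_2,\|\bar R\|_2=O(\epsilon^2)$ and therefore $\|f^\prime-\bar f\|_2=O(\epsilon^2)$.

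For the second factor, I would apply the triangle inequality $\|f^\prime+\bar f-2y\|_2\leq\|f^\prime-y\|_2+\|\bar f-y\|_2$. Each of $f^\prime=(1+\tilde\eta)f^*$ and $\bar f$ is the network output at a parameter inside the compact feasible region, and by the same smoothness/boundedness used above, each is bounded in norm by a constant depending only on the data, the activation, and the constraint parameters. With $\|y\|_2$ fixed, this gives $\|f^\prime+\bar f-2y\|_2 = O(1)$. Multiplying the two bounds produces $\|f^\prime-\bar f\|_2\|f^\prime+\bar f-2y\|_2 = O(\epsilon^2)\cdot O(1) = O(\epsilon^2)$, as desired.

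The principal obstacle is the uniform Hessian bound $C_H$, which requires controlling $\|v^*\|$ from above. The constraint $v_j/v_{j^\prime}\leq\kappa$ with $v\geq\zeta\mathbf 1$ bounds the ratios of the outer weights but does \emph{not} directly upper-bound their magnitudes. To close this gap, I would exploit the linearity of $f$ in $v$ shown in \eqref{pairNN}: if $\|v^*\|$ were unbounded along a KKT sequence, then since the pairwise-scaled feature differences $\sigma(w_j^T x)-\sigma(w_{j+m/2}^T x)$ cannot all be simultaneously small under Assumption~\ref{assum3} (the full-rankness argument of Appendix \ref{appendix:thm1}), the loss would blow up, contradicting feasibility of the KKT point. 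This yields an implicit upper bound on $\|v^*\|$, which in turn validates the uniform Hessian bound and thereby both parts of the estimate above.
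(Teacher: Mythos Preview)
Your bound $\|f'-\bar f\|_2=O(\epsilon^2)$ obtained by separately bounding $\|R^*\|$ and $\|\bar R\|$ is too coarse for how the lemma is actually used. In the chain \eqref{thm3:relationship} the left side is $\eta\|\nabla_w\ell\|_2^2$, and one must eventually divide by $\eta$ (equivalently $\tilde\eta$); without a factor of $\tilde\eta$ on the right the inequality is vacuous as $\eta\to 0$. The paper extracts this factor by a cancellation you do not attempt: writing $\bar R=(1+\tilde\eta)^2R$ with $R$ the Taylor-remainder integral over $[0,1+\tilde\eta]$, one has
\[
(1+\tilde\eta)R^*-\bar R=(1+\tilde\eta)(R^*-R)-(1+\tilde\eta)\tilde\eta R,
\]
where $R^*-R$ is an integral over $[1,1+\tilde\eta]$ of length $\tilde\eta$. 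Both pieces are therefore $O(\tilde\eta\epsilon^2)$, not merely $O(\epsilon^2)$. Bounding $R^*$ and $\bar R$ independently and applying the triangle inequality throws this cancellation away.

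Your proposed control of $\|v^*\|$ is also unsound. The feature differences $\sigma(w_j^Tx_i)-\sigma(w_{j+m/2}^Tx_i)$ vanish identically at $w^0$ by the mirrored initialization and are only $O(\epsilon)$ on $B_\epsilon(w^0)$; hence $\|v^*\|$ of order $1/\epsilon$ still yields $f(w^*;v^*)=O(1)$ and bounded loss, so your contradiction does not close. The paper never bounds $\|v^*\|$. Instead it keeps the $v^*$-dependence explicit in the constant (through $\lambda_{[0:2]}$, which is linear in $v^*$) and, in the Theorem~\ref{thm3} proof, pairs this against the matching $v^*$-factor that appears in the \emph{lower} bound $\|\nabla_w\ell\|\geq v^*_{\min}\tilde\lambda^*_{\min}\|f^*-y\|$. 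The ratio $v^*_{\max}/v^*_{\min}\leq\kappa$ from $B_{\zeta,\kappa}(v)$ then removes $v^*$ from the final estimate. You should mirror this: state the bound with the explicit $v^*$-dependent Hessian constant rather than trying to absorb it into $O(1)$.
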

  As for claim (II), it is true as long as $J(w;v)$ is of full row rank, which has been shown in Theorem \ref{thm1}. A more detailed proof of Lemma \ref{lemma:increaseloss} and \ref{lemma:estimateerror}, as well as the proof of the whole Theorem \ref{thm3} are in Appendix \ref{appendix:thm3detail}.

  \begin{figure}[htbp]
    \centering 
    \includegraphics[width=4in]{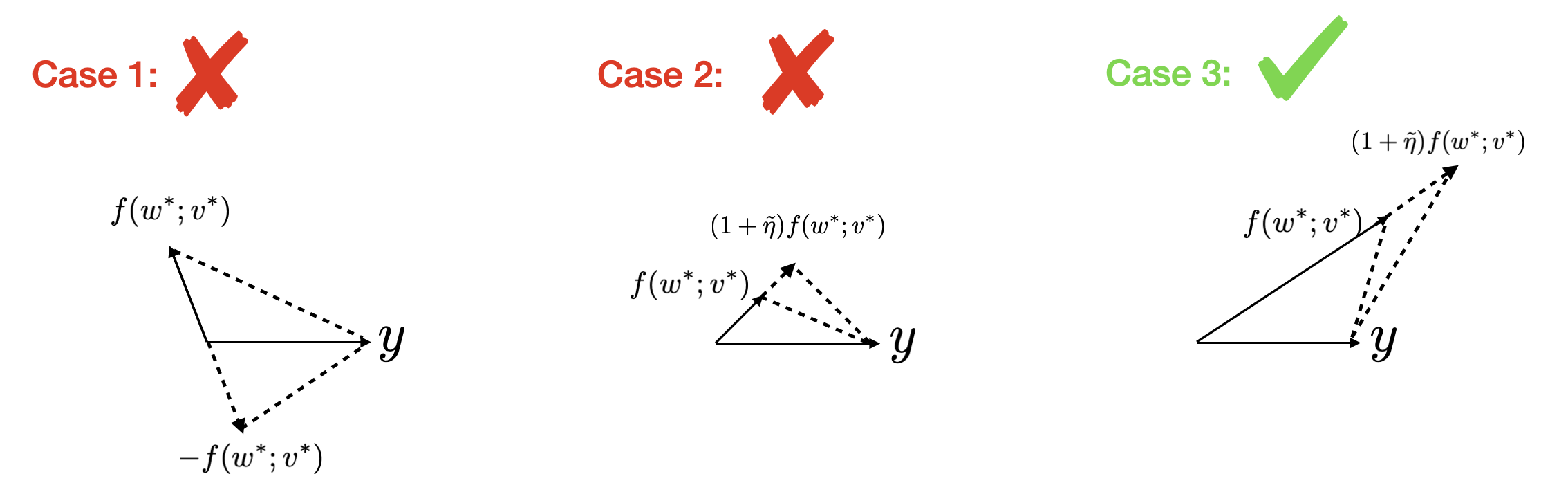}
    \caption{ Geometrical illustration of three cases in Lemma \ref{lemma:increaseloss}: comparing with $f^*$, $f^{\prime}= (1+\tilde{\eta})f^*$ will not further reduce the distance to $y$.}\label{thm3figure}
  \end{figure}

  {\it Proof sketch of Lemma \ref{lemma:increaseloss}.} Here, we provide a proof sketch of Lemma \ref{lemma:increaseloss}, we need to discuss the following cases:
  \begin{itemize} %
    \item[(1)] 
    When $f^{*T}y\geq 0$: we prove by contradiction. Since $f^{*}$ is linear in $v^*$, 
     $\| (1+\tilde{\eta})f^{*} -y\|_2^2 < \| f^{*} -y\|_2^2 $ implies $\| f(w^*;(1+\tilde{\eta})v^*) -y\|_2^2 < \| f^{*} -y\|_2^2$,   
    which means we can further reduce the loss by %
    changing
    $v^*$ to $(1+\tilde{\eta})v^*$, which is still feasible, we have a contradiction to the assumption that $(w,^*v^*$) is a KKT point (see Figure \ref{thm3figure}, Middle). 

    \item[(2)] When $f^{*T}y< 0$: changing $v^*$ to $-v^*$ will further reduce the distance to $y$ (see Figure \ref{thm3figure}, Left), this is a contradiction to the fact that $v^*$ is a KKT point, so case (2) will not happen.
    \end{itemize} 
    In conclusion, we always have $\ell \circ f^{\prime}\geq \ell \circ f^*$ (see Figure \ref{thm3figure}, Right).
\end{proof}

\subsection{Detailed Proof of Theorem \ref{thm3}}
\label{appendix:thm3detail}
The proof of Theorem \ref{thm3} consists of proving the following claims: under the setting of Theorem \ref{thm3},

  \begin{itemize}
    \item[(I)] every KKT point $\theta^*$ satisfies $\|\nabla_w \ell(w^*;v^*)\|=O(\epsilon)$.
    \item[(II)] the gradient of $w$ always dominates the error term, i.e. $\|\nabla_w \ell (w;v)\|_2^2 =\Omega \left(l(w^*;v^*)\right) $, so we have $\ell(w^*;v^*) = O(\epsilon^2)$.   
  \end{itemize}
  
  Note that the statement of claim (I) is not precise, there are chances that the constant terms will exponentially grow (will be discussed later). Nevertheless, it is just an intermediate result that helps provide a clearer big picture. our final result in claim (II) will be precise.
  
To prove claim (I), 
we only need to consider the case when $w^*$ is on the boundary of the constraint $B_{\epsilon}(w^0)$ (Theorem \ref{thm3} automatically holds when $w^*$ is in the interior of $B_{\epsilon}(w^0)$).

 Now, suppose $w^*$ is a non-zero-gradient KKT point on the boundary, by the optimality condition, its negative gradient direction should be along the same direction as $w^*-w^0$, therefore, if we further take a small step $\eta$ along the negative gradient direction, we have:

\begin{equation}\label{gradientdirection}
 -\eta\nabla_w \ell (w^*;v^*)= \eta\frac{\|\nabla_w \ell (w^*;v^*)\|_2}{\|w^*-w^0\|_2 }(w^*-w^0)  =\tilde{\eta}(w^*-w^0), 
\end{equation}

where $\tilde{\eta}:= \eta\frac{\|\nabla_w \ell (w^*;v^*)\|_2}{\|w^*-w^0\|_2 } = \eta \frac{\|\nabla_w \ell (w^*;v^*)\|_2}{\epsilon }$.
In this case, the loss function will decrease when we further move $w$ along $-\nabla_w \ell (w^*;v^*)$ with a sufficiently small stepsize $\eta$. In other words, we can apply Descent lemma (details can be seen in Bertsekas et al. \cite{bertsekas1997nonlinear}) in this local region, i.e.

\begin{eqnarray}\label{descentproperty}
  \ell \left(w^*-\eta \nabla_w \ell(w^*;v^*);v^*\right)-\ell (w^*;v^*) &\leq & 
  -\eta\|\nabla_w  \ell(w^*;v^*)\|_2^2.
\end{eqnarray} 

As a matter of fact, after further taking a small GD step at $w^*$, $f\left(w^*-\eta \nabla_w \ell(w^*;v^*);v^*\right)$ will be closer to the groundtruth $y$, we will come back to this fact later, it will be used to bound $\|\nabla_w  \ell(w^*;v^*)\|_2$. 

Now, for any $w \in B_{\epsilon}(w^0)$, we take the Taylor expansion of $f(w;v^*)$ at $w^0$:
\begin{eqnarray}
  f(w;v^*)&=&f(w^0;v^*)+J(w^0;v^*)(w-w^0)+R(w) 
  \label{taylor} \\ 
  &\overset{\text{(a) \& (b)}}{=}&J(w^0;v^*)w+R(w),   \label{taylor2}\\
  f_{\text{lin}}(w;v^*)&:=&f(w^0;v^*)+J(w^0;v^*)(w-w^0) \label{fhat}  \\
  &\overset{\text{(a) \& (b)}}{=}&  J(w^0;v^*)w,     \label{fhat2} 
\end{eqnarray}

where $R(w) \in \mathbb{R}^n$ is the residue term of the Taylor expansion:
\begin{equation}\label{residue}
  [R(w)]_i=\int_{0}^1 (w-w^0)^TH_i(w^0+t(w-w^0))(w-w^0)(1-t)d t,  \quad i=1,\cdots, n,
\end{equation}

where $H_i(w)$ is the Hessian matrix of $f(w;v^*,x_i)$ at $w$ (for simplicity, we drop the dependence of $v^*$ and $x_i$ in the notation), and $f_{\text{lin}}(w;v^*)$ is a linear approximation of $f(w;v^*)$. In addition, 
(a) \& (b) is due to the fact that $w_{j}^{0}$ follows the mirrored LeCun's initialization with  $(w^0_{\frac{m}{2}+1},\dots,w^0_{m})=(w^0_{1},\dots,w^0_{\frac{m}{2}})$, recall the construction of $f(w;v)$ in (\ref{pairNN}), the hidden output will cancel out with the outer weight, so we have (a): 

\begin{equation} \label{fw0v}
  f(w^0;v^*)\overset{(\ref{pairNN})}{=}\sum_{j=1}^{\frac{m}{2}} v^*_{j} \left( \sigma(w_{j}^{0T} x)-\sigma(w_{j+\frac{m}{2}}^{0T} x)  \right)=0.
\end{equation}

Similarly, we have (b): 

\begin{equation}\label{Jw0w0}
  J(w^0;v^*)w^0=\nabla_w f\left(w^{0} ; x, v^*\right)^{T}w^0=\sum_{j=1}^{\frac{m}{2}} v_j^*\left(\sigma^{\prime}(w_j^{0T}x)w_j^{0T}x - \sigma^{\prime}(w_{j+\frac{m}{2}}^{0T}x)w_{j+\frac{m}{2}}^{0T}x\right)=0.
\end{equation}

Based on (\ref{taylor2}), we define the following quantities: 

\begin{eqnarray}
  \bar{f}&:=&f\left(w^*-\eta \nabla_w \ell(w^*;v^*);v^*\right)\overset{(\ref{taylor2})}{=} J(w^0;v^*)(w^*-\eta \nabla_w \ell(w^*;v^*))+\bar{R}, \label{fnew} \\
  f^{*} &:=& f\left(w^*;v^*\right)\overset{(\ref{taylor2})}{=} J(w^0;v^*)w^*+R^* \label{f*},
\end{eqnarray}

where $\bar{R}=R\left(w^*-\eta \nabla_w \ell(w^*;v^*)\right)$, $R^*=R\left(w^*\right)$ as it is introduced in (\ref{residue}). 
Additionally,  $\bar{f}$ can be re-written in the form of $f^{*}$: 
\begin{eqnarray}
  \bar{f}&\overset{(\ref{fnew})}{=}&J(w^0;v^*)(w^*-\eta \nabla_w \ell(w^*;v^*))+\bar{R} \\ 
  &= & J(w^0;v^*) w^* - \eta J(w^0;v^*)\nabla_w\ell(w^*;v^*)  +\bar{R} \\
  &\overset{(\ref{f*})}{=}& f^{*} -R^* -\eta J(w^0;v^*)\nabla_w\ell(w^*;v^*) +\bar{R} \\
  &\overset{(\ref{gradientdirection})}{=}& f^{*} -R^* +\tilde{\eta}J(w^0;v^*)(w^*-w^0) +\bar{R}\\
  &\overset{(\ref{Jw0w0})}{=}&f^{*} -R^* +\tilde{\eta}J(w^0;v^*)(w^*) +\bar{R} \\
  &\overset{(\ref{f*})}{=}&f^* -R^* +\tilde{\eta}(f^{*}- R^*) +\bar{R} \\
  &=& (1+\tilde{\eta}) f^{*} -(1+\tilde{\eta})R^* +\bar{R}.
\end{eqnarray}

Now, we construct a rough estimator of $\bar{f}$ by merely adding a residue term $(1+\tilde{\eta}) R^*$ on $\bar{f}_{\text{lin}}$, which is $f^{\prime}$ defined as follows: 

\begin{eqnarray}
  f^{\prime} &:=& \bar{f}_{\text{lin}}+(1+\tilde{\eta}) R^* \\
  &=&f_{\text{lin}}\left(w^*-\eta \nabla_w \ell(w^*;v^*);v^*\right)  +(1+\tilde{\eta})R^*\\
  &\overset{\eqref{fhat2}}{=}& J(w^0;v^*)(w^*-\eta\nabla_w\ell(w^*;v^*)) +(1+\tilde{\eta})R^*\\
  &=& J(w^0;v^*) w^* - \eta J(w^0;v^*)\nabla_w\ell(w^*;v^*) +(1+\tilde{\eta})R^*\\
  &\overset{(\ref{f*})}{=}& f^{*} -R^* +\eta J(w^0;v^*)\nabla_w\ell(w^*;v^*) +(1+\tilde{\eta})R^* \\
  &\overset{(\ref{gradientdirection})}{=}& f^{*} -R^* +\tilde{\eta}J(w^0;v^*)(w^*-w^0) +(1+\tilde{\eta})R^* \\
  &\overset{(\ref{Jw0w0})}{=}&f^{*} -R^* +\tilde{\eta}J(w^0;v^*)(w^*) +(1+\tilde{\eta})R^* \\
  &\overset{(\ref{f*})}{=}&f(w^*;v^*) -R^* +\tilde{\eta}(f^{*}- R^*) +(1+\tilde{\eta})R^* \\
  &=& (1+\tilde{\eta}) f^{*}. \label{fprime}
\end{eqnarray}

According to the descent property in (\ref{descentproperty}), after taking a very small GD step at $w^*$ , the new loss function $\ell\circ \bar{f}$ will be smaller than the old one $\ell \circ f^*$, where $\ell\circ f = \frac{1}{2}\|y-f\|_2^2$. In contrast, we discuss the change of the loss function from $f^*$ to that of the rough estimator $f^{\prime}$. The following Lemma \ref{lemma:increaseloss} shows that $\ell \circ f^{\prime} \geq \ell \circ f^*$, different from the fact that $\ell \circ \bar{f} \leq \ell \circ f^*$.

\setcounter{lemma}{0}
\begin{lemma}\label{lemma2:increaseloss}[Corresponding to Lemma \ref{lemma:increaseloss} in the proof sketch.]
  Under the background of Theorem \ref{thm3} and the definition of  $f^{\prime}$ \& $f^*$ in (\ref{fprime}) \& (\ref{f*}), we have $\ell \circ f^{\prime} \geq \ell \circ f^*$, i.e., $\left\|f^{\prime}-y\right\|_2^2\geq \left\|f^*-y\right\|_2^2$.
\end{lemma}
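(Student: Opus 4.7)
The plan is to work directly with the identity $f' = (1+\tilde\eta)f^*$ and dispatch the inequality $\|f' - y\|_2^2 \geq \|f^* - y\|_2^2$ by the sign of the inner product $\langle f^*, y\rangle$. First I would expand the squared-norm difference to obtain the scalar identity
\begin{equation*}
\|f' - y\|_2^2 - \|f^* - y\|_2^2 \;=\; \tilde\eta\bigl[(2+\tilde\eta)\|f^*\|_2^2 - 2\langle f^*, y\rangle\bigr],
\end{equation*}
reducing the lemma to showing that the bracket is non-negative, i.e.\ $\langle f^*, y\rangle \leq (1+\tilde\eta/2)\|f^*\|_2^2$.

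When $\langle f^*, y\rangle \leq 0$ (Figure~\ref{thm3figure}, Left), the bracket is a sum of two non-negative terms multiplied by $\tilde\eta>0$, so the inequality is immediate and no KKT argument is needed. The substantive case is $\langle f^*, y\rangle > 0$ (Figure~\ref{thm3figure}, Middle), where I would invoke the KKT structure of the constrained problem \eqref{constrained} in the $v$-block. The key observation is that, by \eqref{pairNN}, $f(x;w,v)$ is linear in $v$, so $\ell(w^*,\cdot)$ is a convex quadratic, and the feasible set $B_{\zeta,\kappa}(v)$ is convex. Consequently the partial KKT conditions for $v$ at $(w^*,v^*)$ upgrade from ``stationary'' to ``globally optimal over $B_{\zeta,\kappa}(v)$ with $w$ fixed at $w^*$''.

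Next I would pick the test point $(1+\tilde\eta)v^*$ and verify feasibility: the lower bound holds since $(1+\tilde\eta)v^* \geq (1+\tilde\eta)\zeta\mathbf{1} \geq \zeta\mathbf{1}$, and uniform scaling preserves all ratios $v_j/v_{j'}$, so the $\kappa$-constraint is preserved as well. By linearity of $f$ in $v$, $f(w^*;(1+\tilde\eta)v^*) = (1+\tilde\eta)f^* = f'$, and the global optimality of $v^*$ over $B_{\zeta,\kappa}(v)$ yields
\begin{equation*}
\tfrac12\|f' - y\|_2^2 \;=\; \ell(w^*,(1+\tilde\eta)v^*) \;\geq\; \ell(w^*,v^*) \;=\; \tfrac12\|f^* - y\|_2^2,
\end{equation*}
which is exactly the desired conclusion.

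The main obstacle I anticipate is the clean upgrade from ``KKT of the joint problem in $(w,v)$'' to ``global optimality of $v^*$ in the $v$-slice over $B_{\zeta,\kappa}(v)$''. This step relies crucially on convexity of $\ell(w^*,\cdot)$ in $v$, which is particular to the pairwise-mirrored parametrization \eqref{pairNN}; without linearity in $v$, a local KKT condition would be insufficient to rule out the ``runaway'' case $\langle f^*, y\rangle > (1+\tilde\eta/2)\|f^*\|_2^2$. A minor side point worth recording is that the direct computation above sidesteps the sketch's case~2 argument (which flips $v^*$ to $-v^*$), since $-v^*$ is infeasible under $v \geq \zeta\mathbf{1}$; the sign-based dichotomy $\langle f^*, y\rangle \lessgtr 0$ already makes the two-case proof airtight.
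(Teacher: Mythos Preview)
Your proposal is correct and follows the same essential strategy as the paper: exploit that $f$ is linear in $v$ so that $(1+\tilde\eta)f^* = f(w^*;(1+\tilde\eta)v^*)$, verify that $(1+\tilde\eta)v^*$ remains in $B_{\zeta,\kappa}(v)$, and invoke the $v$-block optimality at the KKT point to conclude $\ell\circ f' \geq \ell\circ f^*$. Your explicit appeal to convexity of $\ell(w^*,\cdot)$ and of $B_{\zeta,\kappa}(v)$ to upgrade the partial KKT condition to global minimality in the $v$-slice is a cleaner phrasing of the paper's contradiction argument, which uses the same fact implicitly.

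The one substantive difference is your handling of the case $\langle f^*,y\rangle \leq 0$. The paper disposes of it by flipping $v^*$ to $-v^*$ and arguing this strictly decreases the loss, contradicting KKT; but as you correctly flag, $-v^*$ is \emph{not} feasible under $v \geq \zeta\mathbf{1}$, so that step in the paper is not sound as written. Your direct verification via the scalar expansion $\|f'-y\|_2^2 - \|f^*-y\|_2^2 = \tilde\eta\bigl[(2+\tilde\eta)\|f^*\|_2^2 - 2\langle f^*,y\rangle\bigr]$ is airtight. In fact it shows the case split is unnecessary: the $v$-slice optimality already yields $\langle f^*, f^*-y\rangle \geq 0$, i.e.\ $\|f^*\|_2^2 \geq \langle f^*,y\rangle$, which makes the bracket non-negative regardless of the sign of $\langle f^*,y\rangle$.
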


The proof of Lemma \ref{lemma2:increaseloss} can be seen in Appendix \ref{appendix:lemma:increaseloss}. 

Now, we have 
\begin{eqnarray}
  \eta \left\|\nabla_w \ell(w^*;v^*)\right\|_2^2 &\overset{(\ref{descentproperty})}{\leq}& \ell(w^*;v^*) - \ell \left(w^*-\eta\nabla_w \ell(w^*;v^*);v^*\right) \\
  & =&  \frac{1}{2}\left\|f^*-y\right\|_2^2- \frac{1}{2}\left\|\bar{f}-y\right\|_2^2  \\
  & \overset{\text{Lemma}\ref{lemma:increaseloss}}{\leq} & \frac{1}{2} \left\|f^{\prime}-y\right\|_2^2 - \frac{1}{2}\left\|\bar{f}-y\right\|_2^2   \\ 
  & =& \frac{1}{2} \left\|f^{\prime}-\bar{f}\right\|_2\left\|f^{\prime}+\bar{f}-2y\right\|_2.  \label{diffofsquare}
\end{eqnarray}

Next, we bound $\left\|f^{\prime}-\bar{f}\right\|_2\left\|f^{\prime}+\bar{f}-2y\right\|_2$ using the following Lemma \ref{lemma2:estimateerror}. %

\begin{lemma}\label{lemma2:estimateerror}[Corresponding to Lemma \ref{lemma:estimateerror} in the proof sketch.]
  Under the background of Theorem \ref{thm3} and the definition of  $f^{\prime}$ \& $\bar{f}$ in (\ref{fprime}) \& (\ref{fnew}), we have:
  \begin{equation} \label{diffofsquare3}
    \left\|f^{\prime}-\bar{f}\right\|_2\left\|f^{\prime}+\bar{f}-2y\right\|_2 \leq \left(\tilde{\eta} \epsilon^2   \lambda_{[0:2]} \sqrt{20n}\right) \left(\sqrt{n} C_y+3 \left(n \left(\frac{m}{2}L \zeta\epsilon \right)^2 + n C_y^2 \right)^{\frac{1}{2}} \right),
  \end{equation}
  where $\zeta$ is the constraint for $v$ required in $B(v)$: $v \geq \zeta {\bf 1}$; $\lambda_{[a:b]}:=\underset{i}{\max} \{\lambda_{i[a:b]} | i=1,\cdots,n \}$, and each $\lambda_{i[a:b]}$ is the maximum eigenvalue of the Hessian matrices $H_i(w^0+t(w^*-w^0)):=\nabla_w^2 f(w^0+t(w^*-w^0);v^*,x_i)$ in the interval $t\in [a,b]$.
\end{lemma}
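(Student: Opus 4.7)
The plan is to bound the two factors of the product separately and then combine.

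\textbf{Bounding $\|f^{\prime}-\bar{f}\|_2$.} From the definitions \eqref{fnew} and \eqref{fprime}, we have
\[
f^{\prime}-\bar{f} = (1+\tilde{\eta})R^* - \bar{R}.
\]
Writing each residue via \eqref{residue} and using $-\eta\nabla_w\ell(w^*;v^*)=\tilde{\eta}(w^*-w^0)$, the argument inside $\bar{R}$ becomes $w^0 + (1+\tilde{\eta})t(w^*-w^0)$. The change of variables $s=(1+\tilde{\eta})t$ converts the integral for $[\bar{R}]_i$ into an integral over $s\in[0,1+\tilde{\eta}]$ along the \emph{same} ray as $[R^*]_i$. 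Subtracting, the integrand over $[0,1]$ reduces (by algebraic simplification) to a factor $-\tilde{\eta}s$ times the Hessian quadratic form, and the tail on $[1,1+\tilde{\eta}]$ contributes a term weighted by $((1+\tilde{\eta})-s)$. Bounding the Hessian quadratic form by $\lambda_{[0:2]}\|w^*-w^0\|^2 \leq \lambda_{[0:2]}\epsilon^2$ and computing $\int_0^1 s\,ds=\tfrac12$ and $\int_1^{1+\tilde{\eta}}((1+\tilde{\eta})-s)ds=\tfrac{\tilde{\eta}^2}{2}$, each coordinate is bounded by a constant multiple of $\tilde{\eta}\,\epsilon^2\,\lambda_{[0:2]}$. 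Taking the $\ell_2$-norm across $n$ coordinates and tracking the numerical constants yields $\|f^{\prime}-\bar{f}\|_2\leq \tilde{\eta}\,\epsilon^2\,\lambda_{[0:2]}\sqrt{20n}$.

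\textbf{Bounding $\|f^{\prime}+\bar{f}-2y\|_2$.} I would use the triangle inequality
\[
\|f^{\prime}+\bar{f}-2y\|_2 \leq 2\|y\|_2 + \|f^{\prime}\|_2 + \|\bar{f}\|_2,
\]
then bound each piece. The label term gives $\|y\|_2\leq \sqrt{n}\,C_y$. For $\|f^{\prime}\|_2$ and $\|\bar{f}\|_2$, the key observation is the mirrored initialization identity \eqref{fw0v}: $f(w^0;v^*)=0$. Hence $f_i(w;v^*)=f_i(w;v^*)-f_i(w^0;v^*)$, and applying $L$-Lipschitz continuity of $\sigma$ (Assumption \ref{assum2}) together with the constraints $\|w-w^0\|_F\leq\epsilon$ and the bound on the entries of $v$ yields a coordinate-wise bound of the form $\tfrac{m}{2}L\zeta\epsilon$. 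Aggregating over $n$ samples produces $\sqrt{n}\,\tfrac{m}{2}L\zeta\epsilon$ per term; since the perturbation from $f^*$ to $\bar{f}$ and $f^{\prime}$ stays within a factor of $(1+\tilde{\eta})$, the sum is at most $3\bigl(n(\tfrac{m}{2}L\zeta\epsilon)^2+nC_y^2\bigr)^{1/2}$ after collecting all three terms under a single square root (using $\|a\|+\|b\|\leq \sqrt{2}\|(a,b)\|$ type bounds).

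\textbf{Main obstacle.} The delicate part is the first factor: the two residue integrals defining $(1+\tilde{\eta})R^*$ and $\bar{R}$ live on overlapping but distinct line segments, so a naive triangle inequality loses the $\tilde{\eta}$ factor and gives only $O(\epsilon^2)$ rather than $O(\tilde{\eta}\epsilon^2)$. Preserving the linear $\tilde{\eta}$ dependence through the change of variables is essential, because downstream in Theorem \ref{thm3} this factor is what eventually cancels against $\eta\|\nabla_w\ell\|_2^2$ via $\tilde{\eta}=\eta\|\nabla_w\ell(w^*;v^*)\|_2/\epsilon$ in \eqref{diffofsquare}, producing the final $O(\epsilon^2)$ loss bound. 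The second factor is mostly routine Lipschitz bookkeeping once the mirrored-initialization identity is invoked.
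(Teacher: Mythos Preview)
Your treatment of the first factor $\|f'-\bar f\|_2$ is essentially the paper's argument: the change of variables $s=(1+\tilde\eta)t$ puts $\bar R$ on the same ray as $R^*$, and the cancellation between the two integrals is what produces the crucial linear $\tilde\eta$ factor. The paper organizes the algebra slightly differently---it writes $\bar R=(1+\tilde\eta)^2R$ with $R$ an integral over $[0,1+\tilde\eta]$ and then decomposes $(1+\tilde\eta)R^*-(1+\tilde\eta)^2R=(1+\tilde\eta)(R^*-R)-(1+\tilde\eta)\tilde\eta R$---but the mechanism and the final constant $\sqrt{20n}$ match.

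There is, however, a genuine gap in your bound on the second factor. You write that ``the bound on the entries of $v$'' yields a coordinate-wise estimate $\tfrac{m}{2}L\zeta\epsilon$ for $|f_i(w;v^*)|$. But $\zeta$ is a \emph{lower} bound on the entries of $v^*$ (the constraint is $v\geq\zeta\mathbf 1$), not an upper bound; there is no a priori ceiling on $\|v^*\|_\infty$, so a direct Lipschitz estimate on $f(\cdot;v^*)$ cannot produce a $\zeta$-dependent constant. The paper closes this gap by exploiting that $(w^*,v^*)$ is a KKT point: since $\zeta\mathbf 1\in B_{\zeta,\kappa}(v)$ is feasible, optimality in $v$ gives $\|f(w^*;v^*)-y\|_2\leq\|f(w^*;\zeta\mathbf 1)-y\|_2$, and it is $\|f(w^*;\zeta\mathbf 1)\|_2$ (not $\|f(w^*;v^*)\|_2$) that is then bounded by $\sqrt n\,\tfrac{m}{2}L\zeta\epsilon$ via the Lipschitz/mirrored-initialization argument you describe. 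The paper combines this with the descent inequality $\|\bar f-y\|_2\leq\|f^*-y\|_2$ and the expansion $\|f'-y\|_2\leq(1+\tilde\eta)\|f^*-y\|_2+\tilde\eta\|y\|_2$ to reach the stated constant. Without the KKT comparison step your route cannot recover the $\zeta$ in the final bound.
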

The proof of Lemma \ref{lemma2:estimateerror} can be seen in Appendix \ref{appendix:lemma:estimateerror}.
Now, with the help of Lemma \ref{lemma2:estimateerror}, we have:
{\small
\begin{eqnarray}
  \eta \left\|\nabla_w \ell(w^*;v^*)\right\|_2^2 
  & \overset{(\ref{diffofsquare})}{\leq}& \frac{1}{2} \left\|f^{\prime}-\bar{f}\right\|_2\left\|f^{\prime}+\bar{f}-2y\right\|_2  \\
  & \overset{(\ref{diffofsquare3})}{\leq} &\frac{1}{2}\left(\tilde{\eta} \epsilon^2   \lambda_{[0:2]} \sqrt{20n}\right) \left(\sqrt{n} C_y+3 \left(n \left(\frac{m}{2}L \zeta\epsilon \right)^2 + n C_y^2 \right)^{\frac{1}{2}} \right).
\end{eqnarray} }
Recall $\eta=\frac{\epsilon}{\|\nabla_w \ell(w^*;v^*)\|_2}\tilde{\eta}$ and $\tilde{\eta}$ is sufficiently small, we have:
\begin{equation}  \label{epsilongradient}
  \left\|\nabla_w \ell(w^*;v^*)\right\|_2 \leq   \frac{1}{2} \epsilon\left(   \lambda_{[0:2]} \sqrt{20n}\right) \left(\sqrt{n} C_y+3 \left(n \left(\frac{m}{2}L \zeta\epsilon \right)^2 + n C_y^2 \right)^{\frac{1}{2}} \right). 
\end{equation}
Since $\zeta$ can be chosen arbitrarily small, we can choose it to be smaller than $\frac{1}{\epsilon}$. That is to say, $\|\nabla_w\ell (w^*;v^*)\|_2 =O(\epsilon)$, so the claim (I) is proved. Note that to be precise, the constant on the right hand side of the above inequality  depends on $\lambda_{[0:2]}$, which is a linear function of $v$, and the latter vector may not be upper bounded. %
Nevertheless, claim (I) is just an intermediate result, and our subsequent derivation will directly use the right hand side of  \eqref{epsilongradient}, which is precise.

Now, we build the relationship between $\left\|\nabla_w \ell(w^*;v^*)\right\|_2$ and the loss function. Here, we need to eliminate the dependence of $v$ in the final result: since there is no uniform upper bound for $v$, it can potentially make  $\lambda_{[0:2]}$ grow exponentially.
To alleviate this issue, we utilize the fact that $f(x;\theta)$ is linear in $v$, so  $\lambda_{[0:2]}$ is also linear in $v$, and then we manage to remove the dependence of $v$ in our final result.
Specifically, let us define \[v_{\text{min}}^*:=\underset{j}{\min}\{v_j^* \mid j=1,\cdots, m \}, \quad  v_{\text{max}}^*:=\underset{j}{\max}\{v_j^* \mid j=1,\cdots, m \}.\] 

So we have
\begin{eqnarray}
  \left\|\nabla_w \ell(w^*;v^*)\right\|_2^2 &=& \|J(w^*;v^*)^T(y-f^*)\|_2^2  \overset{(c)}{\geq} v_{\text{min}}^{*2} \cdot \| \tilde{J}(w^*;v^*)^T(y-f^*)\|_2^2 \\
  &\geq& v_{\text{min}}^{*2} \cdot \tilde{\lambda}_{\text{min}}^{*2} \cdot \|y-f^*\|^2_2,
\end{eqnarray}
where  $\tilde{\lambda}_{\text{min}}^*$ is the smallest singular value of $\tilde{J}(w^*;v^*)$, and $\tilde{J}(w^*;v^*)$ is: 
$$\tilde{J}(w^*;v^*):=\left(\begin{array}{ccc} \sigma^{\prime}\left(w_{1}^{*T} x_{1}\right) x_{1}^T & \cdots &  \sigma^{\prime}\left(w_{m}^{*T} x_{1}\right) x_{1}^T \\ &\vdots & \\ \sigma^{\prime}\left(w_{1}^{*T} x_{n}\right) x_{n}^T & \cdots & \sigma^{\prime}\left(w_{m}^{*T} x_{n}\right) x_{n}^T\end{array}\right)\in \mathbb{R}^{n \times md}.$$
Here, $(c)$ is straightforward because $\tilde{J}(w^*;v^*)$ is just the simplified version of $J(w^*;v^*)$ by removing all the coefficient $v_j^*$;  furthermore, $\tilde{J}(w^*;v^*)$ is full row rank because (i) $J(w^*;v^*)$ is proved to be full rank in the second part of Theorem \ref{thm1} in Appendix \ref{appendix:thm2}, (ii) $v^*$ is entry-wise non-zero, so $\tilde{\lambda}_{\text{min}}^{*}$ is strictly positive.

Now, we need to remove the dependence of $v$ in the right hand side of (\ref{epsilongradient}). Similarly as before,  $\lambda_{[a:b]}$ can also be bounded by $v_{max}^*\tilde{\lambda}_{[a:b]}$, where $\tilde{\lambda}_{[a:b]}$ is equal to $\underset{i}{\max} \{\tilde{\lambda}_{i[a:b]} | i=1,\cdots,n \}$, and each $\tilde{\lambda}_{i[a:b]}$ is the maximum eigenvalue of the Hessian matrices $\tilde{H}_i(w^0+t(w^*-w^0)):=\nabla_w^2 \tilde{f}(w^0+t(w^*-w^0);,x_i)$ in the interval $t\in [a,b]$, and $\tilde{f}(w;,x_i):=\sum_{j=1}^m \sigma(x_i^T w_j)$ is the 
simplified version of $f(w;v,x_i)$ by removing all the coefficient $v_j$. In conclusion, we have
{\small
\begin{eqnarray}
  v_{\text{min}}^{*} \tilde{\lambda}_{\text{min}}^{*} \|y-f^*\|_2 &\leq& \left\|\nabla_w \ell(w^*;v^*)\right\|_2  \\
  &\leq&   
  \frac{1}{2} \epsilon\left(   \lambda_{[0:2]} \sqrt{20n}\right) \left(\sqrt{n} C_y+3 \left(n \left(\frac{m}{2}L \zeta\epsilon \right)^2 + n C_y^2 \right)^{\frac{1}{2}} \right) \\
  &\leq&
  \frac{1}{2} \epsilon\left(  v_{\text{max}}^{*} \tilde{\lambda}_{[0:2]} \sqrt{20n}\right) \left(\sqrt{n} C_y+3 \left(n \left(\frac{m}{2}L \zeta\epsilon \right)^2 + n C_y^2 \right)^{\frac{1}{2}} \right). \label{finalboundinthm3}
\end{eqnarray}
}
Rearrange and take the square on both sides, we get $ \|y-f^*\|_2^2 =  O \left(\kappa^2 \epsilon^2 \right) $, where $\kappa$ is the finite constant in the constraint $B(v)$ of problem \eqref{constrained}. So $\ell (\theta^*) =O(\epsilon^2)$, the proof of Theorem \ref{thm3} is completed.

\subsection{Proof of Lemma \ref{lemma:increaseloss}}
\label{appendix:lemma:increaseloss}

  To prove Lemma \ref{lemma:increaseloss}, we need to discuss the following cases:

  \begin{itemize}
    \item[(i)] When $f^{*T}y\geq 0$,  we prove Lemma \ref{lemma:increaseloss} by contradiction: when $\tilde{\eta}$ is sufficiently small, suppose $\| (1+\tilde{\eta})f^{*} -y\|_2^2 < \| f^{*} -y\|_2^2 $, then $(w^*,v^*)$ is not a KKT point (this case corresponds to the Figure \ref{thm3figure} (Middle)). 
    Note that in problem (\ref{constrained}), $f^{*}=\sum_{j=1}^{\frac{m}{2}} v_j^*\left(\sigma(w_j^{*T}x)-\sigma(w_{j+\frac{m}{2}}^{*T}x) \right)$ is linear in $(v_1^*, \cdots, v_{\frac{m}{2}}^*)$, so $(1+\tilde{\eta})f^{*}=f(w^*;(1+\tilde{\eta})v^*)$. 
    That is to say, $\| (1+\tilde{\eta})f^{*} -y\|_2^2 < \| f^{*} -y\|_2^2 $ implies $\| f(w^*;(1+\tilde{\eta})v^*) -y\|_2^2 < \| f^{*} -y\|_2^2$,   
    which means we can further reduce the loss by changing $v^* \rightarrow (1+\tilde{\eta})v^*$. Since $(1+\tilde{\eta})v^*$ is still feasible if $v^*$ is feasible, we have a contradiction to the assumption that $(w,^*v^*$) is a KKT point. 

   Therefore, in case (i), moving from $f^*$ to $f^{\prime}=(1+\tilde{\eta})f^*$ will not further reduce the loss (this case corresponds to the Figure \ref{thm3figure} (Right)). In other words, we always have $f^{*T}(f^*-y) \geq 0$, this property will also be used in Lemma \ref{lemma2:estimateerror}.
    \item[(ii)] When $f^{*T}y< 0$, we have 
    
     \begin{eqnarray}
       \| f^{*} -y\|_2^2 &=&  \|\Phi(w^*)v^*-y\|_2^2\\
      &=& \|\Phi(w^*)v^*\|_2^2+\| y\|_2^2 -2(\Phi(w^*)v)^{T}y \\
       &>& \|f^{*}\|_2^2+\| y\|_2^2 -2(-\Phi(w^*)v)^{T}y \\
       &=& \| \Phi(w^*)(-v^*)-y\|_2^2  \\
       &=& \|-f^*-y\|_2^2,
     \end{eqnarray}
     where 
     $$
    \Phi(w):=\left[\begin{array}{c} \sigma\left(w_{1}^{T} x_1\right),\dots,\sigma\left(w_{m}^{T} x_1\right)\\ \vdots \\  \sigma\left(w_{1}^{T} x_n\right),\dots,\sigma\left(w_{m}^{T} x_n\right) \end{array}\right]\in \mathbb{R}^{n \times m}.$$

     Therefore, changing $v^*$ to $-v^*$ will further reduce the loss function (see Figure \ref{thm3figure} (Left)). Since $-v^*$ is feasible if $v^*$ is feasible, this is a contradiction to the assumption that $(w^*,v^*)$ is a KKT point. That is to say, we always have case (i): $f^{*T}y\geq 0$.
    \end{itemize}
    In conclusion, we always have $\ell \circ f^{\prime}\geq \ell \circ f^*$, so the proof is completed.

    \subsection{Proof of Lemma \ref{lemma:estimateerror}}
    \label{appendix:lemma:estimateerror}

    Similarly with the residue term (\ref{residue}), we define $R \in \mathbb{R}^n$ with each component satisfying $[R]_i:= \int_{0}^{1+\tilde{\eta}} (w-w^0)^TH_i(w^0+t(w-w^0))(w-w^0)(1-t)d t$, we have:
    {\small
    \begin{eqnarray}
      \left\|f^{\prime}-\bar{f}\right\|_2^2& \overset{(\ref{fprime}) \& (\ref{fnew})}{=} & \left\|-(1+\tilde{\eta})R^* +\bar{R}\right\|_2^2 \\
      & =& \left\|(1+\tilde{\eta})R^* -\bar{R}\right\|_2^2 \\
      & \overset{(*)}{=}& \left\|(1+\tilde{\eta})R^* - (1+\tilde{\eta})^2 R \right\|_2^2 \\
      &=&\left\| (1+\tilde{\eta})(R^*-R)-(1+\tilde{\eta})\tilde{\eta} R \right\|_2^2\\
      &\leq& \!\!\!\!\!\!\!\!\!\!\!\!(1+\tilde{\eta})^2\sum_{i=1}^n\left(\int_{1}^{1+\tilde{\eta}}  (w-w^0)^TH_i(w^0+t(w-w^0))(w-w^0)(1-t)d t \right)^2  \nonumber\\
      &&\!\!\!\!\!\!\!\!\!\!\!\!+ (1+\tilde{\eta})^2\tilde{\eta}^2 \sum_{i=1}^n \left(\int_{0}^{1+\tilde{\eta}} (w-w^0)^TH_i(w^0+t(w-w^0))(w-w^0)(1-t)d t\right)^2  \nonumber\\
      &\leq & \!\!\!\!\!\!\!\!\!\!\!\!(1+\tilde{\eta})^2 \sum_{i=1}^n \left(\lambda_{i[1:1+\tilde{\eta}]}\epsilon^2\int_{1}^{1+\tilde{\eta}}d t \right)^2 + (1+\tilde{\eta})^2\tilde{\eta}^2 \sum_{i=1}^n\left(\lambda_{i[0:1+\tilde{\eta}]} \epsilon^2 \int_{0}^{1+\tilde{\eta}}d t \right)^2 \nonumber\\
      &=&  n(1+\tilde{\eta})^2 \tilde{\eta}^2 \lambda_{[1:1+\tilde{\eta}]}^2\epsilon^4 + n(1+\tilde{\eta})^4\tilde{\eta}^2 \lambda_{[0:1+\tilde{\eta}]}^2\epsilon^4   \\
      &=& n\tilde{\eta}^2 \epsilon^4 \left( (1+\tilde{\eta})^2  \lambda_{[1:1+\tilde{\eta}]}^2 + (1+\tilde{\eta})^4 \lambda_{[0:1+\tilde{\eta}]}^2\right) \\
      &=& n\tilde{\eta}^2 \epsilon^4 \left( 4  \lambda_{[1:1+\tilde{\eta}]}^2 + 16 \lambda_{[0:1+\tilde{\eta}]}^2\right) \\
      &=& n\tilde{\eta}^2 \epsilon^4 \left( 20 \lambda_{[0:2]}^2\right), \label{estimateerrorsquare}
    \end{eqnarray}
    }

    where the last two inequalities is because of the fact that $\tilde{\eta}\leq 1$ is sufficiently small and $\lambda_{[1:1+\tilde{\eta}]} \leq \lambda_{[0:1+\tilde{\eta}]} \leq \lambda_{[0:2]}$. (*) is due to: for $i=1, \cdots, n$: 
    {\tiny
    \begin{eqnarray}
      [\bar{R}]_i&=& \int_{0}^1\! \!\!\!\! \left(w\!\!-\!\!\eta \nabla_w\ell(w^*;v^*)\!\!-\!\!w^0\right)^T \!\!\!\!H_i\!\!\left(w^0\!\!+\!\!t(w\!\!-\!\!\eta \nabla_w\ell(w^*;v^*)\!\!-\!\!w^0)\right)\! \!\! \left(w-\eta \nabla_w\ell(w^*;v^*)-w^0\right)\!\!(1-t)d t \\
      &\overset{(\ref{gradientdirection})}{=}& (1+\tilde{\eta})^2\int_{0}^1 \left(w-w^0\right)^TH_i\left(w^0+(1+\tilde{\eta})t(w-w^0)\right) \left(w-w^0\right)(1-t)d t \\
      &=& (1+\tilde{\eta})^2\int_{0}^{1+\tilde{\eta}} \left(w-w^0\right)^TH_i\left(w^0+t(w-w^0)\right) \left(w-w^0\right)(1-t)d t \\
      &=& (1+\tilde{\eta})^2 [R]_i.
    \end{eqnarray}}
  Now, we bound $\left\|f^{\prime}+\bar{f}-2y\right\|_2$, since $(w^*,v^*)$ is a KKT point, it is proved in Lemma \ref{lemma2:increaseloss} in Appendix \ref{appendix:lemma:increaseloss} that $f^{*T}y \geq 0 $, furthermore, at $w=w^*$, the loss function at $(w^*,v^*)$ should be less or equal to all other feasible points $(w^*,v)$, including $v=\zeta {\bf 1}$, i.e.,
  \begin{eqnarray}
    \|f^*-y\|_2^2&=& \|f(w^*;v^*)-y\|_2^2 \\
    &\leq& \|f(w^*;v^*)\|^2+\|y\|_2^2 \\
    &\leq& \|f(w^*;\zeta {\bf 1})\|^2+\|y\|_2^2 \\
    &\overset{(**)}{\leq}&  n \left(\frac{m}{2}L \zeta\epsilon \right)^2 + \|y\|_2^2 \\
    &\leq&  n \left(\frac{m}{2}L \zeta\epsilon \right)^2 + nC_y^2, 
    \label{f*-y}
  \end{eqnarray}
  where the last inequality is because of Assumption \ref{assum3} (each $y_i \leq C_y$), and (**): is due to
\begin{eqnarray}
  \|f(w;\zeta{\bf 1})\|_2^2 &\overset{(\ref{pairNN})}{=}& \sum_{i=1}^n\left(\sum_{j=1}^{\frac{m}{2}} \zeta\left(\sigma(w_j^Tx_i)-\sigma(w_{j+\frac{m}{2}}^Tx_i) \right) \right)^2 \\
  & =&   \zeta^2  \sum_{i=1}^n\left(\sum_{j=1}^{\frac{m}{2}} \left(\sigma(w_j^Tx_i)-\sigma(w_{j+\frac{m}{2}}^Tx_i) \right) \right)^2 \\
  & \overset{\text{Assumption } \ref{assum2} }{\leq}&   \zeta^2 L^2  \sum_{i=1}^n\left(\sum_{j=1}^{\frac{m}{2}}(w_j-w_{j+\frac{m}{2}})^Tx_i  \right)^2 \\
  & \leq&   \zeta^2 L^2  \sum_{i=1}^n\left(\sum_{j=1}^{\frac{m}{2}}\|w_j-w_{j+\frac{m}{2}}\|_2 \|x_i\|_2  \right)^2\\
  & \overset{(***)}{=}&    \zeta^2 L^2  \sum_{i=1}^n\left(\sum_{j=1}^{\frac{m}{2}}\|w_j-w_{j+\frac{m}{2}}\|_2 \right)^2 \\
  & \leq&     n \left(\frac{m}{2}L \zeta\epsilon \right)^2,
\end{eqnarray}
where $(***):$ we assume $\|x_i\|_2\leq 1$ for $i=1,\cdots, n$. For general $\|x_i\|_2$, the difference is up to a constant.

  Recall $f^{\prime}=(1+\tilde{\eta})f^*$, we have
  \begin{eqnarray}
    \|f^\prime-y\|_2&=& \|(1+\tilde{\eta})f^*-y\|_2\\
    &=& \|f^*-y+\tilde{\eta}(f^*-y)+\tilde{\eta}y\|_2 \\
    &=& \|f^*-y\|_2+\tilde{\eta}\|f^*-y\|_2+\tilde{\eta}\|y\|_2 \\
    &\overset{(\ref{f*-y})}{\leq}& (1+\tilde{\eta}) \left(n \left(\frac{m}{2}L \zeta\epsilon \right)^2 + n C_y^2  \right)^{\frac{1}{2}} +\tilde{\eta}\|y\|_2 \\
    &\leq & (1+\tilde{\eta}) \left(n \left(\frac{m}{2}L \zeta\epsilon \right)^2 + n C_y^2 \right)^{\frac{1}{2}} +\tilde{\eta}\sqrt{n} C_y \\
    &\leq & 2 \left(n \left(\frac{m}{2}L \zeta\epsilon \right)^2 + n C_y^2 \right)^{\frac{1}{2}} +\sqrt{n} C_y,
  \end{eqnarray}

  where the last inequality is because $\tilde{\eta}$ is sufficiently small.
  Now, combining with the descent property $\|\bar{f}-y\|_2 \leq \|f^*-y\|_2$, we have
  \begin{eqnarray}
    \|f^{\prime}+\bar{f}-2y\|_2 &\leq & \|f^\prime -y\|_2+\|\bar{f}-y\|_2  \\
    &\leq &  \|f^\prime -y\|_2+\|f^*-y\|_2 \\
    &\leq &  
    3 \left(n \left(\frac{m}{2}L \zeta\epsilon \right)^2 + n C_y^2 \right)^{\frac{1}{2}} +\sqrt{n} C_y .
    \label{diffofsquare2}
  \end{eqnarray}

  We conclude the proof of Lemma \ref{lemma:estimateerror} by combining (\ref{estimateerrorsquare}) and (\ref{diffofsquare2}).

\section{Extension To Deep Networks}
\label{appendix:lemmadeep}

In this section, we discuss how to extend our analysis to deep networks. To do so, we apply the mirrored LeCun's initialization and the constrained formulation \eqref{constrained} to the last two layers and treat the output of the $(L-2)$-th layer as the input features. In the proof of Theorem  \ref{thm1}, the expressivity is guaranteed if the inputs $\{x_1,\cdots,x_n\}$ follow a {\it continuous joint distribution}, which is true under Assumption \ref{assum3}. Fortunately, for deep neural networks with $m_l\ge \frac{2n}{m_{l-1}}$ (where $m_l$ is the width of the $l$-th layer), the outputs of the $(L-2)$-th layer still follow a {\it continuous joint distribution} under Assumption \ref{assum2} and \ref{assum3}, so the expressivity can be shown using the similar technique as Theorem \ref{thm1}.
  This result is formally stated and proved in the Lemma \ref{lemma:deep} below.
\begin{lemma}
    Given a deep fully-connected neural network with $L$ layers:

  $$f(x;\theta)=w^{(L)} \sigma\left(w^{(L-1)} \ldots \sigma\left(w^{(2)} \sigma\left(w^{(1)} x\right)\right)\right),$$

  where $\sigma(\cdot): \mathbb{R} \rightarrow \mathbb{R}$ is the activation function, $w^{(l)}\in \mathbb{R}^{m_{l} \times m_{l-1}}$ are the weights, $l=1, \ldots, L$.
 Under Assumption \ref{assum2} and \ref{assum3}, suppose $m_l \geq m_{l+1}$, for $l\leq L-3$ and $m_{L-1}m_{L-2}\geq 2n$, then at the initialization $\theta^0$ (for $l \leq L-2$, LeCun's initialization is used; for the last two layers, the mirrored LeCun's initialization is used), for the inputs $\{x_1,\cdots, x_n\}$, the outputs of the $(L-2)$-th layer follow a continuous joint distribution.
 \label{lemma:deep}
\end{lemma}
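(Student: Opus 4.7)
The plan is to proceed by induction on $l = 0, 1, \ldots, L-2$, establishing at each stage that $H^{(l)} := (h^{(l)}(x_1), \ldots, h^{(l)}(x_n)) \in \mathbb{R}^{n \times m_l}$ has an absolutely continuous joint distribution on $\mathbb{R}^{n m_l}$. The base case $l=0$ is immediate from Assumption \ref{assum3}: $H^{(0)} = (x_1, \ldots, x_n)$ is a product of $n$ independent continuous distributions on $\mathbb{R}^d$. The inductive step will view $H^{(l)} = \sigma\bigl(H^{(l-1)}(w^{(l)})^T\bigr)$ as a real-analytic map $\Psi_l$ of its arguments $(H^{(l-1)}, w^{(l)})$, which have a jointly continuous distribution by the inductive hypothesis, the continuity of the Gaussian $w^{(l)}$, and mutual independence.

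To conclude continuity of the pushforward under $\Psi_l$ on $\mathbb{R}^{n m_l}$, by a real-analytic change-of-variables argument it suffices to show the Jacobian of $\Psi_l$ has row rank $n m_l$ at almost every point. Since all entries of this Jacobian are real-analytic in $(H^{(l-1)}, w^{(l)})$, Lemma \ref{jacobian:analytic} reduces the task to exhibiting a \emph{single} point where some $n m_l \times n m_l$ minor is nonzero. I would build such a witness in the spirit of Lemma \ref{fullrank}: pick $X$ with a sparse block-structured form and cascade specific choices of the intermediate weights so that the contributions of the $n$ samples to $H^{(l)}$ decouple into $n$ independent blocks, whose linear independence can be checked entry-by-entry using the non-vanishing of $\sigma'$ (from Assumption \ref{assum2}) together with the feature-matrix argument of Lemma \ref{jacobian:fullrank}. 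The width conditions are used exactly here: $m_{l-1} \geq m_l$ for $l \geq 2$ supplies enough free directions in each block, and the boundary count $m_{L-1} m_{L-2} \geq 2n$ matches the terminal dimension requirement.

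The main obstacle is the witness construction for layers where the incoming width is smaller than $n$---in particular the first layer, since there is no hypothesis forcing $d \geq m_1$. In that regime, varying $w^{(l)}$ alone with $H^{(l-1)}$ fixed cannot deliver a rank-$n m_l$ Jacobian, so one must simultaneously vary earlier parameters and $X$ to supply the missing directions. Here the essential use of Assumption \ref{assum2}---analyticity of $\sigma$ together with the infinitely-many-non-zero Taylor coefficients clause---is to guarantee that the cascaded nonlinear map genuinely spans $\mathbb{R}^{n m_l}$ and is not trapped inside the low-dimensional variety that the corresponding linear composition would produce. Making this rigorous requires propagating the sparse-data construction of Lemma \ref{fullrank} through the Taylor expansion of $\sigma$ layer-by-layer, and this is where I expect most of the technical effort to go.
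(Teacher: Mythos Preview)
Your inductive plan and the change-of-variables reduction to an almost-everywhere full-row-rank Jacobian is exactly the paper's approach: they package the pushforward step as a separate lemma (Lemma~\ref{lemma:continuous}) and then invoke it layer by layer, claiming the rank condition ``using the same technique as Theorem~\ref{thm1}.'' So the skeleton is right.

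Where you go off track is the final ``main obstacle'' paragraph. Two points:

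First, you have misread the width hypothesis. The assumption $m_l \geq m_{l+1}$ for $l \leq L-3$ includes $l=0$, so $d = m_0 \geq m_1$; there is no missing hypothesis at the first layer. (The terminal condition $m_{L-1} m_{L-2} \geq 2n$ plays no role in this lemma at all---it is only used afterwards when Theorem~\ref{thm1} is applied to the last two layers.)

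Second, and more importantly, the full-rank witness is far simpler than you fear and does not require cascading the Taylor expansion of $\sigma$ through the layers. Look at the Jacobian of $\Psi_l$ with respect to $H^{(l-1)}$ \emph{alone}: since sample $i$'s output depends only on sample $i$'s input, this sub-Jacobian is block-diagonal with $n$ blocks of the form $\operatorname{diag}\bigl(\sigma'(w^{(l)} h^{(l-1)}_i)\bigr)\, w^{(l)} \in \mathbb{R}^{m_l \times m_{l-1}}$. Under $m_{l-1} \geq m_l$, a Gaussian $w^{(l)}$ has full row rank almost surely, and since $\sigma'$ is analytic and not identically zero its zero set has measure zero, so the diagonal factor is invertible almost everywhere. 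Hence every block has row rank $m_l$, the block-diagonal sub-Jacobian already has row rank $n m_l$, and the full Jacobian of $\Psi_l$ has full row rank almost everywhere---no elaborate Lemma~\ref{fullrank}-style construction is needed. The worry about ``incoming width smaller than $n$'' is a red herring: you need $m_{l-1} \geq m_l$, not $m_{l-1} \geq n$.
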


To prove Lemma \ref{lemma:deep}, we first prove the following lemma:
\begin{lemma}
Suppose that  $\psi: \mathbb{R}^{k_1}\rightarrow \mathbb{R}^{k_2}$ is a analytic mapping and for almost every $u \in \mathbb{R}^{k_1}$, the Jacobian matrix  $J(u)$ of $\psi$ w.r.t. $u$ is of full row rank.
If $u\in \mathbb{R}^{k_1}$ follows a continuous distribution and $k_1\geq k_2$, then $\psi(u)$ also follows a continuous distribution.
\label{lemma:continuous}
\end{lemma}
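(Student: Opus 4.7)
The plan is to show the stronger fact that the pushforward $\psi_{*}\mu$, where $\mu$ is the law of $u$, is absolutely continuous with respect to Lebesgue measure on $\mathbb{R}^{k_2}$ (which in particular gives a continuous distribution). Since $\mu$ is absolutely continuous on $\mathbb{R}^{k_1}$, it suffices to verify that for every Lebesgue-null set $A\subset \mathbb{R}^{k_2}$, the preimage $\psi^{-1}(A)\subset \mathbb{R}^{k_1}$ has Lebesgue measure zero.

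Let $E=\{u\in\mathbb{R}^{k_1}: J(u)\text{ is not of full row rank}\}$. By hypothesis $E$ has Lebesgue measure zero, so $\psi^{-1}(A)\cap E$ contributes nothing, and it remains to control $\psi^{-1}(A)\cap U$ where $U:=\mathbb{R}^{k_1}\setminus E$ is open. On $U$ the map $\psi$ is a $C^{\infty}$ submersion, so I would invoke the local submersion (constant rank) theorem: for each $u_0\in U$ there exist an open neighborhood $V_{u_0}\subset U$ and diffeomorphisms $\varphi$ on the source and $\phi$ on the target such that $\phi\circ\psi\circ\varphi^{-1}$ is the canonical projection $\pi(x_1,\dots,x_{k_1})=(x_1,\dots,x_{k_2})$. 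Under $\pi$, the preimage of a null set in $\mathbb{R}^{k_2}$ is the product of that null set with $\mathbb{R}^{k_1-k_2}$, which has Lebesgue measure zero by Fubini's theorem (using $k_1\geq k_2$); since $C^{1}$ diffeomorphisms send null sets to null sets, $\psi^{-1}(A)\cap V_{u_0}$ is null. Second countability of $\mathbb{R}^{k_1}$ lets me extract a countable subcover $\{V_{u_i}\}_{i\geq 1}$ of $U$, and then
\[
\psi^{-1}(A)\cap U\ \subset\ \bigcup_{i\geq 1}\bigl(\psi^{-1}(A)\cap V_{u_i}\bigr)
\]
is a countable union of null sets, hence null. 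Combined with $|\psi^{-1}(A)\cap E|=0$, this yields $|\psi^{-1}(A)|=0$, completing the proof.

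The only nontrivial ingredient is the local reduction of a submersion to a projection; the rest is routine measure theory. The analyticity assumption is not actually needed for this lemma in isolation ($C^{1}$ suffices), but it is natural in the context of Lemma \ref{lemma:deep}, since analyticity is what will allow Lemma \ref{jacobian:analytic} to certify the a.e.\ full-rank hypothesis at each layer during the inductive extension to deep networks. One subtle point worth flagging is the interpretation of ``continuous distribution'': the argument above establishes absolute continuity, which is the property needed to inductively re-apply the genericity arguments (e.g., Lemma \ref{fullrank}) to the outputs of intermediate layers.
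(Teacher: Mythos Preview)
Your proof is correct and follows essentially the same approach as the paper's: both show that $\psi^{-1}(A)$ is Lebesgue-null for every null $A\subset\mathbb{R}^{k_2}$ by discarding the measure-zero set where the Jacobian fails to have full row rank, applying a local normal form on the open complement, and passing to a countable cover. The only cosmetic difference is that the paper does the local step by hand via the Inverse Function Theorem on a $k_2\times k_2$ invertible submatrix (with a rational-ball cover), while you invoke the submersion/constant-rank theorem and second countability directly.
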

\begin{proof}
Let $Z_0\subseteq \mathbb{R}^{k_2}$ be a zero measure set in $\mathbb{R}^{k_2}$,
We define $S_1(Z_0)=\{u\in \mathbb{R}^{k_1}\mid \psi(u)\in Z_0, J (u)\ \text{is non-singular}\}$.
By the definition of $S_1(Z_0)$, any $u\in S_1(Z_0)$ can be written as $u=(u_1^T, u_2^T)^T$, where $u_1\in \mathbb{R}^{k_2}$ and $J(u_1; u_2)$ is invertible ( $J(u_1; u_2)$ is the $l \times l$ submatrix of $J(u)$, similarly as in Lemma \ref{fullrank}).
Then by the Inverse Function Theorem, there exists some ball $\mathcal{B}_{\epsilon(u)}(u)\subseteq \mathbb{R}^{k_1}$ (centered at $u$ with radius $\epsilon(u)$) such that for any $u'=((u')^T_1, (u')^T_2)^T \in \mathcal{B}_{\epsilon(u)}(u)\cap S_1(Z_0)$, $u_1'=\tau(u_2', z')$ , where $z'=\psi(u')\in Z_0$ and $\tau $ is a smooth mapping in a neighborhood $\tilde{Z}_0$ of $(u_2, \psi(u))$.
Then for any $u$, there exists a rational point  $\bar{u}\in \mathbb{Q}^{k_1}$ and a rational number  $\bar{\epsilon}(u)\in \mathbb{Q}$ such that $u\in N(u):=\mathcal{B}_{\bar{\epsilon}(u)}(\bar{u})\subseteq \mathcal{B}_{\epsilon(u)}(u)$.
Since the collection of all open balls with a rational center and a rational radius is a countable set, we 
let $N_1, N_2, \cdots, N_n, \cdots$ be different $N(u)$ for $u\in S_1(V_0)$.
Then $S_1(Z_0)=\cup_{i=1}^{\infty}(N_i\cap S_1(Z_0))$.

We then only need to  prove that for any $i$, $N_i\cap S_1(Z_0)$ is of measure zero in $\mathbb{R}^k$.
We define the mapping $\tilde{\tau}: \tilde{Z_0}$ as $\tilde{\tau}(u_2', z')=u'$ if $z'=\psi(u')$.
Since $Z_0$ is of measure zero in $\mathbb{R}^{k_2}$, $\tilde{Z}_0$ is measure zero in $\mathbb{R}^{k_1}$.
Then because $\tilde{\tau}$ is smooth, the image of $\tilde{\tau}$ of the set $\tilde{V}_0$ is of zero measure in $\mathbb{R}^{k_1}$ (The image of a zero mesure set under a  smooth mapping is also measure zero).
Notice that $\mathcal{B}_{\bar{\epsilon}(u)}(\bar{u})\cap S_1(Z_0)$ is contained in the image $\tilde{\tau}(\tilde{Z}_0)$, it is also of zero measure. This finishes the proof.
\end{proof}
Now we prove Lemma \ref{lemma:deep}. When $l \leq L-3$,
let $u^{(l)} \in \mathbb{R}^{m_l}$ be the output vector of the $l$-th layer.
Then $u^{(l+1)}=\psi(w^{(l+1)0}, u^{(l)}),$
where $\psi$ is analytic and $w^{(l+1)0}$ is the initial parameter in the $l$-th layer.
We now prove that $u^{(l)}$ follows a continuous distribution by induction.
When $l=0$, it is true since $u^{(0)}=(x_1,\cdots,x_n)$ is just the input data, which follows a continuous distribution under Assumption \ref{assum3}.
Now suppose $u^{(l)}$ still follows a continuous distribution, we have: (i) since $w^{(l+1)0}$ follows a continuous distribution at the mirrored initialization,  $u^{(l)}$ and $w^{(l+1)0}$ follow a continuous joint distribution; (ii) Now, viewing $(w^{(l+1)0},u^{(l)} ) \in \mathbb{R}^{m_l m_{l+1}+m_l}$ as the input of $\psi$, when $m_l \geq m_{l+1}$, $J(w^{(l+1)0},u^{(l)})$ can be proved to be full row rank w.p.1. using the same technique as Theorem \ref{thm1}. In conclusion, we have $u^{(l+1)}$ follows a continuous distribution by Lemma \ref{lemma:continuous}.
Hence, we finish the proof of Lemma \ref{lemma:deep}.

\BLACK{
We further comment a bit on extending the trainability analysis to deep nets. For this part, it requires more detailed analysis because the input feature of the penultimate layer is changing along the training (which is fixed in the shallow case), this topic will be considered as future work.
Nevertheless, our idea motivates a better training regime for deep networks, and it is numerically verified in our experiments.}

\section{Implementation Details \& More Experiments}
\label{appendix:experiment}

\BLACK{
\subsection{Guidance on \texttt{PyTorch} Implementation}
\label{appendix:pytorch}
In this section, we provide sample code to implement the our proposed method for narrow nets training, which can  achieve small empirical loss as proved in Theorem \ref{thm3}. We formally state our training regime in Algorithm \ref{algo:ourmethod}. 

\begin{algorithm} 
  \caption{Our training regime}
  
\noindent {\bf Set up hyperparameters:}

$~~$ Choose a constraint size $\epsilon$, $\zeta,\kappa$  and a step size $\eta$.

$~~$ Define $B_{\epsilon}\left(w^{0}\right):=\left\{w \mid\left\|w-w^{0}\right\|_{F} \leq \epsilon\right\}$

$~~$ Define $B_{\zeta,\kappa}(v)=\{v| v \geq \zeta {\bf 1}, \text{and for } \forall v_j, v_j^{\prime}, v_j/v_j^{\prime} \leq \kappa   \}$. 

\noindent {\bf Set up the pairwise structure of $v$:}

 $~~$ Consider $f(x;\theta)=\sum_{i=1}^{\frac{m}{2}} v_j(\sigma(w_j^Tx)-\sigma(w_{j+\frac{m}{2}}^Tx) )$.

\noindent {\bf Initialization}: 

$~~$ Initialize $\theta^0=(w^0,v^0)$  by the mirrored LeCun's initialization, as shown in Algorithm \ref{initial}

\noindent {\bf Training:} 

$~~$ Update $v$ via Projected Gradient Descent: $v^{t+1}\leftarrow \mathcal{P}_{B(v)} (v^{t}-\eta \nabla_v \ell(\theta^t) )$.

$~~$ Update $w$ via Projected Gradient Descent: $w^{t+1}\leftarrow \mathcal{P}_{B_\epsilon(w^0)}(w^{t}-\eta \nabla_w \ell(\theta^t))$.

\noindent {\bf Until the final epoch $t=T$.}
\label{algo:ourmethod}
\end{algorithm}

Algorithm \ref{algo:ourmethod} can be adopted to deep nets by viewing $w$ as the  hidden weights in the penultimate layer (or the final block of ResNet \cite{he2016deep} in our computer vision experiments), and view $x$ as the feature outputted by all the previous layers.
As shown in Algorithm \ref{algo:ourmethod}, there are several key ingredients: the pairwise structure of $v$ in \eqref{constrained}; the mirrored initialization; and the PGD algorithm. We now demonstrate their implementation in \texttt{PyTorch}. Each of them only involves several lines of code changes based on the regular training regime.
}

\paragraph{The pairwise structure of $v$ \& The Mirrored LeCun's initialization.}.

\begin{lstlisting}[language=Python, caption= ]  
import torch
import torch.optim as optim
import copy
class ShallowNet(nn.Module):
    def __init__(self, n_input, n_hidden):
        super(ShallowNet, self).__init__()
        
        self.fc1 = nn.Linear(n_input, n_hidden1,bias=False)
        self.tanh=nn.Tanh()
        self.n_hidden1=n_hidden1
        #Cut down half the width of the output layer 
        self.fc2 = nn.Linear(int(n_hidden/2), 1, bias=False)
        
        #The mirrored initialization
        hidden_half = self.fc1.weight[0:int(n_hidden/2)]
        hidden_layer=torch.cat([hidden_half,hidden_half],dim=0)
        self.fc1.weight = torch.nn.Parameter(hidden_layer)
 
    
    def forward(self, x):

       
        x=self.fc1(x)
        h=self.tanh(x)
        
        #Keep the pairwise structure of v
        h1=h[:,0:int(self.n_hidden1/2)]
        h2=h[:,int(self.n_hidden1/2):self.n_hidden1]

        x_pred1=self.fc2(h1)
        x_pred2=self.fc2(h2)
        x_pred=x_pred1-x_pred2
    
        return x_pred

\end{lstlisting}

To extend the mirrored initialization to deeper nets such as ResNet, we just need to repeat the code line [12 - 14] for every hidden layer, including the BatchNorm layer and the CNNs in the shortcut layers in the Residue block.

\paragraph{Projected Gradient Descent.} We now demonstrate how to implement PGD. First, we need to copy the parameters at the initialization, will be used for projection.

\begin{lstlisting}[language=Python, caption=] 
    # Copy the parameters at the initialization, will be used for projection
    model_initial = copy.deepcopy(model)
\end{lstlisting}

Then we do the projection after each gradient update.

\begin{lstlisting}[language=Python, caption=] 
def train(model, model_initial, epoch, x,y, optimizer):

    #standard code in regular training
    clf_criterion=nn.MSELoss()
    model.train()
    for i in range(epoch):
        optimizer.zero_grad()
        pred=model(x=x)
        loss = clf_criterion(pred,y)  # calculate current loss
        loss.backward() # calculate gradient
        optimizer.step() # update parameters
        
        # Projection
        for para,para0 in zip(model.parameters(), model_initial.parameters()):
            #project the hidden layer
            if para.data.size()[0]==model.n_hidden1:
                if torch.norm(para.data - para0.data) > eps:
                    para.data = para0.data + eps * (para.data - para0.data) / torch.norm(para.data - para0.data)
            
            #project the output layer
            if para.data.size()[0]==1:
                para.data=projectv(para.data)
        
def projectv(v):
    vmax=torch.max(v)
    argmax=torch.argmax(v)
    vmin=torch.min(v)
    argmin=torch.argmin(v)
    #print(vmax/vmin)
    #print('vmax',vmax)
    #print('vmin',vmin)
    if vmin <0.001:
        #print('projectv1')
        v[argmin]=0.001
    if vmax/vmin>1:
        v[argmax]=1*vmin
        #print('projectv2')
    return v

\end{lstlisting}

\subsection{Details on Experimental Setup}
\label{appendix:experimentsetup}

Our empirical studies are based on the synthetic dataset, MNIST, CIFAR-10, CIFAR-100 and the R-ImageNet datasets. MNIST, CIFAR-10 and CIFAR-100 are licensed under MIT.
Imagenet is licensed under Custom (non-commercial). All the experiments are run on NVIDIA V100 GPU. Here, we introduce our settings on synthetic dataset and R-ImageNet.

\begin{itemize}
  \item Synthetic datset: For $i=1\dots, 1000$, we independently generate $x_i\in \mathbb{R}^{200}$ from standard independent Gaussian, and normalize it to $\|x_i\|_2=1$, and we set the ground truth as $y_i=(1^Tx_i)^2$ for $i=1,\cdots, 1000$. In short, sample size $n=1000$, input dimension $d=200$.
  \item R-ImageNet: This is a  specifically constructed "restricted" version of ImageNet, with resolution $224 \times 224$.
  
    The vanilla ImageNet dataset spans 1000 object classes and contains 1,281,167 training images, 50,000 validation images and 100,000 test images. In our experiments, we use a subset of ImageNet, namely Restricted-ImageNet (R-ImageNet). Similar with \cite{inkawhich2020perturbing}, we  leverage the WordNet~\cite{miller1998wordnet} hierarchical structure of the dataset such that each class in the R-ImageNet is a superclass category composed of multiple ImageNet classes, noted in Table~\ref{imagenet-info} as ``components''. For example, the ``bird'' class of R-ImageNet (both
    the train and validation parts) is the aggregation of ImageNet-1k classes: [10: `brambling', 11: `goldfinch', 12: `house finch', 13: `junco', 14: `indigo bunting'], more details can be seen in Table~\ref{imagenet-info}.  As a result, there are 20  super classes which contain a total of 190 vanilla ImageNet classes. 
\begin{table}[htb]
 \caption{Classes used in the R-ImageNet dataset. The class ranges are inclusive.}
 \label{imagenet-info}
 \centering
 \begin{threeparttable}
  \begin{tabular}{cc}
   \hline Class name & Corresponding ImageNet components \\
   \hline bird & {$[10,11,12,13,14]$} \\
   turtle & {$[33,34,35,36,37]$} \\
   lizard & {$[42,43,44,45,46]$} \\
   snake & {$[60,61,62,63,64]$} \\
   spider & {$[72,73,74,75,76]$} \\
   crab & {$[118,119,120,121,122]$} \\
   dog & {$[205,206,207,208,209]$} \\
   cat & {$[281,282,283,284,285]$} \\
   bigcat & {$[289,290,291,292,293]$} \\
   beetle & {$[302,303,304,305,306]$} \\
   butterfly & {$[322,323,324,325,326]$} \\
   monkey & {$[371,372,373,374,375]$} \\
   fish & {$[393,394,395,396,397]$} \\
   fungus & {$[992,993,994,995,996]$} \\
   musical-instrument & {$[402,420,486,546,594]$} \\
   sportsball & {$[429,430,768,805,890]$} \\
   car-truck & {$[609,656,717,734,817]$} \\
   train & {$[466,547,565,820,829]$} \\
   clothing & {$[474,617,834,841,869]$} \\
   boat & {$[403,510,554,625,628]$} \\
   \hline
  \end{tabular}
 \end{threeparttable}
\end{table}
\end{itemize}

In each dataset, the neural network architectures are chosen as follows, all of the following cases satisfy $m\geq \frac{2n}{d}$ or $m_{L-1} \geq \frac{2n}{m_{L-2}}$, where $m_l$ is the width of the $l$-th layer.
\begin{itemize}
  \item Synthetic dataset: we use 1-hidden-layer neural networks with Tanh activation (except for the last layer, where the output dimension equals 1 and no Tanh applied). We study different widths of the hidden layer among $m=20,40,80,100,200,400,800,100,1200$.  All of these cases satisfy $m\geq \frac{2n}{d}$. 
  \item MNIST: we use 2-hidden-layer neural networks with ReLU activation (except for the last layer, where the output dimension equals the number of classes and no activation applied). The input dimension $d=784$, the width of the 1st layer is fixed with $m_1=784$ and we study different widths of the 2nd hidden layer among $m_2=64,128,256,512,784,1024$. All of these cases satisfy $m_{L-1} \geq \frac{2n}{m_{L-2}}$, where $m_l$ is the width of the $l$-th layer.
  \item CIFAR-10, CIFAR-100 and R-ImageNet: we use ResNet-18 and we try different number of channels in the 4th block (the i.e., the final block) among $m=64,128,256,512$ (for regular ResNet-18, the default number of channels in the 4st block should be 512). All of these cases satisfy $m_{L-1} \geq \frac{2n}{m_{L-2}}$, where $m_l$ is the width of the $l$-th layer.
\end{itemize}

In each dataset, the setup for algorithms are as follows: as for training regime, we apply the mirrored LeCun's initialization for all the neural network structures mentioned above, and for regular training, we use the regular LeCun's initialization. We use square loss for the synthetic dataset,  and multi-class cross entropy loss is used for the rest of the cases. During training, CIFAR-10, CIFAR-100 images are padded with 4 pixels of zeros on all sides, then randomly flipped (horizontally) and cropped. R-ImageNet images are randomly cropped during training and center-cropped during testing. Global mean and standard deviation are computed on all the training pixels and applied to normalize the inputs on each dataset. As it is required in problem \eqref{constrained}, in our training regime, the optimization variable for and the output layer is cut off to half, i.e.,  $v=(v_1,\cdots,v_\frac{m}{2})$, the other half is always $-v$; as for the hyperparameters of $B(v)$, we set $\zeta=0.001$ and $\kappa=1$. After each iteration, relevant parameters will projected in to their feasible sets. 
In addition to the general setup above, more customized hyperparameters are listed as follows:

\begin{itemize}
  \item Synthetic dataset: For both our training regime and the regular training regime, $B_\epsilon(w)$ constraint is added on the weights in the hidden layer with $\epsilon=0.1, 0.2, 0.4, 0.8, 1, 2, 4 ,8, 10, 1000$ ($\epsilon=1000$ is equivalent to the unconstrained updates for $w$).
  Gradient Descent with 0.9 momentum is used,  and we use different constant learning rates $lr_1, lr_2$ for hidden weights and outer weights, in all cases with different $m$ and $\epsilon$, we grid search learning rate $lr_1$=[1e-4,1e-3,5e-3,1e-2,5e-2,1e-1,5e-1], $lr_2$=[1e-4,1e-3,5e-3,1e-2,5e-2,1e-1,5e-1] and report the best results. The neural network is trained for 200000 iterations.  
  \item MNIST: For our training regime, $B_\epsilon(w)$ constraint is added on the weights in the 2nd layer with $\epsilon=0.1, 0.2, 0.4, 0.8, 1, 2, 4 ,8, 10, 1000$ ($\epsilon=1000$ is equivalent to the unconstrained updates for $w$). In each case, we either use Adam with 0.001 initial learning rate and 1e-4 weight decay, or Stochastic Gradient Descent (SGD) with 0.01 initial learning rate, 0.9 momentum and 5e-4 weight decay, and we report the best results.  For both training regime and the regular training regime, we use cosine annealing learning rate scheduling \cite{loshchilov2016sgdr} with $T_{\max}$=number of epochs , and the neural network is trained for 200 epochs and batch size of 64 is used.

  \item CIFAR-10, CIFAR-100: For our training regime , $B_\epsilon(w^0)$ constraint is added on the 4-th block with different constraint size among $\epsilon=0.1, 0.2, 0.4, 0.8, 1, 2, 4 ,8, 10, 1000$ ($\epsilon=1000$ is equivalent to the unconstrained updates for $w$). For both our training regime and the regular training regime, SGD with 0.1 initial learning rate, 0.9 momentum and 5e-4 weight decay is used, and we use cosine annealing learning rate scheduling with $T_{max}$=number of epochs, and the neural network is trained for 600 epochs and batch size of 128 is used. 
   
  \item R-ImageNet: For our training regime,  $B_\epsilon(w^0)$ constraint is added on the 4th block (i.e., the final block) with different constraint size among $\epsilon=0.01, 0.1, 1, 1000$ ($\epsilon=1000$ is equivalent to the unconstrained updates for $w$), and the weights in the 4th block are projected into the constraint after each mini-batch iteration. For both our training regime and the regular training regime, SGD with 0.1 initial learning rate, 0.9 momentum and 5e-4 weight decay is used, we use a stage-wise constant learning rate scheduling with a multiplicative factor of 0.1 on epoch 30, 60 and 90.
  The neural network is trained for 90 epochs and batch size of 256 is used. 
\end{itemize}

\subsection{Test Accuracy on MNIST}

Figure \ref{appendix:mnistfigure} shows the test accuracy in our training regime vs regular training regime in MNIST. With proper choice of $\epsilon$, our training regime leads to higher test accuracy.

\begin{figure}[htbp]
  \centering
      \includegraphics[width=3.2in]{ 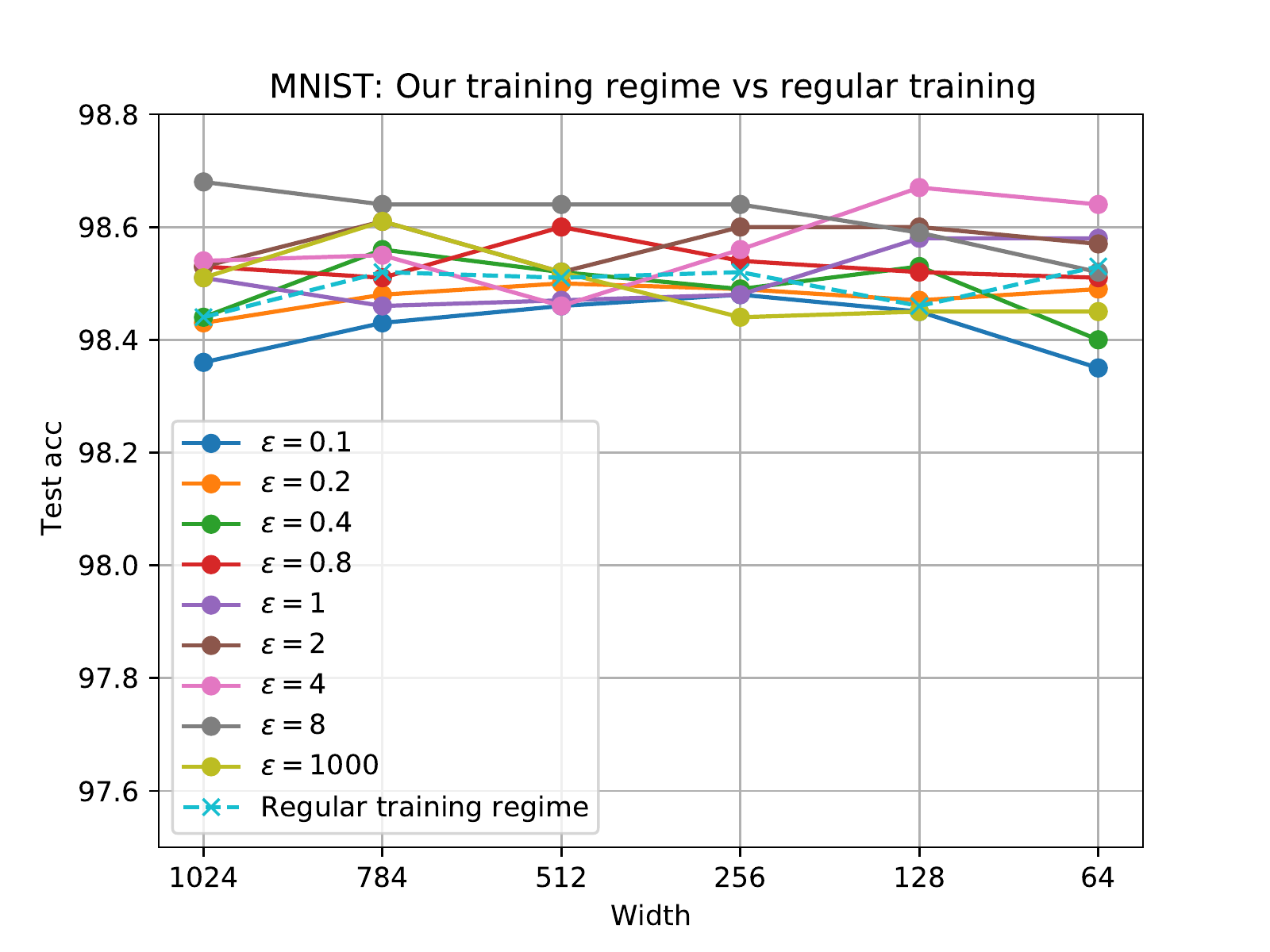}
  \caption{MNIST: test accuracy in our training regime with different $\epsilon$ vs regular training regime. \BLACK{In x-axis, width stands for width of 2nd layer (we use 2-hidden-layer neural nets here).}}
  \label{appendix:mnistfigure}
\end{figure}

\subsection{Test Accuracy on CIFAR-10 \& CIFAR-100 }

In CIFAR-10 and CIFAR-100 dataset, our training regime and regular training regime have similar performance (see Figure \ref{cifar:compare}). In several cases, our training regime leads to higher test accuracy. Here, regular training will not fail when we reduce the width of 4-th block, perhaps this is due to the strong expressivity of ResNet-18. In comparison, on a  more complicated dataset such as R-ImageNet, narrowing ResNet-18 will jeopardize the regular training (as illustrated in Section \ref{section:experiment} and the following subsection).

\begin{figure}[htbp]
  \centering
  \subfigure[CIFAR-10]{
    \begin{minipage}[t]{0.5\linewidth}
    \centering
    \includegraphics[width=2.8in]{ 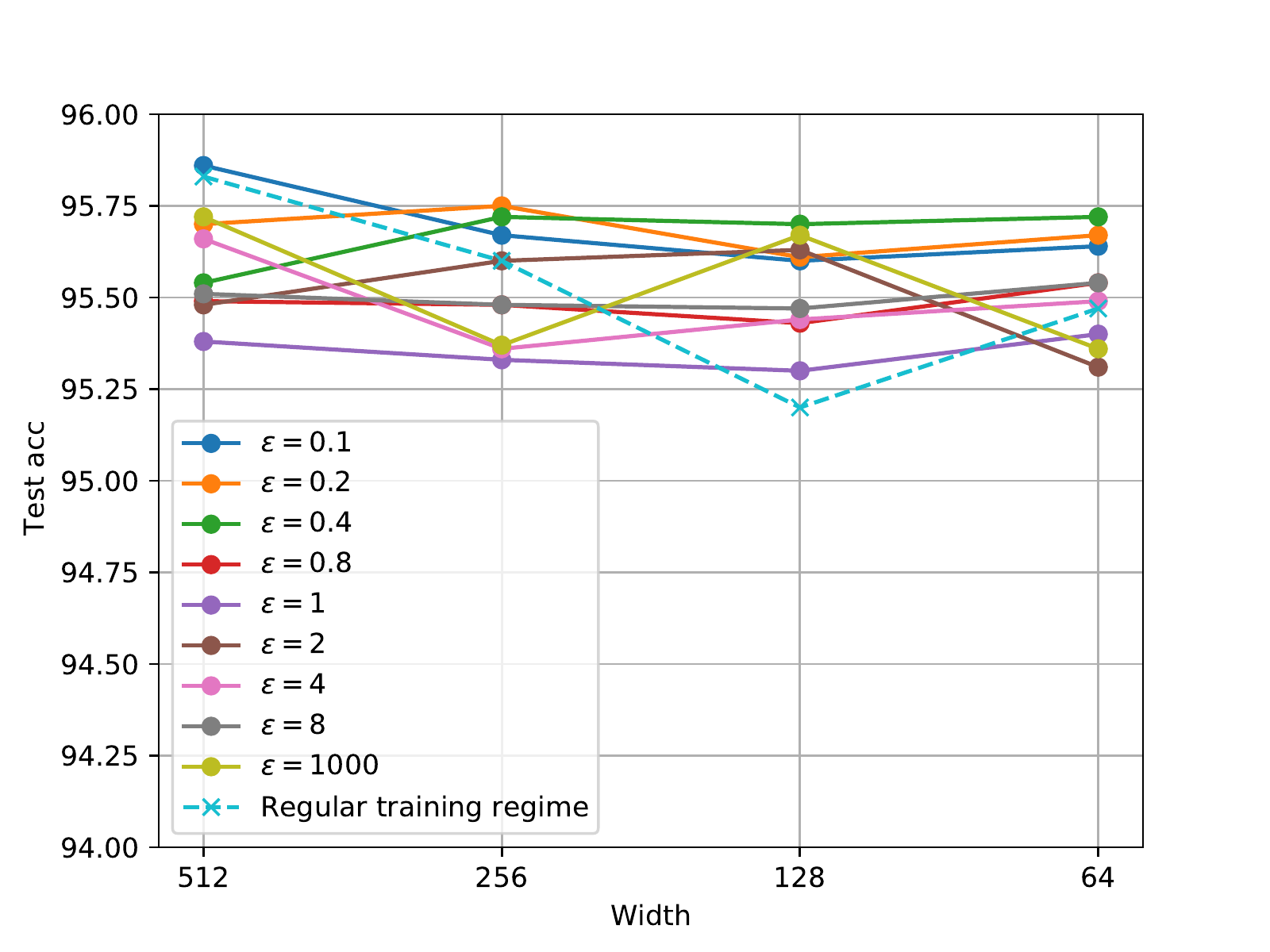}
    \end{minipage}%
    }%
    \subfigure[CIFAR-100]{
      \begin{minipage}[t]{0.5\linewidth}
      \centering
      \includegraphics[width=2.8in]{ 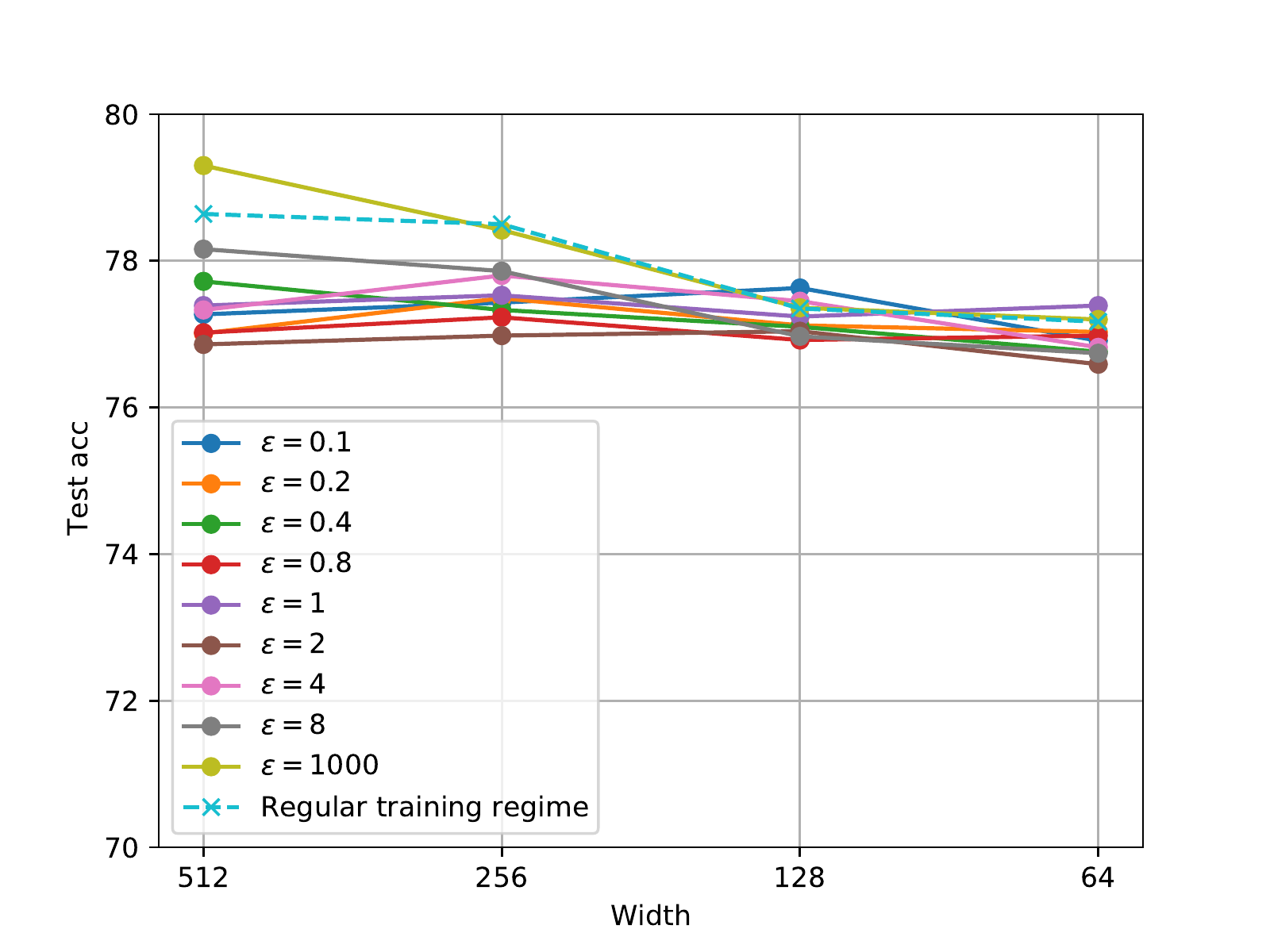}
      \end{minipage}%
      }%
  \centering
  \caption{CIFAR-10 \& CIFAR-100: test accuracy in our training regime with different $\epsilon$ vs regular training regime. \BLACK{In x-axis, width stands for the number of channels in the final CNN block of ResNet-18.}}
  \label{cifar:compare}
\end{figure}

\subsection{Test Accuracy on R-ImageNet}

In this subsection, Figure \ref{rimagenet:shaded} is the same figure as Figure \ref{rimagenet:testacc} in the full paper, but with 90\% confidence error bars (based on 5 seeds). Besides, Figure \ref{rimagenet:epoch} shows a selected result of Loss \& Accuracy per epoch in our training regime with $\epsilon=0.04$ \& regular training regime (here, we present the early-stopped results). As a result, our training regime can reduce the number of parameters in the 4-th block of ResNet-18 by up to 94\% while maintaining competitive test accuracy, especially when $\epsilon$ is small. In comparison, regular SGD does not perform well in narrow cases.

\begin{figure}[htbp]
\centering
 \includegraphics[width=0.5\textwidth]{ 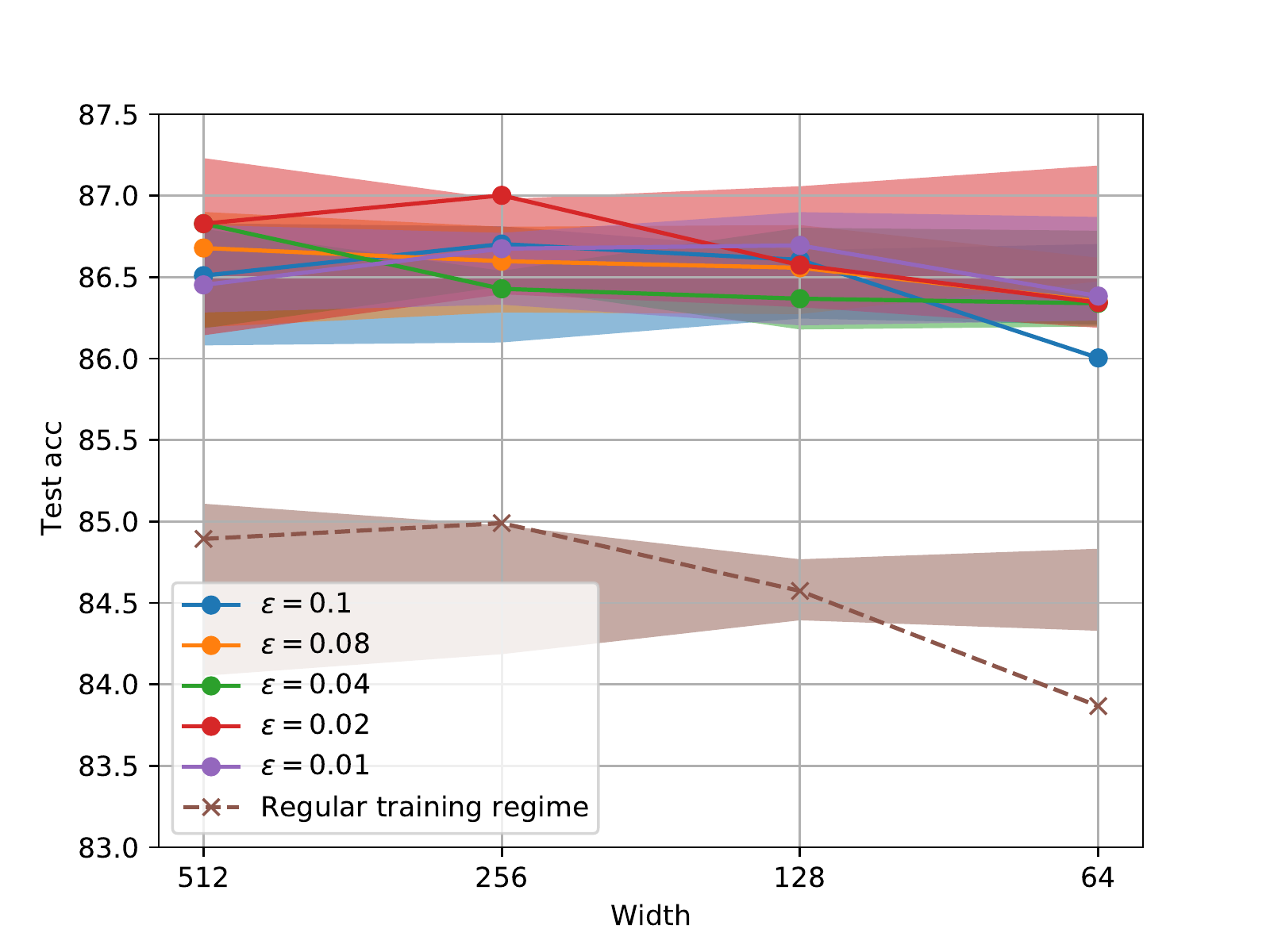} 
 \caption{R-ImageNet: test accuracy under our training regime with different $\epsilon$ vs regular training regime. \BLACK{In x-axis, width stands for the number of channels in the final CNN block of ResNet-18.} The solid \& dotted lines are averaged results over 5 seeds, the shaded areas indicates the 90\% confidence intervals. }
 \label{rimagenet:shaded} 
\end{figure}

\begin{figure}[htbp]
  \centering
  \subfigure[Loss]{
    \begin{minipage}[t]{0.5\linewidth}
    \centering
    \includegraphics[width=2.8in]{ 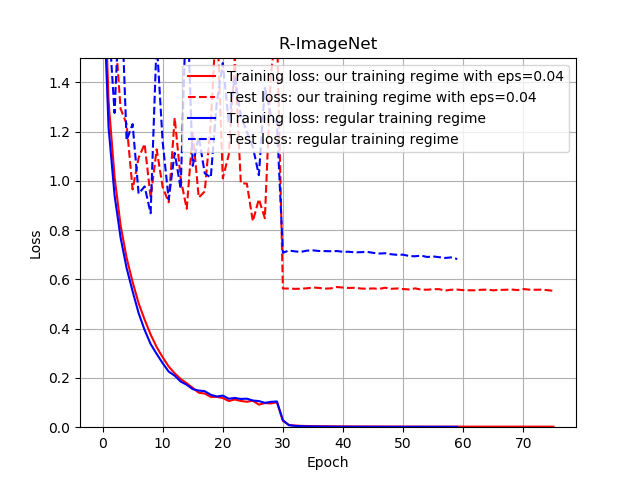}
    \end{minipage}%
    }%
    \subfigure[Accuracy]{
      \begin{minipage}[t]{0.5\linewidth}
      \centering
      \includegraphics[width=2.8in]{ 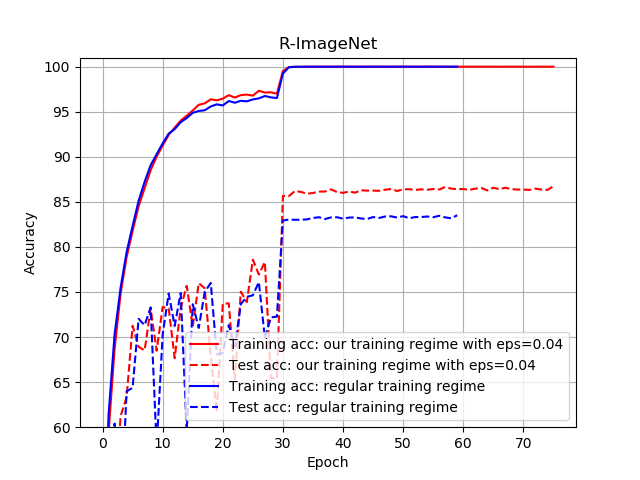}
      \end{minipage}%
      }%
  \centering
  \caption{R-ImageNet (selected): loss \& Accuracy per epoch in our training regime with $\epsilon=0.04$ \& regular training regime.}
  \label{rimagenet:epoch}
\end{figure}

\subsection{Results on Random-Labeled CIFAR-10}
\label{appendix:randomlabel}

 Since the main claim of our work is about memorization of {\it any} labels, we further explore the performance of our training regime  even when the labels are not the correct ones.
To do so, we further carry out experiments  on the random-labeled CIFAR-10, where all the labels are randomly shuffled (same as in Zhang et al. \cite{zhang2021understanding}).  We train a 1-hidden-layer network with width =1024, 2048, 4096 (smaller than $n$=50000) on the random-labeled CIFAR10 dataset. The hyperparameters in our constrained training regime \eqref{constrained} are $\epsilon=10$, $\kappa=1$ and initial learning rate $=$0.1. We use a stage-wise constant learning rate scheduling with a multiplicative factor of 0.1 on epoch 150, 225, 450. The result is shown in Table \ref{table:randomlabel}: after 1000 epochs, we can achieve more than 99\% train accuracy, almost perfectly fit the random labels. Note that even though ReLU does not fall into our analysis framework, it works a bit better than Tanh. Extending our results to ReLU activation would be our intriguing future work.

\begin{table}
  \caption{Results on the random-labeled CIFAR-10}
  \label{table:randomlabel}
  \centering
  \begin{tabular}{lllll}
    \toprule
     Width    & Epoch    & Activation &Train acc & Test acc  \\
    \midrule
    1024 & 1000 & ReLU & 0.9931 & 0.1011 \\
 2048 & 1000 & ReLU  & 0.9984 & 0.1022 \\
 4096 & 1000 & ReLU  & 0.9998 & 0.0962 \\
 1024 & 1000 & Tanh & 0.9872 & 0.0991 \\
 2048 & 1000 & Tanh & 0.9927 & 0.1024 \\
4096 & 1000 & Tanh & 0.9938 & 0.0962 \\
    \bottomrule
  \end{tabular}
\end{table}

\end{document}